\DeclarePairedDelimiter{\norm}{\lVert}{\rVert}
\DeclarePairedDelimiter{\abs}{\lvert}{\rvert}
\DeclarePairedDelimiterX{\inner}[2]{\langle}{\rangle}{#1,#2}
\def\Ckci{{\mathfrak{C}_{\mathrm{KCI}}}}
\def\Ckcihs{{\norm{\Ckci}^2_{\mathrm{HS}}}}
\def\KCI{{{\mathrm{KCI}}}}
\def\eKCIn{{\widehat{\mathrm{KCI}}_n}}
\def\eKCI{{\widehat{\mathrm{KCI}}}}
\def\KCIn{{{\mathrm{KCI}}_n}}
\def\Hzero{{\mathfrak{H}_0}}
\def\Hone{{\mathfrak{H}_1}}
\def\N{{\mathcal{N}}}
\newcommand{\SNR}{\mathrm{SNR}}
\newcommand{\eSNR}{\widehat{\mathrm{SNR}}}
\newcommand{\eSNRn}{\widehat{\mathrm{SNR}}_n}
\def\one#1{{\mathds{1}_{#1}}}
\def\Q{{\mathcal{Q}}}
\def\A{{{A}}}
\def\B{{{B}}}
\def\C{{{C}}}
\def\ea{{a}}
\def\eb{{b}}
\def\ec{{c}}
\def\ra{{r_\A}}
\def\rb{{r_\B}}
\DeclareMathOperator{\HS}{HS}
\def\spA{{\mathcal{\A}}}
\def\spB{{\mathcal{\B}}}
\def\spC{{\mathcal{\C}}}
\def\Ha{{\mathcal{H}_\spA}}
\def\Hb{{\mathcal{H}_\spB}}
\def\Hc{{\mathcal{H}_\spC}}
\def\phia{{\phi_\A}}
\def\phib{{\phi_\B}}
\def\phic{{\phi_\C}}
\def\phiac{{\phi_\A^\mathrm{c}}}
\def\phibc{{\phi_\B^\mathrm{c}}}
\def\ephiac{{\widehat{\phi}_\A^\mathrm{c}}}
\def\ephibc{{\widehat{\phi}_\B^\mathrm{c}}}
\def\muac{{\mu_{\A|\C}}}
\def\mubc{{\mu_{\B|\C}}}
\def\emuac{{\widehat{\mu}_{\A|\C}}}
\def\emubc{{\widehat{\mu}_{\B|\C}}}
\def\kC{{k_\C(\C,\C')}}
\def\kc{{k_\C(\ec,\ec')}}
\def\kcp{{k_\C}}
\def\Lac{{L^2_{\A \C}}}
\def\Lbc{{L^2_{\B \C}}}
\def\La{{L^2_{\A}}}
\def\Lb{{L^2_{\B}}}
\def\Lc{{L^2_{\C}}}
\def\HSnorm2#1{{\norm{#1}^2_{\mathrm{HS}}}}
\def\Cabc{{\mathfrak{C}_{\A\B\mid\C}}}
\def\eCabc{{\widehat{\mathfrak{C}}_{\A\B\mid\C}}}
\newcommand{\ind}{\perp\!\!\!\!\perp} 
\newcommand{\nind}{\not\!\perp\!\!\!\!\perp}
\def\Kmca{{K_\A^\mathrm{c}}}
\def\Kmcb{{K_\B^\mathrm{c}}}
\def\Kmc{{K_\C}}
\def\Kmctr{{K_\C^{\mathrm{tr}}}}
\def\eKmca{{\widehat{K}_\A^c}}
\def\eKmcb{{\widehat{K}_\B^c}}
\def\sigmac{{\ell_\C^2}}
\def\sigmacr{{\ell_\C}}
\def\hH{{\widehat{H}}}
\def\nH{{\widetilde{H}}}
\def\hh{{\hat{h}}}
\def\tr{{\mathrm{tr}}}
\def\varc{{v_\mathrm{c}}}
\def\varm{{v_\mathrm{m}}}
\def\vars{{v_\mathrm{s}}}
\def\detac{{\Delta_{\A\mid\C}}}
\def\detbc{{\Delta_{\B\mid\C}}}
\def\inftynorm2#1{{\norm{#1}^2_\infty}}
\newcommand{\myfig}[4]{
  \begin{figure}[tbp]
    \centering
    \includegraphics[width=#4\textwidth]{#1}
    \caption{#2}
    \label{fig:#3}
  \end{figure}
}
\newcommand{\myfighere}[4]{
  \begin{figure}[htbp]
    \centering
    \includegraphics[width=#4\textwidth]{#1}
    \caption{#2}
    \label{fig:#3}
  \end{figure}
}
\def\floor#1{\lfloor #1 \rfloor}
\def\1{\bm{1}}
\def\eA{{\bm{e}_\A}}
\def\eB{{\bm{e}_\B}}
\def\eC{{\bm{e}_\C}}
\def\CA{{\eA^\top \C}}
\def\CB{{\eB^\top \C}}
\def\CC{{\eC^\top \C}}
\def\chac#1{{\psi_{#1}}}
\NewDocumentCommand{\cum}{m m o}{%
  \mathcal{K}^{\IfValueT{#3}{#3}}_{#1}(#2)%
}
\DeclareMathOperator*{\E}{\mathbb{E}}
\newcommand{\R}{\mathbb{R}}
\DeclareMathOperator*{\Var}{{Var}}
\DeclareMathOperator*{\Skew}{{Skew}}
\DeclareMathOperator*{\Cov}{{Cov}}
\newcommand{\bkci}{{b_{\eKCI}}}
\newcommand{\kvar}[1][]{%
  \kappa_{\mathrm{var}}%
  \if\relax\detokenize{#1}\relax
  \else
    ^{#1}%
  \fi
}
\newcommand{\defeq}{\coloneq}
\declaretheorem[numberwithin=section]{theorem}
\declaretheorem[name=Proposition,sibling=theorem]{prop}
\newaliascnt{fakeprop}{theorem}
\crefname{fakeprop}{Proposition}{Propositions}
\Crefname{fakeprop}{Proposition}{Propositions}
\newcommand{\fakelabel}[2]{%
    \refstepcounter{#1}%
    \label{#2}%
    \addtocounter{#1}{-1}}
\title{On the Hardness of Conditional Independence Testing In Practice}
\author{%
  Zheng He \\
  UBC\\
  \texttt{zhhe@cs.ubc.ca} \\
  \And
   Roman Pogodin \\
   McGill and Mila \\
   \texttt{rmn.pogodin@gmail.com} \\
  \And
  Yazhe Li \\
  Microsoft AI\thanks{Work done at Gatsby Unit, UCL.} \\
  \texttt{yazheli@outlook.com} \\
  \And
  Namrata Deka \\
  CMU \\
  \texttt{ndeka@andrew.cmu.edu} \\
  \And
  Arthur Gretton \\
  Gatsby Unit, UCL \\
  \texttt{arthur.gretton@gmail.com} \\
  \And
  Danica J. Sutherland \\
  UBC and Amii\\
  \texttt{dsuth@cs.ubc.ca} \\
}
\newcommand{\danica}[2][noinline]{\todo[color=violet!20,#1]{Danica: #2}}
\begin{document}

\maketitle
\setcounter{footnote}{0}

\begin{abstract}
Tests of conditional independence (CI) underpin a number of important problems in machine learning and statistics, from causal discovery to evaluation of predictor fairness and out-of-distribution robustness. 
\citet{shah2018hardness} showed that, contrary to the unconditional case, no universally finite-sample valid test can ever achieve nontrivial power.
While informative, this result
(based on ``hiding'' dependence)
does not seem to explain the frequent practical failures observed with popular CI tests.
We investigate the Kernel-based Conditional Independence (KCI) test
-- of which we show the Generalized Covariance Measure underlying many recent tests is \emph{nearly} a special case --
and identify the major factors underlying its practical behavior.
We highlight the key role of errors in the conditional mean embedding estimate for the Type-I error,
while pointing out the importance of selecting an appropriate conditioning kernel (not recognized in previous work)
as being necessary for good test power
but also tending to inflate Type-I error.
\end{abstract}

\section{Introduction}

Conditional independence (CI) testing
is a fundamental task,
required for
almost any scientific hypothesis that ``controls for'' confounders;
it is moreover
a core subroutine in the standard PC algorithm for causal discovery and its many variants \citep{pc}.
Further recent major machine learning-specific applications include
checking or enforcing
the fairness of a predictor or representation with equalized odds \citep{eq-odds},
and relatedly for a predictor's domain invariance, particularly in ``anticausal'' settings \citep[e.g.][]{lu2021invariant}.

When the conditioning variable takes on a small number of discrete values,
the problem is simple to reduce to that of unconditional independence testing,
for which there are many good methods:
for instance, many based on the Hilbert-Schmidt Independence Criterion (HSIC) \citep{hsic,GreFukTeoSonetal08}.
When the conditioning variable is continuous,
however,
the situation is much more challenging:
when testing whether $\A \ind \B \mid \C$
based on samples for a continuously distributed $\C$,\footnote{We will always use $\A \ind \B \mid \C$, since papers in this area use both $X \ind Y \mid Z$ and $X \ind Z \mid Y$.}
we will only observe one $(\A, \B)$ pair for each value of $\C$,
and so we must make some form of assumption on the smoothness of the conditional distribution $(\A, \B) \mid \C = \ec$ as a function of $\ec$.
\citet{shah2018hardness} proved that doing so in total generality is impossible.
Their lower bound,
however,
is an adversarial construction of a particular distribution
(discussed in \cref{sec:testing})
which does not seem especially informative
as to the widespread failures of CI tests in practical settings.
Since the importance of the task means that,
despite its impossibility in general,
we still want to pursue CI testing,
we must consider particular types of tests used in practice
and when, and why, they fail.

There are a few major categories of techniques.
One is the Kernel-based Conditional Independence (KCI) technique introduced by \citet{zhang2011kernel}.
As a kernel method,
this technique is applicable to data of any (potentially complex and structured) form.
It has a reputation, however, of doing a poor job at controlling Type-I error: that is, it falsely identifies conditional dependence too often \citep{shah2018hardness, pogodin2024splitkci}. 
Recent extensions include CIRCE \citep{pogodin2023circe},
which is useful as a regularizer for learning $\A$
but generally yields a much worse test,
and SplitKCI \citep{pogodin2024splitkci},
which helps reduce Type-I error rates,
but is far from solving the issue.
Related approaches include \citet{scetbon22asymptotic}, who characterize conditional independence using analytic kernel embeddings evaluated at finitely many locations, and \citet{cond-mean-ind}, who study conditional mean independence—essentially the KCI statistic with a linear kernel on A.

A number of studies propose to test conditional independence by checking the covariance of residuals from regressions of $\A$ and $\B$ on $\C$ \citep[e.g.,][]{zhang2017causal, zhang2018measuring, shah2018hardness}.
We refer to this class of methods collectively as the Generalized Covariance Measure (GCM), following \citet{shah2018hardness}.
While conceptually simple, GCM captures only linear covariance between residuals and averages the dependence over $\C$, rather than evaluating the covariance conditional on specific values of $\C$.
Weighted GCM \citep{scheidegger2022weighted} generalizes the GCM by applying weights based on $\C$, allowing detection of a broader range of conditional dependencies.
As we show in \cref{sec:relation-to-gcm}, the standard GCM corresponds to a special case of KCI with simple kernel choices, while Weighted GCM can be viewed as a more flexible, though still constrained, setting of the $\C$ kernel.

Having introduced measures of conditional independence, we revisit some theoretical work on the CI testing hardness in \cref{sec:testing}, 
where in particular we show that challenges in CI testing with kernel statistics arise specifically due to challenges in estimating the {\em conditional mean embedding},
a kernel embedding of the conditional distribution that underpins the majority of such tests \citep{SonHuaSmoFuk09,grunewalder2012conditional,klebanov2020rigorous,park2020measure,li2024towards}.
In  \cref{sec:pitfalls}, we  provide a clear demonstration that
choosing an appropriate $\C$ kernel is vital to a sensitive KCI test
-- in contrast to an implicit claim by
\citet{zhang2011kernel}
and the approach taken by \citet{pogodin2023circe,pogodin2024splitkci}.
Following related work in other settings
\citep[e.g.][]{witta:power-max,liu:deep-testing,xu:hsic-d},
we suggest a method to select a $\C$ kernel
which does help achieve more powerful tests.
We observe, however, that this method can also make the problem of false rejection even more severe.

In \cref{sec:regressionErrors}, we investigate the problem of false rejections in KCI tests.
We first analyze simple yet informative special cases, which allow analytical investigation of how regression errors in estimating conditional mean embeddings induce bias in the test statistic’s moments.
These insights  motivate a more general theoretical analysis, where we derive formal bounds linking conditional mean estimation error to test validity.
Together, the results clarify the root cause of false rejections and delineate the conditions under which KCI and GCM tests remain reliable.

\section{Measuring Conditional Dependence} \label{sec:measuring}
We first show how to measure conditional dependence with kernels.
While the fundamental idea is due to \citet{zhang2011kernel},
our framing is somewhat different
in terms of the novel \cref{thm:daudin-condition}.

\paragraph{Conditional independence.}
We build on the characterization of \citet{daudin1980partial}.
To begin, we formalize the intuition
that given $\C$, $\A$ and $\B$ contain no additional information about one another:
\definition[\citealp{daudin1980partial}]
\label{def:daudin-definition}
    Random variables $\A$ and $\B$ are conditionally independent given $\C$, denoted $\A \ind \B \mid \C$,
    if for all square-integrable functions $f \in \Lac$
    and 
    $g \in \Lbc $,
    \begin{align*}
        \E[\, f(\A , \C ) \; g(\B, \C) \mid \C \, ] = \E[\, f(\A , \C)| \C \, ] \; \E[\, g(\B, \C) \mid \C \, ]  \quad \text{almost surely in } \C.
    \end{align*}
This definition is equivalent to stating that the conditional joint distribution factorizes almost surely in $\C$,
\( P_{\A,\B\mid \C} = P_{\A\mid \C}\,P_{\B\mid \C} \),
by considering functions $f$ and $g$ as indicators of events.

Building on this definition, we can derive the following equivalence for conditional independence:
\begin{theorem}\label{thm:daudin-condition}
Random variables $\A$ and $\B$ are conditionally independent given $\C$ if and only if
\begin{align}
\label{eq:cond-indep}
\E_\C\Big[ \,  w(\C) \; \E_{\A\B\mid \C} \big[ \,
(f(\A) - \E[f(\A)\mid \C] ) \,
(g(\B) - \E[g(\B)\mid \C]) \,
\mid C
\big] \, \Big] = 0 
,\end{align}
for all square-integrable functions \( f \in \La \), \(g \in \Lb\), and  \(w \in \Lc \) .
\end{theorem}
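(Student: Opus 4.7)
The plan is to handle the two directions separately, using Daudin's definition (\cref{def:daudin-definition}) as the bridge in both cases.

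For the forward direction, suppose $\A \ind \B \mid \C$. Applied to the functions $f(\A)$ and $g(\B)$ (viewed as elements of $\Lac$ and $\Lbc$ that happen not to depend on $\C$), Daudin's definition gives $\E[f(\A) g(\B) \mid \C] = \E[f(\A)\mid \C]\,\E[g(\B)\mid \C]$ almost surely. Expanding the inner centered product in \cref{eq:cond-indep} then yields exactly this difference, which is zero a.s., so the outer expectation against any $w(\C)$ vanishes. This direction is essentially a one-line calculation.

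For the backward direction, I would fix bounded measurable $f$ and $g$ (which lie in $\La$ and $\Lb$ since we are on a probability space, so $L^\infty \subseteq L^2$) and define
\begin{align*}
H_{f,g}(\C) \defeq \E\bigl[\,(f(\A)-\E[f(\A)\mid\C])(g(\B)-\E[g(\B)\mid\C]) \,\big|\, \C\,\bigr].
\end{align*}
Boundedness of $f,g$ makes $H_{f,g}$ bounded, so $w(\C) \defeq \sign(H_{f,g}(\C))$ is bounded and hence in $\Lc$. Plugging this $w$ into the hypothesis gives $\E\,\lvert H_{f,g}(\C)\rvert = 0$, so $H_{f,g}(\C) = 0$ almost surely. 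Rearranging yields $\E[f(\A)g(\B)\mid\C] = \E[f(\A)\mid\C]\,\E[g(\B)\mid\C]$ a.s.

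Finally, taking $f = \mathds{1}_{\A\in U}$ and $g = \mathds{1}_{\B\in W}$ for arbitrary Borel $U,W$ gives $P(\A\in U,\B\in W \mid \C) = P(\A\in U\mid\C)\,P(\B\in W\mid\C)$ a.s., which is the factorization $P_{\A,\B\mid\C} = P_{\A\mid\C}\,P_{\B\mid\C}$, equivalent to Daudin's definition as noted in the paper. The main subtlety—the ``hard part''—is the measurability and integrability bookkeeping in the backward step: making sure $w = \sign(H_{f,g})$ is admissible in the hypothesis, which is why I would restrict first to bounded $f,g$ (using that this restriction still suffices to recover conditional independence via indicator functions) rather than trying to work with general $L^2$ functions where $H_{f,g}$ need not be square-integrable.
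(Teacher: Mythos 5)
Your proof is correct and shares the same overall skeleton as the paper's (both directions via Daudin's definition, with the forward direction being an immediate specialization to $\C$-free $f,g$). The backward direction differs in two details. First, you choose $w = \sign(H_{f,g})$ where the paper chooses $w = \mathds{1}_{\Q}$ over all Borel $\Q \subseteq \spC$; both are bounded, hence in $\Lc$, and both yield $H_{f,g}(\C) = 0$ a.s., so this is an interchangeable choice. Second, and more substantively, you close the argument by specializing $f,g$ to indicator functions to obtain $P_{\A,\B\mid\C} = P_{\A\mid\C}P_{\B\mid\C}$ a.s.\ and then invoking the stated equivalence of that factorization with Daudin's definition, whereas the paper instead tries to lift the identity from $\tilde f \in \La,\,\tilde g \in \Lb$ back to general $f \in \Lac,\,g \in \Lbc$ by sectioning $f(\cdot,c) \in \La$ for a.e.\ $c$. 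Your route is arguably the cleaner of the two: the paper's lift quietly requires handling the fact that the a.s.\ null set depends on $(\tilde f,\tilde g)$, whereas your reduction to indicators pushes that bookkeeping into the standard (and already-cited) equivalence between the factorization of conditional laws and Daudin's condition. One small remark: your worry that $H_{f,g}$ might fail to be square-integrable for general $L^2$ $f,g$ is not actually an obstacle here, since Cauchy--Schwarz gives $H_{f,g} \in L^1(P_\C)$ and $w = \sign(H_{f,g})$ is bounded regardless, so $\E[w(\C)H_{f,g}(\C)]$ is always defined; the restriction to bounded $f,g$ is harmless but not strictly needed for that step.
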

This result, proved in \cref{sec:decom-proof}, extends the characterization of \citet{daudin1980partial}
to a particularly interpretable form:
does any residual dependence between \(\A\) and \(\B\) remain after accounting for \(\C\)?
The weighting function \(w(\C)\)
allows emphasizing specific regions of the support of \(\C\).
Under $\A \ind \B \mid \C$,
the conditional covariances vanish $\C$-almost surely;
otherwise,
there is some nonzero conditional covariance on a $\C$-non-negligible region, which an appropriate $w(\C)$ can capture.

\paragraph{Kernel spaces.} 
Since it is infeasible to check all square-integrable functions for $f$, $g$, and $w$, we instead focus on a restricted yet sufficiently rich class of “smooth” functions. Specifically, we consider functions that lie in reproducing kernel Hilbert spaces (RKHSs), which enable characterization of conditional dependence via kernel mappings.

A reproducing kernel Hilbert space (RKHS) $\Ha$
is a particular space of functions $\spA \to \R$;
each RKHS is uniquely associated to a positive-definite kernel $k_\A : \spA \times \spA \to \R$.
This kernel can itself be written as
$k_\A(\ea, \ea') = \langle \phia(\ea), \phia(\ea') \rangle_{\Ha}$,
where $\phia : \spA \to \Ha$
is known as a feature map.
The defining \emph{reproducing property} of an RKHS
is that for all $f \in \Ha$
and $\ea \in \spA$,
$f(\ea) = \langle f, \phia(\ea) \rangle_{\Ha}$.
We always assume that any RKHS we deal with is separable;
this is guaranteed when $k$ is continuous and the underlying space $\spA$ is separable \citep[Lemma 4.33]{sc}.

\paragraph{KCI operator.}
The following operator,
introduced (in a different form) by \citet{zhang2011kernel},
will help us characterize conditional dependence;
we reframe it, following \cref{thm:daudin-condition},
to explicitly incorporate a conditional covariance structure.\footnote{\label{fn:decomp}To obtain this formulation from theirs: first, following \citet{pogodin2023circe}, remove the $\C$ to $\C$ regression of the original version \citep[also see][Appendix B.9]{mastouri2023proximalcausallearningkernels}. Second, use a product kernel on $(\B, \C)$; we are not aware of any uses that do \emph{not} do this, and our framing of \cref{thm:daudin-condition} makes the final product clearer.}
We build this up in pieces.

First, the conditional mean embeddings $\muac(\ec) \defeq \E[\phia(\A)\mid \C=\ec] \in \Ha$ and $\mubc(\ec) \defeq \E[\phib(\B)\mid \C=\ec] \in \Hb$ provide RKHS representations of the conditional distributions of $\A$ and $\B$ given $\C=\ec$. They satisfy the reproducing property $\langle \muac(\ec), f\rangle_{\Ha}  = \E[f(\A)\mid \C=\ec]$.

The conditional cross-covariance operator, $\Cabc$,
will capture the dependence structure between
$\A$ and $\B$
with $\inner{f}{\Cabc(\ec) g} = \E_{\A\B\mid \C} \big[ \,
(f(\A) - \E[f(\A)\mid \C] ) \,
(g(\B) - \E[g(\B)\mid \C]) \,
\mid C = c
\big]$:
\begin{equation} \label{eq:cond-cross-cov}
\Cabc (c) \; \defeq \; \E_{\A\B\mid \C}\Big[\, \left(\phia(\A)-\muac(\ec) \right)
\otimes \left(\phib(\B) -\mubc(\ec) \right)
\mid \C=\ec \, \Big]
\in \HS(\Hb, \Ha)
.\end{equation}
Here  $\HS(\Hb,\Ha)$ denotes the space of Hilbert–Schmidt operators from $\Hb$ to $\Ha$, and the outer product $\phia(\ea) \otimes \phib(\eb) \in \HS(\Hb,\Ha)$ is defined by $\big(\phia(\ea) \otimes \phib(\eb)\big)g = \langle \phib(\eb), g \rangle_{\Hb} \phia(\ea)$ for any $g\in \Hb$,
analogous to the outer product of vectors in finite-dimensional spaces.

The KCI operator aggregates these conditional covariances with information about the context $\C$:
\begin{equation}
\label{eq:ckci}
\Ckci \; \defeq \; \E_\C \Big[\, \Cabc (C) \, \otimes \, \phic(\C) \,\Big]
\in \HS(\Hc, \HS(\Hb, \Ha))
.\end{equation}

For any test functions $f \in \Ha$, $g \in \Hb$ and $w \in \Hc$,
the properties above give that
\begin{equation*}
    \langle f \, \otimes \, g, \Ckci \;w \rangle_{\HS(\Hb, \Ha)}
    =
    \E_\C\Big[ \,  w(\C) \; \E_{\A\B\mid \C} \big[ \,
    (f(\A) - \E[f(\A)\mid \C] ) \,
    (g(\B) - \E[g(\B)\mid \C]) \,
    \big] \, \Big] 
.\end{equation*}
If the KCI operator is itself zero, then the quantity above is zero for any choice of $f \in \Ha$, $g \in \Hb$, $w \in \Hc$.
If the KCI operator is nonzero, then there exist $f, g, w$ for which it is nonzero, implying that $\A \nind \B \mid \C$.
A natural measure of conditional dependence is
then the magnitude of $\Ckci$,
as measured by its squared Hilbert-Schmidt norm:
\begin{equation}
\KCI \; \defeq \; \Ckcihs  \;=\;
\E_{\C,\C{'}}\Bigl[
\kC \,
\bigl\langle \Cabc(\C), \Cabc(\C') \bigr\rangle_{\HS(\Hb, \Ha)}
\,\Bigr].
\label{eq:kci}
\end{equation}
The Hilbert–Schmidt norm of an operator is zero if and only if the operator itself is the zero operator.
If the RKHSs \(\Ha\), \(\Hb\) and \(\Hc\) are $L^2$-universal, meaning that they are dense in $L^2$, then $\KCI = 0$ if and only if $\A \ind \B \mid \C$.
Many standard choices, such as the Gaussian RBF kernel \(k_\A(\ea,\ea')=\exp(-\|\ea-\ea'\|^2/(2\ell^2))\), are $L^2$-universal \citep[cf.][]{universal-characteristic,i-characteristic}, where $\ell$ is the kernel lengthscale.
Large values of $\KCI$ indicate strong evidence of conditional dependence, while values near zero suggest that any apparent dependence can be adequately explained by $\C$.

\section{Connecting KCI and GCM}
\label{sec:relation-to-gcm}
\citet{shah2018hardness} proposed a Generalized Covariance Measure,
which has been the basis of many recent CI tests \citep{hochsprung2023increasing, wieck2025conditional}.
For scalar $\A$ and $\B$,
GCM uses a studentized estimate of the average covariance
between residuals,
based on any regression method from $\C$ to $\A$.
\citet{scheidegger2022weighted} extend the approach to Weighted GCM,
which adds a weighting function $w$;
assuming perfect regressions, the population quantity becomes
\begin{equation} \label{eq:wgcm-pop}
\E \bigl[\,
w(\C)
(\A - \E[\, \A\mid\C \,]) \,
(\B - \E[\, \B\mid\C \,])
\,\bigr]
\end{equation}
With $w(\ec) = 1$, this is the quantity estimated by GCM;
an appropriate choice of $w$ function
increases the sensitivity to more types of dependence.

Consider KCI
with scalar linear kernels
$\phia(\ea) = \ea$
and $\phib(\eb) = \eb$.
This makes the conditional mean embeddings
$\muac(\ec) = \E[\phia(\A) \mid \C = \ec] = \E[\A \mid \C = \ec]$,
and similarly $\mubc(\ec) = \E[\B \mid \C = \ec]$.
If we further pick
the kernel $\kc = w(\ec) w(\ec')$
so $\phic(\ec) = w(\ec)$,
then \eqref{eq:ckci} becomes identical to \eqref{eq:wgcm-pop}.
The difference is that GCM estimates the value of that expectation (normalized by the standard deviation of the estimates),
while the KCI operator estimates the absolute value.
This relationship is analogous to
that between classifier two-sample tests and maximum mean discrepancy-based tests \citep[Section 4]{liu:deep-testing},
and to that between variational mutual information-based independence tests and HSIC tests \citep{xu:hsic-d}.

Consider instead $\spA = \R^{d_\A}$, $\spB = \R^{d_{\B}}$,
with multivariate linear\footnote{The formulation as in \eqref{eq:cond-cross-cov} should have $\phia(a) \in \Ha$, i.e.\ the function $a' \mapsto \inner{a}{a'}$; here and in the following paragraph we identify $\R^d$ with its dual by instead using $a$, which yields the same KCI value and other quantities.} $\phia(a)=a$, $\phib(b)=b$ and the same $\phic = w$.
The conditional cross-covariance \eqref{eq:cond-cross-cov}
becomes the conditional cross-covariance matrix of shape $d_{\A} \times d_{\B}$,
and the KCI operator \eqref{eq:ckci}
is the $w$-weighted average of that matrix.
The multivariate (weighted) GCM again takes a studentized estimate of that matrix,
and uses the maximum absolute value as its entry.
KCI would instead use the Frobenius norm.
\danica{Multivariate case: could also call it using $k_\A$, $k_\B$ that select dimensions?}

In this way, we can see that (weighted) GCM
is almost a special case of KCI
using simple kernels,
further motivating our study of KCI in particular
(especially with linear $\phia$ and $\phib$).
The advantage of the weighted over the unweighted statistic
also foreshadows the importance of $\kc$.

\section{Revisiting the Theoretical Hardness of CI Testing} \label{sec:testing}

In null hypothesis significance testing,
we seek a test that rejects the null,
i.e.\ claims that $\A \nind \B \mid \C$ (the alternative),
with no more than $\alpha$ probability (say $0.05$)
when the null hypothesis that $\A \ind \B \mid \C$ is in fact true.
Such rejections, also known as false positives, are called \emph{Type-I errors}.
A test has (finite-sample) valid level if its Type-I error rate is at most $\alpha$,
while it has (pointwise) asymptotically valid level if for any null distribution,
the Type-I error rate is asymptotically no more than $\alpha$.
Failing to reject the null when it does not hold
is called a \emph{Type-II error};
the \emph{power} of a test is the rate at which it does reject, i.e.\ one minus the Type-II error rate.
Among valid tests,
the best one is the one with the highest power.
A test is \emph{consistent against fixed alternatives} if for any distribution where the null does not hold,
the power approaches 1 as $n \to \infty$. 

\paragraph{Impossibility result.}
\citet{shah2018hardness} showed that
if a CI test has finite-sample valid level for all Lebesgue-continuous null distributions,
then it has power no more than $\alpha$ for any Lebesgue-continuous alternative.
This is in stark contrast to the unconditional case
(or conditioning on a discrete variable),
in which case there exist finite-sample valid, consistent tests
\citep[e.g.\ permutations based on HSIC; see][]{rindt:hsic-perm-consistent}.

Intuitively,
when detecting unconditional dependence
$A \ind B$,
dependence can be missed (causing a Type-II error)
but Type-I error arises only from sampling variability.
By contrast, for $A \ind B \mid C$,
it is possible either to miss actual dependence (Type-II) or falsely detect dependence (Type-I) because subtle conditional effects of C have been overlooked.
For the latter case,
consider generating
$\C, \A', \B' \sim \mathcal N(0, 1)$,
extracting the thirtieth decimal place of $\C$ as $\C_{30} \in \{0, 1, \dots, 9\}$,
and then taking $\A = \C_{30} + \A'$, $\B = \C_{30} + \B'$.
Unless we know to look at the thirtieth decimal place of $\C$,
$\A$ and $\B$ will seem to be strongly dependent and $\C$ irrelevant;
in fact, however, all information that $\A$ carries about $\B$ is present in $\C$,
so $\A \ind \B \mid \C$.
\citet{shah2018hardness} show that for all test procedures, for any case which is truly conditionally dependent, the test has such a ``blind spot'' which is conditionally independent but ``looks the same'' to the test.

\paragraph{Interpretation with KCI.}
How do these issues manifest with KCI?
We can show, in fact,
that they arise \emph{solely} because of the estimation of the conditional mean embedding.

In practice, conditional independence testing relies on empirical estimates constructed from finite samples. Given observations $\{(\ea_i, \eb_i, \ec_i)\}_{i=1}^n$, we first define the  KCI statistic $\KCIn$ as a U-statistic based on the true conditional mean embeddings $\muac$ and $\mubc$: 
\begin{equation} \label{eq:kcin}
\KCIn = \frac{1}{n(n-1)}\sum_{1 \leq i \neq j \leq n} h_{i,j}
\qquad \text{where }
h_{i,j} = (\Kmc)_{i,j}\, (\Kmca)_{i,j}\, (\Kmcb)_{i,j},
\end{equation}
where 
$(\Kmc)_{i,j} = k_\C(\ec_i, \ec_j)$ is the kernel matrix for $\C$,
\((\Kmca)_{i,j} = 
\langle \phiac(a_i,c_i), \phiac(a_j, c_j)\rangle_\Ha\)
with \(\phiac(a_i,c_i) = \phia(a_i)-\muac(c_i)\) is the centered kernel matrix for $\A$,
and similarly \((\Kmcb)_{i,j} = 
\langle \phibc(b_i,c_i), \phibc(b_j, c_j)\rangle_\Hb\)
with \(\phibc(b_i,c_i) = \phib(b_i)-\mubc(c_i)\) is that for $\B$. 

To run a KCI-based test,
we require a \emph{test threshold} $t_n$
and reject the null whenever the KCI statistic exceeds $t_n$.
This threshold $t_n$ is selected based on an estimate of the null distribution of the statistic, which depends on
the sample size $n$, the choice of kernels, and the underlying data distribution.
\citet{zhang2011kernel} show that when $\A \ind \B \mid \C$,
$n \KCIn$ converges to a mixture of $\chi^2$ variables,\footnote{Their Proposition 5 makes a stronger claim, that $\eKCIn$ does so under fixed-regularization ridge regression estimates for the conditional means;
their argument (which was only sketched) appears to rely on a property that does not clearly always hold for this estimator, but does hold with the true $\muac$, $\mubc$. Personal communication with the authors confirmed that they agree ``there is a gap'' between the published sketch and a true proof.} so $t_n$ could in principle be estimated by fitting the parameters of this limiting distribution.
If we know the true $\muac$ and $\mubc$,
we can easily construct a finite-sample valid test with nontrivial power:

\fakelabel{fakeprop}{thm:finitevalid}
\begin{restatable}{prop}{finitevalid} \label{thm:finitevalid:restate}
Suppose
$\sup_{\ea \in \spA} k_\A(\ea, \ea) \le \kappa_\A$,
$\sup_{\eb \in \spB} k_\B(\eb, \eb) \le \kappa_\B$,
$\sup_{\ec \in \spC} k_\C(\ec, \ec) \le \kappa_\C$.
Then a test which rejects when
$\KCIn > \tilde t_n \coloneqq 32 \kappa_\A \kappa_\B \kappa_\C \sqrt{\frac{1}{n-1} \log\frac1\alpha}$
has finite-sample level at most $\alpha$.
Moreover,
if each kernel is $L^2$-universal,
the test is consistent against fixed alternatives.
\end{restatable}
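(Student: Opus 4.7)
The plan is to treat $\KCIn$ as a bounded U-statistic of order two with mean equal to $\KCI$, invoke Hoeffding's inequality for U-statistics to get a finite-sample tail bound matching the prescribed threshold, and then obtain consistency by combining the same bound with the fact that $\tilde t_n \to 0$ while $\KCI > 0$ under any alternative when the kernels are $L^2$-universal.

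For the level claim, I would first verify unbiasedness: because $h_{ij}$ uses the \emph{true} embeddings $\muac, \mubc$ and the two samples $(\ea_i,\eb_i,\ec_i),(\ea_j,\eb_j,\ec_j)$ are independent, a direct expansion of the outer-product definition \eqref{eq:ckci} gives $\E[h_{ij}] = \Ckcihs = \KCI$, so $\E[\KCIn] = \KCI$. Under $\A \ind \B\mid\C$ the conditional expectation inside \eqref{eq:cond-cross-cov} factors, and each factor vanishes conditionally, so $\Cabc(\ec) = 0$ almost surely and hence $\KCI = 0$. Next I would bound $|h_{ij}|$ uniformly: Cauchy--Schwarz gives $|\kc| \le \kappa_\C$, while $\norm{\phiac(\ea,\ec)}_\Ha \le \norm{\phia(\ea)}_\Ha + \norm{\muac(\ec)}_\Ha \le 2\sqrt{\kappa_\A}$ (using Jensen's inequality for the CME), and analogously for $\B$, yielding $|h_{ij}| \le 16\kappa_\A\kappa_\B\kappa_\C$. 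Hoeffding's inequality for a U-statistic of order two with kernel of range at most $2M$ gives $\Pr(\KCIn - \E[\KCIn] \ge t) \le \exp\bigl(-\lfloor n/2\rfloor\, t^2/(2M^2)\bigr)$; substituting $M = 16\kappa_\A\kappa_\B\kappa_\C$, using $\lfloor n/2\rfloor \ge (n-1)/2$, and inverting at level $\alpha$ produces exactly $\tilde t_n$, so $\Pr(\KCIn > \tilde t_n) \le \alpha$ under the null.

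For consistency, as noted after \eqref{eq:kci}, $L^2$-universality of $\Ha,\Hb,\Hc$ implies $\KCI > 0$ under any fixed alternative. The same Hoeffding bound applied to $\KCI - \KCIn$ shows $\Pr(\KCIn \le \KCI/2) \le \exp(-c\, n)$ for some $c = c(\kappa_\A,\kappa_\B,\kappa_\C,\KCI)>0$. Since $\tilde t_n \to 0$, for $n$ sufficiently large $\tilde t_n < \KCI/2$, so $\Pr(\KCIn > \tilde t_n) \ge \Pr(\KCIn > \KCI/2) \to 1$.

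The only delicate point is tracking constants in the U-statistic Hoeffding inequality (specifically, confirming that the $\lfloor n/2\rfloor$ factor cleanly yields the $(n-1)$ denominator in $\tilde t_n$ for both parities of $n$); the unbiasedness identity and the uniform bound on $|h_{ij}|$ are routine, and the consistency argument is essentially immediate given the level argument.
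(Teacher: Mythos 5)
Your proposal is correct and follows essentially the same route as the paper: the identical chain of bounds ($\norm{\phia(\ea)}\le\sqrt{\kappa_\A}$, Jensen for $\muac$, Cauchy--Schwarz, giving $|h_{ij}|\le 16\kappa_\A\kappa_\B\kappa_\C$), the same application of Hoeffding's $U$-statistic inequality with $\lfloor n/2\rfloor \ge (n-1)/2$, and the same one-line consistency argument via $L^2$-universality forcing $\KCI>0$ while $\tilde t_n\to 0$. The explicit unbiasedness check you include is implicit in the paper but is a welcome addition; the constants all work out as you anticipated.
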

The proof, given in \cref{sec:finite-valid-test},
is a simple consequence of Hoeffding's inequality for $U$-statistics.
Although the resulting test is highly conservative -- the correct threshold for the null distribution of $\KCIn$ should be $\Theta(1/n)$ \citep[Theorem 3]{zhang2011kernel}, much smaller than the chosen $\tilde t_n$ -- the fact that it avoids the impossibility result of \citet{shah2018hardness} indicates that the main challenge lies in estimating conditional mean embeddings.

\paragraph{Relationship to model-X.}
The recently popular ``model-X'' setting \citep{crt,cpt,Grunwald02042024}
assumes that the conditional distribution of $\A \mid \C$ is known.
This corresponds to perfect knowledge of $\muac$:
for a characteristic (or a fortiori, $L^2$-universal) $k_\A$,
$\muac$ uniquely corresponds to $\operatorname{Law}(\A \mid \C)$.
Given knowledge of both $\A \mid \C$ and $\B \mid \C$,
the KCI-based test in \cref{thm:finitevalid} would be exactly valid;
knowledge of only one is also sufficient
using CIRCE rather than KCI
\citep{pogodin2023circe,pogodin2024splitkci}.
We discuss more aspects of the relationship to other CI tests in \cref{sec:more-relations}.

\section{Pitfalls of Kernel Choices for CI Testing in Practice}\label{sec:pitfalls}

Since the true conditional mean embeddings are unknown, in practice we must use the empirical KCI statistic $\eKCIn$, which substitutes these embeddings with estimates $\emuac$ and $\emubc$. 
These embeddings are typically estimated via kernel ridge regression
\citep{grunewalder2012conditional,li2024towards}
with inputs $\ec_i$ and labels $\phia(\ea_i)$ or $\phib(\eb_i)$.
KCI requires a choice of
as many as five kernels to operate.
The original formulation \citep{zhang2011kernel} used the same kernel for both regression steps, but \citet{pogodin2023circe} noted that high-quality regressions typically demand different kernels. They therefore proposed choosing separate kernels via leave-one-out validation, introducing two additional regression kernels, denoted $k_{\C\to\A}$ and $k_{\C\to\B}$.
\citet{pogodin2023circe,pogodin2024splitkci}
 then used $k_\C$ as either $k_{\C\to\A}$ or $k_{\C\to\B}$,
implicitly assuming that a good kernel for this regression will also be a good kernel for measuring dependence.

We now demonstrate
that the aforementioned choice for $k_\C$ -- though computationally convenient --
can be a very poor choice for measuring dependence in complex situations.
For an intuitive example,
consider an engineering problem involving high-dimensional vibration data:
we wish to know if the behavior of part $\A$ is connected to that of part $\B$
given vibration data $\C$.\danica{Not sure this is really coming across clearly, or that my edits are correct\dots}{}
While predicting the behavior of either $\A$ or $\B$
depends on broad, long-term trends of $\C$,
the two parts may be coupled only by high-frequency sinusoidal resonances
which require a substantially different kernel to efficiently detect.
Using $k_{\C\to\A}$ or $k_{\C\to\B}$ then results in high Type-II error.

\begin{figure}[t]
    \centering
    \includegraphics[width=1\linewidth]{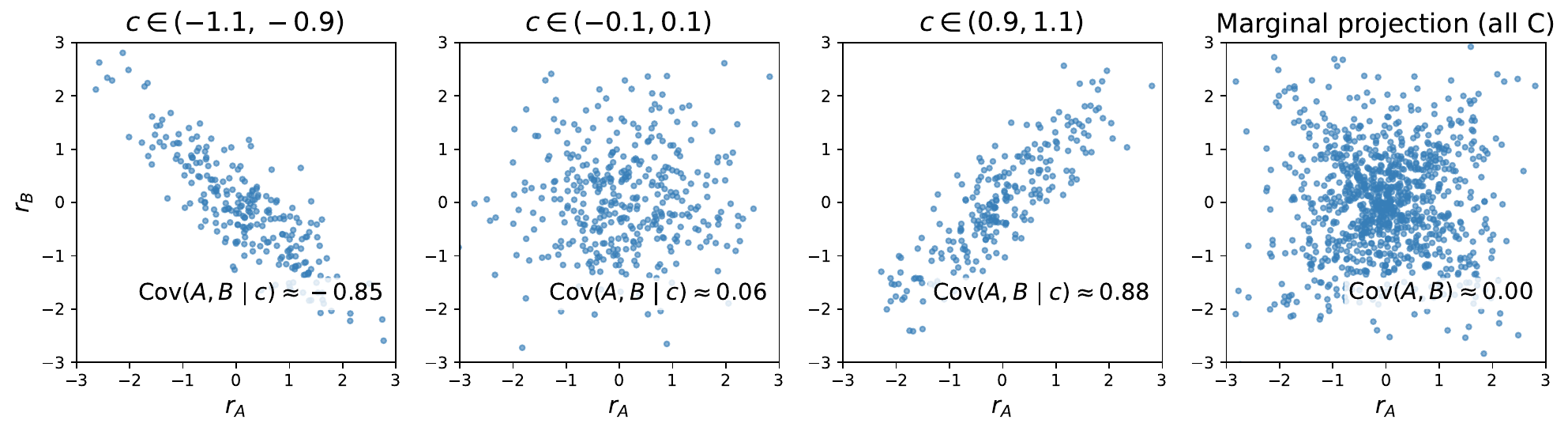}
    \caption{
        \textbf{Motivating example.}
        We simulate $(r_\A,r_\B,C)$ following problem \eqref{eq:example}, 
        where $\tau=1$ and the residual correlation 
        $\gamma(C)=\sin( C)$ introduces dependence that varies smoothly with~$C$.
        The left three panels visualize the residuals
        for different slices of $C$,
        showing that $\Cov(A,B\mid C)$ changes substantially across~$C$.
        The rightmost panel shows the residuals for all values of~$C$, 
        where the averaged conditional covariance $\E_C[\Cov(A,B\mid C)]$ is zero.
        A kernel on~$C$ with an appropriate lengthscale can  focus on local regions where dependence is strong:
        too long a lengthscale “blurs” the conditional covariance, while too short a lengthscale leaves too little data to estimate it reliably.
    }
    \label{fig:motivation}
    \vspace{-6pt}
\end{figure}

Motivated by this, consider a synthetic problem
where $\A$ and $\B$ are determined as some functions of $\C$
plus noise factors which are zero mean,
but potentially conditionally correlated given $\C$:
\begin{align*}
\C \sim \N(0,1), \quad
\A = f_\A(\C) + \tau \, \ra, \quad
\B = f_\B(\C) + \tau \, \rb,
\end{align*}
where $f_\A$, $f_\B$ are fixed functions, $\tau, \beta > 0$, and
the additive residual terms $(r_A, r_B)$ follow
\begin{align}
(\ra, \rb) \mid \C \sim \N\left(\begin{bmatrix}0 \\ 0\end{bmatrix}, \begin{bmatrix}1 & \gamma(\C) \\ \gamma(\C) & 1\end{bmatrix}\right),
\quad 
\gamma(\C)=
\begin{cases}
     0 & \text{under }\Hzero \\
    \sin(\beta \C) & \text{under } \Hone
.\end{cases}
\label{eq:example}
\end{align}
We use linear kernels for $A$ and $B$, %
aligning closely to GCM,
and a lengthscale-$\sigmacr$ Gaussian kernel
\( \kC = \exp\left(-\frac{(\C - \C')^2}{2\sigmac}\right) \)
on $\C$;
GCM corresponds to $\sigmacr = \infty$.

\cref{fig:motivation} illustrates this setup under $\Hone$:
although the conditional covariance $\Cov(A,B\mid C)=\tau^2\E[r_\A r_\B\mid C]$
changes smoothly with~$C$ and alternates in sign,
its expectation $\E_C[\Cov(A,B\mid C)]$ is zero.
Methods based on this average, like GCM, thus fail to detect dependence,
highlighting the need for kernels on~$\C$ that can localize to regions where
the conditional covariance is nonzero.

The regressions estimating $f_\A$ and $f_\B$ should use kernels $k_{\C\to\A}$ and $k_{\C\to\B}$ with lengthscales suited to those functions. In contrast, the kernel $k_\C$ used for the residuals should target the lengthscale of the covariance function $\gamma$ (i.e., $1/\beta$), which can differ substantially from the regression lengthscales.

In this setting, we can
analytically evaluate the KCI,
at least when using the true mean embeddings
$\muac$ and $\mubc$.
Details are given in \cref{sec:analytical-derivation}.
We first see, using properties of Gaussians, that
\begin{equation} \label{eq:example-kci-first}
\KCI
=\tau^4 \E_{\C,\C'}\big[ \kC  \gamma(\C)  \gamma(\C')  \big]
=\tau^4 \sqrt{\frac{\sigmac}{\sigmac + 2}} \E_{(X,X') \sim \N_\sigmacr}
\big[ \gamma(X)  \gamma(X') \big]
\end{equation}
for auxiliary variables \((X, X')
\sim
\N_\sigmacr \defeq \N\left(\begin{bmatrix}0 \\ 0\end{bmatrix}, \begin{bmatrix}1-\frac{1}{\sigmac+2} & \frac{1}{\sigmac+2}\\ \frac{1}{\sigmac+2} & 1-\frac{1}{\sigmac+2}\end{bmatrix}\right)
\).
Under the null, we of course obtain $\KCI = 0$;
under the alternative,
we can use trigonometric identities to see
\[
    \KCI
    = \frac12 \tau^4
    e^{-\beta^2}
    \sqrt{\frac{\sigmac}{\sigmac + 2}}
    \left( e^{ 2{\beta^2}/({\sigmac + 2}) } -1 \right)
.\]

When $\sigmacr \ll \sqrt 2$, the square root term arising from $\kC$ vanishes, giving zero KCI;
for $\sigmacr \gg \beta$, the other term coming from the covariance of $\gamma$ vanishes, yielding the same problem.
Consequently, for each $\beta$, the effective $\sigmacr$ lies at an intermediate value that balances these effects (see \cref{fig:kci_vs_sigmac}, left).
GCM, with $\sigmacr = \infty$, cannot detect dependence here at all.

\myfig{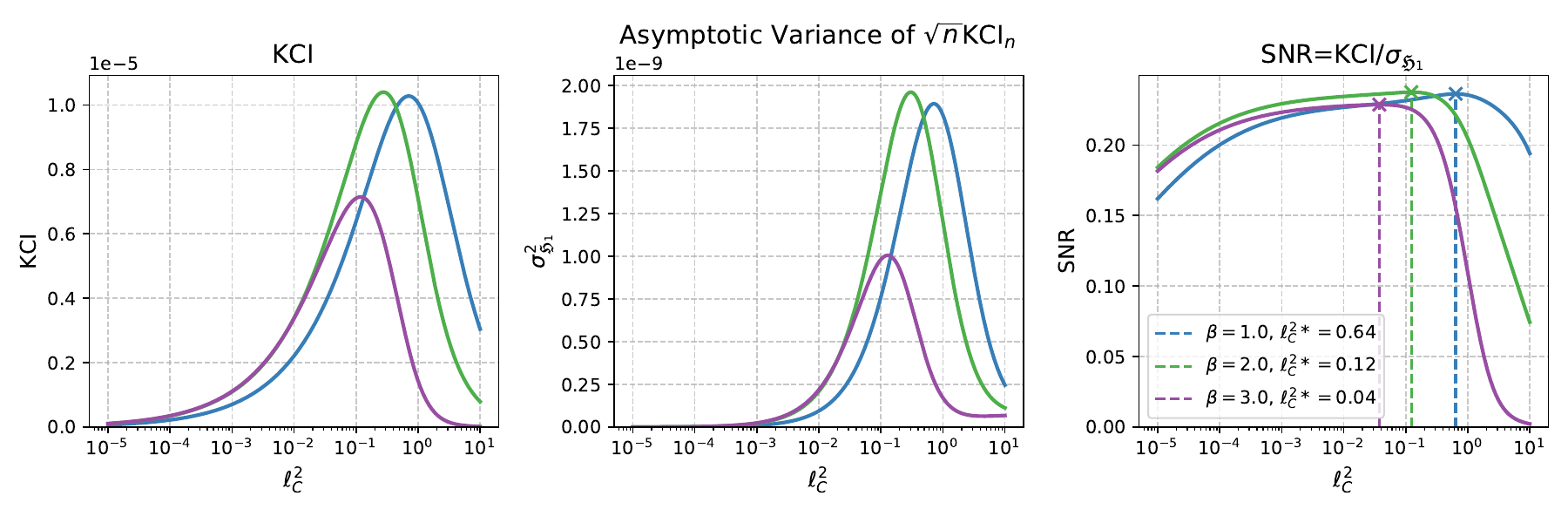}{Effect of squared kernel lengthscale $\sigmac$ on $\KCI$, asymptotic variance $\sigma_\Hone^2$, and approximate test power ($\SNR$) for different conditional covariance frequency $\beta$,  in the synthetic example~\eqref{eq:example} under the alternative. The optimal $\sigmac^*$ is selected by maximizing $\SNR$. Different $\beta$ values correspond to different \(\sigmac\) ranges yielding high approximate test power.  Here we use noise scale $\tau=0.1$.}{kci_vs_sigmac}{1}

\paragraph{Selecting a conditioning kernel.}
How can we choose the right $\C$ kernel for a given problem?
One approach,
following that taken in related settings \citep{witta:power-max,liu:deep-testing,xu:hsic-d},
is to maximize the approximate power of the test,
based on the following asymptotic result:
\begin{prop}
\label{thm:asymptotic-normality}
Under the alternative,
there is a scalar $\hat\sigma_\Hone^2 \ge 0$ so that
as $n \to \infty$,
\begin{equation}
    \sqrt{n}(\eKCIn - \eKCI) \xrightarrow{d} \N (0, \hat\sigma_\Hone^2)
.\end{equation}
\end{prop}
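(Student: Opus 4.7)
The proposition is a CLT for a plug-in degree-2 U-statistic in the non-degenerate regime. I would reduce it to the standard U-statistic CLT for the oracle version $\KCIn = \frac{1}{n(n-1)}\sum_{i\ne j} h_{i,j}$ built from the true embeddings $\muac,\mubc$, and then control the perturbation induced by replacing these embeddings with the plug-in estimates $\emuac,\emubc$.

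\textbf{Step 1 (oracle CLT).} Under the boundedness assumptions of \cref{thm:finitevalid:restate}, $h$ is a bounded symmetric kernel on the i.i.d.\ triples $Z_i=(\A_i,\B_i,\C_i)$ with $\E[h(Z_1,Z_2)]=\KCI$. Under $\Hone$, the first Hoeffding projection
\[
g_1(z) := \E[h(z,Z_2)\mid Z_1=z] - \KCI
\]
generically has finite and strictly positive variance $\sigma_1^2$, so the classical non-degenerate U-statistic CLT yields $\sqrt{n}(\KCIn - \KCI) \dto \N(0, 4\sigma_1^2)$.

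\textbf{Step 2 (plug-in perturbation).} Writing $\eKCIn - \KCIn = \frac{1}{n(n-1)}\sum_{i\ne j}(\hat h_{i,j} - h_{i,j})$, I would Taylor-expand $\hat h - h$ to first order in the embedding errors $\emuac - \muac$ and $\emubc - \mubc$. The quadratic remainder is $o_p(n^{-1/2})$ whenever the kernel ridge regression estimates satisfy $(\E_\C \|\emuac(\C)-\muac(\C)\|_\Ha^2)^{1/2}=o_p(n^{-1/4})$ (and likewise for $\B$), which holds under standard source and capacity conditions \citep{li2024towards}. Under sample splitting or cross-fitting as in \citet{pogodin2024splitkci}, the remaining linear term, conditioned on the training fold, is a centered sum with bounded variance that either vanishes asymptotically or contributes an additional independent Gaussian summand; without splitting, empirical process bounds over the regression class would be needed. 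Taking the centering $\eKCI := \lim_n \E[\eKCIn]$ (equal to $\KCI$ up to $o(n^{-1/2})$ bias under the same rate), Slutsky and joint convergence give $\sqrt{n}(\eKCIn-\eKCI) \dto \N(0,\hat\sigma_\Hone^2)$ with finite $\hat\sigma_\Hone^2\ge 0$.

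\textbf{Main obstacle.} The delicate step is the cross term between the oracle U-statistic fluctuation and the linear-in-regression-error perturbation; without sample splitting these two are entangled, and one must invoke Donsker-type complexity control over the class of ridge regression outputs. This is also why $\hat\sigma_\Hone^2$ carries a hat: it reflects not only $4\sigma_1^2$ but also the variance contributed by the conditional mean embedding estimator, and thus depends on the regression procedure used, not only on the distribution of $(\A,\B,\C)$.
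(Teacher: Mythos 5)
Your proof solves a harder (and different) problem than the one the proposition is actually stating, and in doing so it introduces assumptions the paper does not make and a centering that contradicts the paper's setup.

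The key misreading is in how you treat $\emuac$ and $\emubc$. As the paper spells out immediately after the proposition, ``the hat here refers to the use of estimated mean embeddings, not to estimation of a quantity from samples; $\eKCI$ and $\hat\sigma^2_\Hone$ depend on the problem, the kernels, and the choice of $\emuac$ and $\emubc$, but not on $n$ or any particular test sample.'' In other words, the embeddings are fitted once on an independent training split and then \emph{held fixed}; the asymptotic is in the test-set size $n$ only, conditionally on the training fold. Under this framing, $\hh_{i,j}$ is simply a fixed, bounded, symmetric degree-2 kernel on the i.i.d.\ test triples, $\eKCIn$ is a plain U-statistic, $\eKCI = \E[\eKCIn] = \norm{\mu_{\hh}}^2$ exactly (see \cref{thm:u-stat-moments}), and the result is an immediate application of the classical non-degenerate U-statistic CLT with $\hat\sigma^2_\Hone = 4\hat\nu_1 = 4\inner{\mu_{\hh}}{\mathfrak C_{\hh}\mu_{\hh}}$; the degenerate case $\hat\nu_1=0$ is covered by the allowance $\hat\sigma^2_\Hone \ge 0$. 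There is no plug-in perturbation to control and no empirical-process argument needed.

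Your Step 2 is where things genuinely break. You set $\eKCI := \lim_n \E[\eKCIn]$ and claim it equals $\KCI$ up to $o(n^{-1/2})$ under an $o_p(n^{-1/4})$ CME rate. Neither claim is part of the paper's setup, and the second one is actively incompatible with the paper's thesis: \cref{eq:eKCI-null} and the surrounding discussion are precisely about $\eKCI$ being bounded away from $\KCI$ (indeed, from $0$ under the null) whenever the regression errors $\detac,\detbc$ are persistent, and this non-vanishing gap is the source of the Type-I inflation studied in \cref{sec:regressionErrors}. Your ``main obstacle'' paragraph — Donsker control over the ridge-regression class, cross terms between oracle fluctuation and regression error — is a real concern for the statement you reconstructed, but it is not an obstacle the paper faces, because the paper never couples the regression to the test fold. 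If you condition on the training split, the entire argument collapses to one cite of the U-statistic CLT.
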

As always, the hat here refers to the use of estimated mean embeddings,
not to estimation of a quantity from samples;
$\eKCI$ and $\hat\sigma^2_\Hone$ depend on the problem, the kernels,
and the choice of $\emuac$ and $\emubc$,
but not on $n$ or any particular test sample.
Under the alternative,
we typically have
$\hat\sigma^2_\Hone > 0$,
in which case the rejection probability is approximately
\begin{align}
    \mathrm{Pr}_{\Hone}(\eKCIn > t_n) \notag
    \sim \Phi\left(\frac{\sqrt{n}\, \eKCI}{\hat\sigma_\Hone} - \frac{\sqrt{n} \, t_n}{\hat\sigma_\Hone}\right)
,\end{align}
where
$a \sim b$ means $\lim_{n \to \infty} a / b = 1$,
\(\Phi\) is the standard normal CDF, and \(t_n\) is any rejection threshold.  
We  expect $t_n = \Theta(1 / n)$,
following the null distribution of $\KCIn$;
the power is therefore dominated by the first term for reasonably large \(n\),
and the kernel yielding the most powerful test will approximately maximize
the signal-to-noise ratio
\(
    \eSNR = \eKCI / \hat\sigma_\Hone
\).

We estimate $\eSNR$ as the ratio of $\eKCIn$ to its estimated standard deviation \citep[Eq.~(5)]{liu:deep-testing}, and choose the kernel on a training split that maximizes this value.
(In independent work, \citet{wang2025practical} used a similar scheme,
but with a somewhat different estimator setup and with limited analysis;
see \cref{sec:relation-to-power}.)
We can then use the selected kernel on a testing split;
as long as the two splits are independent,
this will not break the independence assumptions of the test procedure.

For a fixed $\emuac$ and $\emubc$,
$\eSNRn$ in fact generalizes,
identifying a good kernel:
\begin{theorem}[Informal] \label{thm:snr-gen-informal}
Consider the $U$-statistic kernel
$\hh$ of $\eKCIn$;
give it parameters $\omega$,
such as the parameters of $\kcp$,
in a finite-dimensional Banach space
such that $\hh$ is smooth with respect to those parameters.
Then $\eSNRn$ converges uniformly to $\eSNR$
over bounded sets of parameters with variance bounded away from zero;
thus the maximizer of $\eSNRn$ approaches that of $\eSNR$.
\end{theorem}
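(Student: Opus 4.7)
The plan is to restrict to a compact set $\Omega$ of parameters (bounded in a finite-dimensional Banach space implies precompact) on which $\hat\sigma^2_\Hone(\omega) \ge c > 0$, then prove uniform convergence of the numerator and denominator of $\eSNRn$ separately and combine them. The elementary ratio identity
\[
|\eSNRn(\omega) - \eSNR(\omega)|
\;\le\; \frac{|\eKCIn(\omega) - \eKCI(\omega)|}{\hat\sigma_n(\omega)}
\,+\, \frac{|\eKCI(\omega)|\,|\hat\sigma_n(\omega) - \hat\sigma_\Hone(\omega)|}{\hat\sigma_n(\omega)\,\hat\sigma_\Hone(\omega)}
\]
yields uniform convergence once (i) $\eKCIn \to \eKCI$ and $\hat\sigma_n \to \hat\sigma_\Hone$ uniformly on $\Omega$, and (ii) $\hat\sigma_n(\omega) \ge c/2$ uniformly with high probability, which itself follows from the uniform convergence of $\hat\sigma_n$ plus the lower bound on $\hat\sigma_\Hone$. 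Consistency of the argmax then follows from a standard M-estimator argument (van der Vaart, Theorem~5.7): uniform convergence of a continuous objective on a compact set forces the set of maximizers of $\eSNRn$ to approach that of $\eSNR$.

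For pointwise convergence at each fixed $\omega$, I would apply the strong law for second-order $U$-statistics to $\eKCIn(\omega) = \binom{n}{2}^{-1}\sum_{i<j} \hh_\omega(Z_i,Z_j)$, yielding $\eKCIn(\omega) \to \eKCI(\omega)$ a.s.\ under the bounded-kernel conditions already used in \cref{thm:finitevalid}; the jackknife-style estimator $\hat\sigma_n^2(\omega)$ is itself a $U$-statistic of order at most four in $\hh_\omega$ and converges by the same route. To upgrade to uniform convergence, I would exploit the smoothness hypothesis: a chain-rule bound produces an envelope $\|\partial_\omega \hh_\omega(z,z')\| \le L(z,z')$ with $\E[L(Z,Z')^2] < \infty$, so $\{\hh_\omega : \omega \in \Omega\}$ is a parametric Glivenko--Cantelli class (cf.\ van der Vaart \& Wellner, Example~19.7). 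Concretely, cover $\Omega$ by a finite $\varepsilon$-net $\{\omega_1,\dots,\omega_N\}$, apply pointwise convergence at each net point via a union bound, and use
\[
\sup_{\omega \in \Omega} |\eKCIn(\omega) - \eKCI(\omega)|
\;\le\; \max_{j} |\eKCIn(\omega_j) - \eKCI(\omega_j)| \,+\, 2\varepsilon\,\bar L_n,
\]
where $\bar L_n = \binom{n}{2}^{-1}\sum_{i<j} L(Z_i,Z_j) \to \E[L(Z,Z')]$ by LLN. Sending $\varepsilon \to 0$ slowly with $n$ gives uniform convergence of $\eKCIn$; the same argument applied to the variance $U$-statistic gives uniform convergence of $\hat\sigma_n^2$ and hence, by continuity and the lower bound $c$, of $\hat\sigma_n$.

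The main obstacle I foresee is translating the informal smoothness assumption into a usable integrable envelope. The parameters $\omega$ may enter $\hh_\omega$ through kernel lengthscales (inside exponentials), through regularization in $\emuac$ and $\emubc$, or through rescalings of $\phia, \phib, \phic$; the chain rule then couples these through products and differences, and the formal statement must require moment control of the resulting derivative rather than just pointwise smoothness. For bounded standard kernels (e.g., Gaussians with lengthscales in a compact range) this is routine and the envelope can be taken deterministic and bounded, but for general parameterizations the formal version should either include a ``polynomial-envelope'' hypothesis on $\partial_\omega \hh_\omega$ or restrict to a kernel family for which such a bound can be checked directly.
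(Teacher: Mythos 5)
Your proposal is correct in outline but takes a different route from the paper. The paper (Appendix~\ref{sec:gen-bound}) adapts \citet[Theorem~6]{liu:deep-testing}: it works with a \emph{regularized} signal-to-noise ratio $\SNR_{n,\lambda} = U_n / \sqrt{\sigma_{\Hone,n}^2 + \lambda}$ with $\lambda = \ell n^{-1/3}$, which gives a deterministic lower bound $\sqrt\lambda$ on the denominator, decomposes the error into three pieces (empirical-vs-regularized-population variance, regularized-vs-unregularized population variance, and numerator error), and then plugs in the non-asymptotic uniform concentration bounds of Liu et al.\ (Propositions~15--16, covering-number arguments) to get explicit finite-sample rates with constants. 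You instead use the two-term ratio identity with the raw $\hat\sigma_n$ in the denominator, requiring a high-probability lower bound on $\hat\sigma_n$ obtained circularly from its own uniform convergence; this is workable but cleaner with the paper's $\lambda$-regularization, which you would want to adopt if you care about rates. Your upgrade from pointwise to uniform convergence via an $\eps$-net plus an integrable Lipschitz envelope $L(Z,Z')$ is essentially the same mechanism Liu et al.\ use, though the paper assumes a deterministic Lipschitz constant (a bounded envelope), which is slightly stronger than your moment condition but matches the bounded-kernel setting. Your closing observation about the difficulty of verifying a usable envelope when $\omega$ parameterizes lengthscales or ridge regularization is apt and is exactly why the paper's formal statement restricts to the fixed-regressor, $k_\C$-only case and leans on \citet{liu:deep-testing}'s Appendix~A.4 for the Lipschitz constant. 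Two small inaccuracies: the paper's variance estimator $\sigma_{\Hone,n}^2$ is a biased plug-in estimator, not a jackknife and not itself a $U$-statistic of order four; and your argmax-consistency step is stated more loosely than the paper's, which uses the uniform bound directly rather than a black-box M-estimator theorem, though both suffice.
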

This is a modification of the result of \citet[Theorem 6]{liu:deep-testing},
since for fixed $\emuac$, $\emubc$
the $U$-statistic structure is very similar;
a detailed statement and a proof are in \cref{sec:gen-bound}.

\myfig{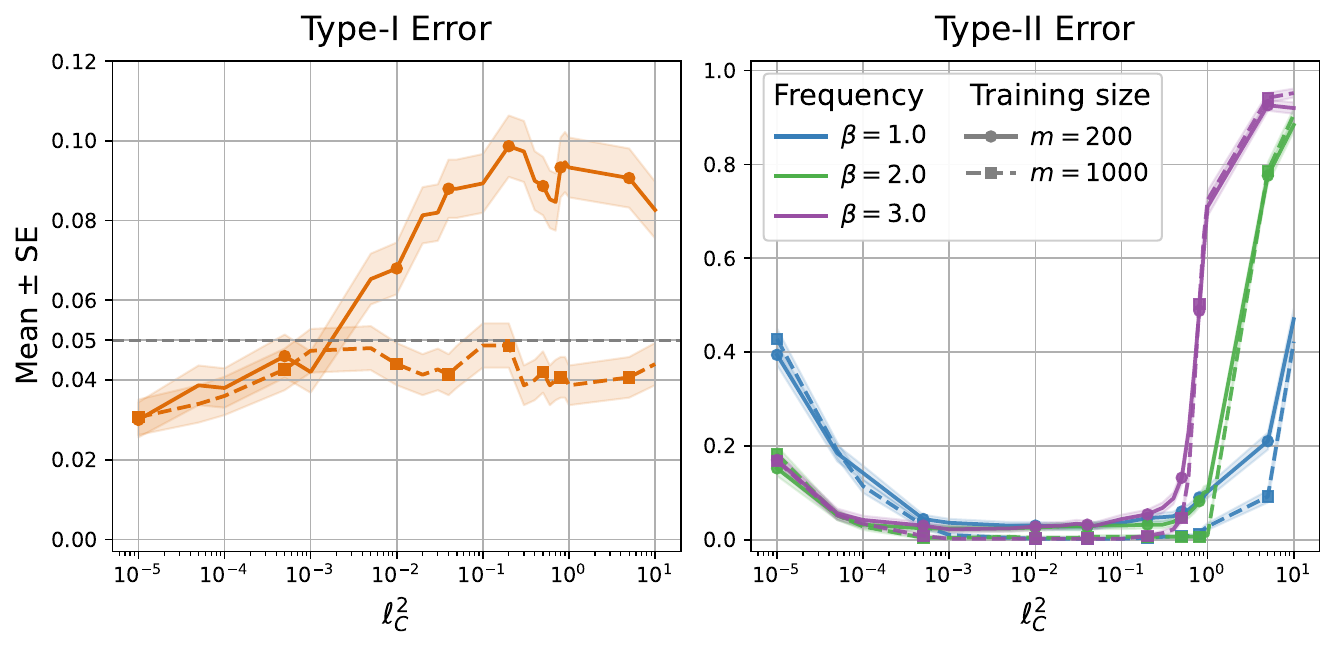}{Means and standard error (over 500 runs) of Type-I/II errors on the synthetic example \eqref{eq:example} with $f_\A=\cos, f_\B=\exp, \tau=0.1$ and different values of $\beta$, plotted against the squared kernel lengthscale $\sigmac$. The training sample size is $m=200$ (solid line with circles) or $m=1000$ (dashed line with squares); the independent test set has size 200. The significance level is set at $\alpha = 0.05$.
Details about testing procedure is in \cref{apd:testing-procedure}.
\textbf{Left}:
When $m=200$, varying $\sigmac$ noticeably affects the Type-I error, 
for certain values of $\sigmac$. In contrast, when $m=1000$, the regressor is better trained, and the Type-I error remains well-controlled for all $\sigmac$. 
\textbf{Right}: 
The empirical test power depends strongly on both $\beta$ and $\sigmac$, indicating the the importance of proper  kernel selection for $k_\C$.}{type1-type2-error-vs-sigmac}{0.7}

To evaluate whether maximizing the approximate test power is effective in practice, we compare the theoretical (approximate) power with the empirical power estimated from data. \cref{fig:kci_vs_sigmac} illustrates how the analytic results $\KCI$, $\sigma^2_\Hone$, and the corresponding $\SNR$ vary with the squared kernel lengthscale $\sigmac$ in the synthetic example~\eqref{eq:example}, where the optimal $\sigmac^*$ is obtained by maximizing the $\SNR$. As shown in \cref{fig:type1-type2-error-vs-sigmac} (right), the theoretical power curve ($\SNR$ vs.\ $\sigmac$) closely tracks the empirical power curve ($(1-\text{Type-II error})$ vs.\ $\sigmac$), indicating that the selected $\sigmac^*$ remains effective in practice under the alternative hypothesis.

Although maximizing test power is effective under the alternative, it can substantially inflate Type-I error in CI testing. In the unconditional settings of \citet{liu:deep-testing,xu:hsic-d}, the null threshold is determined by permutation, ensuring exact Type-I error control: any chosen kernel rejects at most at rate~$\alpha$.  In our case, no such procedure is available; instead, we rely on asymptotic null approximations, which depend sensitively on kernel choice and regression quality, making null calibration delicate.
Train/test splitting prevents overfitting to the points used to select \(\sigmac\), yet \(\sigmac\) can still overfit to the imperfect regressors \(\emuac\) and \(\emubc\). As shown in \cref{fig:type1-type2-error-vs-sigmac} (left), Type-I error remains controlled when these regressors are well trained; with limited training data, however, there exists a range of \(\sigmac\) values where it rises sharply. 
Power maximization tends to favor this region due to its higher \(\eSNR\). The \(\sigmac\)  is then selected to capture the spurious dependence caused by regression errors rather than the true signal.
Hence, when the conditioning kernel is chosen based on noisy regressions, an inherent tension arises between Type-I error control and test power.

\paragraph{Relationship to weighted GCM selection scheme.}
One approach of \citet{scheidegger2022weighted}
identifies a weight function by predicting the sign of residuals' product.
If that prediction works perfectly,
then it changes GCM from measuring the average residual correlation
to measuring its average absolute value,
which is potentially much more powerful.
As discussed, this is essentially equivalent to selecting $\kcp$, which they set as a $\pm 1$ indicator based on whether the residuals have the same predicted sign.
While the scheme works differently than ours,
it has essentially the same trade-offs
as other approaches for kernel selection.

\section{Type-I Error Inflation with Regression Errors}\label{sec:regressionErrors}
As shown by \cref{thm:finitevalid} and reinforced by the previous section’s example,
the fundamental challenges in conditional independence testing stem from the estimation of conditional mean embeddings.
To further illustrate this point, we examine the effect of regression errors by letting $\emuac = \muac + \detac$ and $\emubc = \mubc + \detbc$.
Under the null hypothesis, we can explicitly characterize $\eKCI$ and its asymptotic variance in terms of $\detac$ and $\detbc$. This enables us to quantify how regression errors distort the KCI statistic and, consequently, to establish formal bounds that relate estimation error to test validity.

\paragraph{Effect on moments.}
The following result, proved in \cref{sec:u-stat-moments}, is a more convenient form of textbook results about $U$-statistics \citep[Section 5.2.1]{serfling}
for kernel methods:
\fakelabel{theorem}{thm:u-stat-moments}
\begin{restatable}{theorem}{ustatmoments} %
    Let $h(X, X') = \inner{\phi_h(X)}{\phi_h(X')}$
    with mean embedding $\mu_h = \E_X \phi_h(X)$
    and the centered covariance operator
    $\mathfrak C_h = \E_X[ \phi_h(X) \otimes \phi_h(X) ] - \mu_h \otimes \mu_h$.
    Define $\nu_1 = \inner{\mu_h}{\mathfrak C_h \mu_h}$
    and $\nu_2 = \norm{\mathfrak C_h}_{\HS}^2$.
    The corresponding $U$-statistic satisfies
    \[
        U_n = \frac{1}{n (n-1)} \sum_{1 \le i \ne j \le n}\!\!{h(X_i, X_j)}
        ,\quad
        \E[U_n]
        = U
        = \norm{\mu_h}^2
        ,\quad
        \Var(U_n)
        = \frac4n \nu_1 + \frac{2}{n(n-1)} \nu_2
    .\]
\end{restatable}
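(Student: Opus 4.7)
The plan is to apply the classical Hoeffding-type variance decomposition for $U$-statistics of order $2$ and then specialize to the inner-product form $h(X,X')=\inner{\phi_h(X)}{\phi_h(X')}$ so that the two relevant moments can be rewritten in terms of $\nu_1$ and $\nu_2$ alone.

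First, the mean is immediate: since $X_i \ind X_j$ for $i\ne j$,
\[
\E h(X_i,X_j)=\inner{\E\phi_h(X_i)}{\E\phi_h(X_j)}=\norm{\mu_h}^2,
\]
and averaging gives $\E U_n=\norm{\mu_h}^2=U$. For the variance, expand $\Var(U_n)=\frac{1}{n^2(n-1)^2}\sum_{i\neq j}\sum_{k\neq l}\Cov(h(X_i,X_j),h(X_k,X_l))$ and split by the size of $\{i,j\}\cap\{k,l\}$: the zero-overlap terms vanish by independence; counting the ordered pairs with overlap exactly one and exactly two gives $4(n-2)$ and $2$ per outer pair, respectively. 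This yields the textbook identity
\[
\Var(U_n)=\tfrac{4(n-2)}{n(n-1)}\zeta_1+\tfrac{2}{n(n-1)}\zeta_2,
\]
where $\zeta_1=\Cov(h(X_1,X_2),h(X_1,X_3))$ and $\zeta_2=\Var(h(X_1,X_2))$.

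Now I rewrite $\zeta_1$ and $\zeta_2$ in terms of $\nu_1,\nu_2,\mu_h$. Conditioning on $X_1$ gives $\E[h(X_1,X_2)\mid X_1]=\inner{\phi_h(X_1)}{\mu_h}$, so $\zeta_1=\Var(\inner{\phi_h(X)}{\mu_h})=\E[\inner{\phi_h(X)}{\mu_h}^2]-\norm{\mu_h}^4$. Using the tensor identity $(u\otimes u)v=\inner{u}{v}u$ and the decomposition $\E[\phi_h(X)\otimes\phi_h(X)]=\mathfrak{C}_h+\mu_h\otimes\mu_h$, this collapses to $\zeta_1=\inner{\mu_h}{\mathfrak{C}_h\mu_h}=\nu_1$. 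For $\zeta_2$, condition on $X$ in $\E\inner{\phi_h(X)}{\phi_h(X')}^2$ and use the same decomposition together with $\tr(\mathfrak{C}_h^2)=\norm{\mathfrak{C}_h}_{\HS}^2$ to obtain $\E\inner{\phi_h(X)}{\phi_h(X')}^2=\nu_2+2\nu_1+\norm{\mu_h}^4$, so $\zeta_2=\nu_2+2\nu_1$.

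Plugging back in and simplifying algebraically,
\[
\Var(U_n)=\tfrac{4(n-2)}{n(n-1)}\nu_1+\tfrac{2}{n(n-1)}(\nu_2+2\nu_1)=\tfrac{4}{n}\nu_1+\tfrac{2}{n(n-1)}\nu_2,
\]
which is the claimed formula. The main obstacle is purely bookkeeping: keeping the trace and outer-product identities straight when turning $\zeta_1,\zeta_2$ into $\nu_1,\nu_2$, and recognizing the cancellation that converts the $(n-2)$-weighted $\nu_1$ term into a clean $4\nu_1/n$ after absorbing the $2\nu_1$ coming from $\zeta_2$.
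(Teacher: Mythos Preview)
Your proposal is correct and follows essentially the same approach as the paper: both start from the standard Hoeffding variance decomposition $\Var(U_n)=\tfrac{4(n-2)}{n(n-1)}\zeta_1+\tfrac{2}{n(n-1)}\zeta_2$ and then use the outer-product/trace identities for $\phi_h$ to express the pieces in terms of $\nu_1,\nu_2$. The only cosmetic difference is that the paper first applies the law of total variance to rewrite $\zeta_2$ as $\zeta_1+\E_X\Var_{X'}[h\mid X]$ and shows the latter equals $\nu_1+\nu_2$, whereas you compute $\zeta_2=\nu_2+2\nu_1$ directly; the final algebraic collapse to $\tfrac{4}{n}\nu_1+\tfrac{2}{n(n-1)}\nu_2$ is the same.
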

The function $h$ in \eqref{eq:kcin}, for $\KCIn$,
has this form with
$\phi_h(X) = \phiac(\A, \C) \otimes \phibc(\B, \C) \otimes \phic(\C)$, and its mean
$\mu_h = \E_{X}[ \phi_h(X) ] = \E_\C\big[ \E_{\A\B}[ \phiac(\A,\C) \otimes \phibc(\B,\C) \mid \C] \otimes \phic(\C) \big]$
is exactly $\Ckci$ in \eqref{eq:ckci}.
Thus, under the null $\mu_h = \Ckci = 0$,
we have
$
    \E \KCIn = \KCI = 0
$ and $
    \Var(\KCIn) = \frac{2}{n (n-1)} \nu_2
$.

$\eKCIn$ has the same decomposition, except  
$\mu_{\hh} = \E_\C\left[ \E_{\A\B}[\, \ephiac(\A,\C) \otimes \ephibc(\B,\C) \mid \C\,] \otimes \phic(\C) \right]$
is now \emph{not} zero
if the error in $\emuac$, $\emubc$
is not exactly conditionally independent.
As shown by \citet{pogodin2024splitkci},
with linear kernels $k_\A$ and $k_\B$,
under the null we have
\begin{equation} \label{eq:eKCI-null}
    \E \eKCIn
    = \eKCI
    = \E \left[
        \kC \;
        \inner{\detac(\C)}{\detac(\C')}_{\Ha} \;
        \inner{\detbc(\C)}{\detbc(\C')}_{\Hb} \;
    \right]
.\end{equation}
As they note,
we typically expect $\detac$ and $\detbc$ to be relatively smooth functions of $\C$;
thus it is reasonable to expect that $\eKCI$ can be nontrivial
even though they were trained on  independent datasets.
Perhaps even more significantly,
for fixed regression functions,
it will generally be the case that $\nu_1 = \inner{\mu_{\hh}}{\mathfrak C_{\hh} \mu_{\hh}} > 0$\danica{compute variance in general case}.
This implies that the standard deviation decays only as $\Theta(1 / \sqrt n)$,
rather than the faster $\Theta(1/n)$ obtained achieved $\detac, \detbc$ are zero.
The exact variance of \(\eKCIn\) in terms of $\detac$ and $\detbc$ is given in \cref{sec:analytical:deltas}.
\footnote{Although derived under simplifying assumptions on data distribution, it extends directly since the result is independent of choices of $f_\A(\C)$, $f_\B(\C)$, and Gaussianity of residuals.}

\paragraph{Multi-dimensional $\C$ example.}
So far we implicitly presumed that the same features of $\C$ are used both in the true/estimated conditional means and the dependence $\gamma(\C)$. We thus extend our analysis to cases where $\C$ is multi-dimensional, considering two scenarios: (1) using the same coordinates of $\C$ for both conditional means and dependence, and (2) using separate coordinates.
This allows us to study how the information in $\C$ drives spurious dependence.
See \cref{sec:analytical:3d} for the problem formulation and \cref{app:experiments:3d} for the experimental setup and additional results.

\begin{table}[htbp]
\centering
\caption{Comparison of Testing Results for Two Conditional Dependence Scenarios}
\label{tab:test_comparison}
\begin{tabular}{lcccc}
\hline
\textbf{Scenario} & \textbf{Type-I Error} & \textbf{Type-II Error} \\
\hline
Scenario 1: Shared coordinates & 0.21  & 0.0  \\
Scenario 2: Separate coordinates & 0.10  & 0.08 \\
\hline
\end{tabular}
\end{table}

As observed in \cref{tab:test_comparison}, Scenario 1 exhibits a notably higher Type-I error (0.21) compared to Scenario 2 (0.10). This increase arises from regression errors leaking correlated noise into the test statistic when regressions
and dependence share the same coordinate. Consequently, Scenario 1 exhibits an increase in false positives. 
In contrast, Scenario 2, with independent dimensions, shows lower Type-I error but slightly higher Type-II error, illustrating a trade-off driven by correlated regression errors.

We further present real-world experiments in \cref{app:experiments:age}.
These observations motivate a closer look at how regression errors affect the theoretical behavior of KCI, particularly its null calibration.

\paragraph{Effect on null calibration.}
Standard methods for setting a test threshold for KCI
do not incorporate regression error;
rather, they rely  on the asymptotic distribution of $\KCIn$.
For instance, \citet{zhang2011kernel} estimate the threshold using a $\chi^2$ mixture or a gamma approximation, while \citet{pogodin2024splitkci} suggest a wild bootstrap approach. In either case, the null threshold scales as $\Theta(1/n)$. However, if regression errors remain fixed while the number of test points grows, \(\eKCIn = \Theta(1) + \mathcal O_p(1/\sqrt n)\) will almost surely exceed the threshold, inflating Type-I error. This shows that regression errors must shrink as $n$ grows, and motivates establishing the required decay rate.

\paragraph{Asymptotics.}
When $\nu_1 > 0$,
$\sqrt{n}(U_n - U)$ converges to a normal distribution (\cref{thm:asymptotic-normality});
when $\nu_1 = 0$ but $\nu_2 > 0$,
$n (U_n - U)$ converges in distribution
to a weighted mixture of centered $\chi^2$ variables
\citep[Section 5.5]{serfling}.
We can thus ask:
under the null,
how likely is a sample from $\KCIn$
to exceed a test threshold set
according to the limiting distribution of $n \KCIn$?

\fakelabel{theorem}{thm:typei-q}
\begin{restatable}{theorem}{typeiq}
    Assume that $\A \ind \B \mid \C$.
    Let $Z_1 = \eKCIn$
    and $Z_2 \sim \N\left( \eKCI, \Var(\eKCIn) \right)$
    be a normal variable moment-matched to $Z_1$.
    Let $q > 0$ and $\rho \in (0, 1)$;
    define $T_1 = \sqrt{(1-\rho)/\rho}$
    and $T_2 = \Phi^{-1}(1-\rho)$,
    where $\Phi$ is the standard normal CDF.
    Then the following holds for $i \in \{1,2\}$:
    \[
        \Pr\left( Z_i > \frac q n \right)
        \le \rho
        \quad\text{ if }
        q \ge n \eKCI + T_i \; \sqrt{n^2 \Var(\eKCIn)}
    .\]
\end{restatable}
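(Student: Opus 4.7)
The two cases separate cleanly: case $i=2$ is a direct computation with the Gaussian CDF, and case $i=1$ is a moment-based tail bound that uses only $\E \eKCIn$ and $\Var(\eKCIn)$. Throughout, the relevant moments are provided: under the null, \cref{thm:u-stat-moments} together with \eqref{eq:eKCI-null} gives $\E \eKCIn = \eKCI$, and $\Var(\eKCIn)$ appears as the variance parameter of both $Z_1$ and $Z_2$ by construction.

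For $i=2$, I would standardize: write $Z_2 = \eKCI + \sqrt{\Var(\eKCIn)}\, G$ with $G \sim \N(0,1)$, so
\[
    \Pr\!\left(Z_2 > \tfrac{q}{n}\right)
    = 1 - \Phi\!\left(\frac{q/n - \eKCI}{\sqrt{\Var(\eKCIn)}}\right).
\]
This is at most $\rho$ exactly when the argument of $\Phi$ is at least $\Phi^{-1}(1-\rho) = T_2$, which rearranges (multiplying numerator and denominator by $n$) to $q \ge n\,\eKCI + T_2\sqrt{n^2\,\Var(\eKCIn)}$, the stated threshold.

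For $i=1$, the distribution of $\eKCIn$ is not Gaussian and only its first two moments are available, so the key tool is Cantelli's one-sided Chebyshev inequality: for any real-valued $Z$ with mean $\mu$ and variance $\sigma^2$ and any $t \ge 0$,
\[
    \Pr(Z - \mu \ge t) \;\le\; \frac{\sigma^2}{\sigma^2 + t^2}.
\]
Applied with $\mu = \eKCI$, $\sigma^2 = \Var(\eKCIn)$, and $t = q/n - \eKCI$ (which we may assume nonnegative, otherwise the claimed inequality on $q$ already forces $\Var(\eKCIn) = 0$ and the statement is trivial), this yields
\[
    \Pr\!\left(Z_1 > \tfrac{q}{n}\right)
    \;\le\; \frac{\Var(\eKCIn)}{\Var(\eKCIn) + (q/n - \eKCI)^2}.
\]
Setting the right-hand side $\le \rho$ and solving for $t$ gives $t \ge \sqrt{(1-\rho)/\rho}\,\sqrt{\Var(\eKCIn)} = T_1 \sqrt{\Var(\eKCIn)}$; multiplying through by $n$ produces the required bound.

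The main ``obstacle'' is recognizing which tail bound to invoke, rather than any hard calculation: the specific shape $T_1 = \sqrt{(1-\rho)/\rho}$ is the signature of Cantelli's inequality (the two-sided Chebyshev inequality would instead give $\sqrt{1/\rho}$, a strictly weaker constant as $\rho \to 0$). No Hoeffding decomposition of the $U$-statistic, delta-method expansion, or control on higher cumulants is needed, since the theorem asks only for a distribution-free guarantee in terms of $\E \eKCIn$ and $\Var(\eKCIn)$, which are exactly the quantities already available from \cref{thm:u-stat-moments} and \eqref{eq:eKCI-null}.
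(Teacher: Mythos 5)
Your proposal is correct and matches the paper's argument: Cantelli's one-sided inequality for $Z_1 = \eKCIn$ and direct standardization with the Gaussian CDF for $Z_2$, with the hypothesis on $q$ rearranging to $\big(q/n - \eKCI\big)/\sqrt{\Var(\eKCIn)} \ge T_i$ in both cases. (One small slip: for $i=1$, the nonnegativity of $q/n - \eKCI$ follows directly from the hypothesis since $T_1 > 0$ and $\sqrt{\Var(\eKCIn)} \ge 0$; it is not that $\Var(\eKCIn)=0$ is forced, but this does not affect the argument.)
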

\vspace{-0.5em}
The proof is in \cref{sec:proof:typei-q};
the case for $\eKCIn$ is more precisely applicable,
but using asymptotic normality gives better dependence on $\rho$.
This theorem  provides an upper bound on the probability that the inflated statistic \(\eKCIn\) exceeds a nominal null threshold.
Intuitively, the bound shows that if the regression bias induced \(\eKCI\) or the variance \(\Var(\eKCIn)\) are non-negligible, the effective threshold $q/n$ must grow proportionally to \(\eKCI + T_i\sqrt{\Var(\eKCIn)}\) in order to maintain the level $\rho$.

In practice, we use the wild bootstrap to approximate the null distribution of the KCI statistic.
Formally, it generates surrogate samples
$Y = \frac{1}{n}\sum_{i\ne j} \hat h_{ij}\,\varepsilon_i \varepsilon_j$,
where $\varepsilon_i$ are independent noise\footnote{\citet{pogodin2024splitkci} suggest using Rademacher $\varepsilon_i$; we use Gaussians in our analysis.} and $\hat h_{ij}$ is defined as in \eqref{eq:kcin} using $\emuac$ and $\emubc$.  
For a given kernel matrix, $Y$ has zero mean and variance closely matching that of $n\KCIn$ under the null.  
The following result, shown in \cref{sec:proof:typei-sup}, bounds the approximation error between the wild bootstrap and a moment-matched normal for $n \eKCIn$.
\fakelabel{theorem}{thm:typei-sup}
\begin{restatable}{theorem}{typeisup}
Assume $\A \ind \B \mid \C$, and let
\(
Y 
\;=\;
\frac{1}{n}
\sum_{i,j=1}^n
\hh_{ij}\,\varepsilon_i\,\varepsilon_j,
\text{where }
\varepsilon_i \overset{\mathrm{iid}}{\sim} \mathcal{N}(0,1)
\).
Let $\hH$ be the matrix with entries $\hh_{i,j}$;
assume $\delta\defeq 
\norm{\hH}_\infty^2/\norm{\hH}_2^2 < 1/2$.
Let
\(
Z_n 
\;\sim\;
\N\!\Bigl(
  \eKCI,\;
  \Var(\eKCIn)
\Bigr)
\).
Define the standardized mean shift
$\bkci =  \frac{\eKCI}{\sqrt{\Var(\eKCIn)}}$
and variance mismatch $\kvar=\frac{\Var(Y\mid\hH)}{n^2 \Var(\eKCIn)}.$
Further define
$R_1 = \frac{2\pi^2}{3}\Skew(Y\mid\hH)
= \frac43 \sqrt{2} \, \pi^2 \norm{\hH}_3^3 / \norm{\hH}_2^{3}$,
$R_2 = \frac{2^{-5/4} \, \pi^{-2}}{\sqrt{1 - 2\,\delta}}$,
and $R_3 = (2\pi)^{-7/2}.$
Then, for $\Psi(x) = \frac{1}{\sqrt{2\pi}}\,x\,\exp(2\pi x)$,
\begin{multline*}
\sup_{x \in \mathbb{R}}
\Bigl|\Pr(Y \mid \hH \le x) - \Pr(n \, Z_n \le x)\Bigr|
    \;\le\;
\Psi
\left(R_1 \kvar[3/2]
+
\bkci
+
\pi |\kvar-1|\right)
+ 
R_2 \kvar[-1/2]
+
R_3,
\end{multline*}
\end{restatable}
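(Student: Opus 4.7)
The plan is to combine Esseen's smoothing inequality with the explicit determinantal characteristic function of the Gaussian quadratic form $Y \mid \hH$. Conditionally on $\hH$, the bootstrap variable $Y = n^{-1}\varepsilon^\top \hH \varepsilon$ is a linear combination of independent $\chi^2_1$ random variables with weights $\lambda_j/n$, where $\{\lambda_j\}$ are the eigenvalues of $\hH$, so
\[
    \phi_Y(t \mid \hH) = \det\!\bigl(I - 2 i t \hH / n\bigr)^{-1/2},
\]
while the target law $n Z_n$ has $\phi_{n Z_n}(t) = \exp\!\bigl(i t n \eKCI - \tfrac12 t^2 n^2 \Var(\eKCIn)\bigr)$. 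Esseen's inequality then bounds the Kolmogorov distance by an integral of $|\phi_Y(t\mid \hH) - \phi_{n Z_n}(t)|/|t|$ over $[-T, T]$ plus a residual proportional to $1/(T \sqrt{\Var(n Z_n)})$, for any cutoff $T > 0$.

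Next I would decompose the characteristic-function mismatch according to the three sources of error between $Y \mid \hH$ and $n Z_n$: the mean shift yields the $\bkci$ contribution; the variance mismatch $\kvar = \Var(Y \mid \hH)/(n^2 \Var(\eKCIn))$ contributes $\pi|\kvar - 1|$; and the non-Gaussianity of the $\chi^2$ chaos is captured by its third cumulant $8 \sum_j \lambda_j^3 / n^3 = 8\|\hH\|_3^3 / n^3$. Normalizing this cumulant by the target standard deviation $(n^2 \Var(\eKCIn))^{3/2}$ rather than by the actual $\Var(Y\mid\hH)^{3/2}$ re-scales the skewness by $\kvar[3/2]$, producing the $R_1 \kvar[3/2]$ factor. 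On the low-frequency range $|t| \le T$, a Taylor expansion of $\log\phi_Y(t \mid \hH)$ matches the first two cumulants of $n Z_n$ up to the explicit $\bkci$ and $|\kvar - 1|$ errors and absorbs the third-cumulant correction and higher-order remainders into the exponential bound $\Psi$; the $e^{2\pi x}$ factor emerges from applying $|e^u - e^v| \le |u - v|\,e^{\max(|u|, |v|)}$ to the two log-characteristic functions, with the $2\pi$'s coming from the Esseen normalization $1/\pi$ combined with the density bound $1/\sqrt{2\pi}$.

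For the high-frequency tail $|t| > T$, I would use $|\phi_Y(t \mid \hH)| = \prod_j (1 + 4 t^2 \lambda_j^2 / n^2)^{-1/4}$. The assumption $\delta = \|\hH\|_\infty^2 / \|\hH\|_2^2 < 1/2$ ensures that no single eigenvalue dominates the Frobenius mass, so after separating the largest eigenvalue from the rest the product decays in $|t|$ at a rate controlled explicitly by $(1 - 2\delta)^{-1/2}$. Integrating this decay past the cutoff and combining with the Gaussian density bound $\sup_x f_{n Z_n}(x) = (2\pi n^2 \Var(\eKCIn))^{-1/2}$ produces the $R_2 \kvar[-1/2]$ contribution, while the residual term from Esseen's inequality yields $R_3$ after an optimal choice of $T$. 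The main obstacle will be threading the $\pi$ factors through Esseen's inequality and the cumulant expansion so that higher-order remainders fit inside the specific exponential form $\Psi(x) = x e^{2\pi x}/\sqrt{2\pi}$ rather than a cruder polynomial bound, and verifying that $\delta < 1/2$ is exactly what is needed to keep $\log\det(I - 2it\hH/n)$ well-defined along the integration contour and to yield the stated $(1-2\delta)^{-1/2}$ dependence in $R_2$.
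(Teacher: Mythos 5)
Your high-level plan — Esseen-type Fourier inversion, cumulant matching in the low-frequency region, decay of $\phi_Y$ in the tail controlled by $\delta<1/2$, and identifying $\norm{\hH}_3^3/\norm{\hH}_2^3$ as the normalized skewness — matches the paper's strategy, and the insight that $\Var(Y\mid\hH)$ is $2\norm{\hH}_2^2/n^2$ while $n^2\Var(\eKCIn)$ picks up an extra $4n\nu_1$ is indeed what drives $\kvar$. But two implementation points would derail a literal execution of your sketch.

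First, you conflate two forms of Esseen's inequality. You state the truncated smoothing lemma (integral over $[-T,T]$ plus a residual $\propto 1/(T\sqrt{\Var(nZ_n)})$), yet then also propose to integrate $|\phi_Y|/|t|$ past the cutoff; with the truncated lemma that tail is not needed. The paper instead uses the untruncated Fourier-inversion bound $\sup_x|F-G| \le \frac{1}{2\pi}\int_{\R}\big|\frac{\phi_F(t)-\phi_G(t)}{t}\big|\,dt$ and splits the integral at $A=2\pi\sqrt{\kvar}$ into $I_1$ (difference on $|t|\le A$), $I_2$ (Gaussian tail of $\phi_{S^*}$ on $|t|>A$) and $I_3$ (heavy $1/|t|$ tail of $\phi_{Q^*}$ on $|t|>A$). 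Here $R_2\kvar[-1/2]$ comes \emph{only} from $I_3$ (no Gaussian density bound is involved), and $R_3$ comes from $I_2$, i.e.\ the tail of the Gaussian \emph{characteristic function}, not from a smoothing-lemma residual. Your attributions are reversed/muddled, and a smoothing-lemma residual with the standard constant $24/\pi$ cannot be massaged into $(2\pi)^{-7/2}$.

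Second, the inequality $|e^u-e^v|\le|u-v|\,e^{\max(|u|,|v|)}$ is too lossy to give the stated $\Psi$. At the splitting frequency $t=A$, $|\log\phi_{S^*}(A)|\approx 2\pi^2$ and $|\log\phi_{Q^*}(A)|\approx 2\pi^2\kvar$ (dominated by the large negative real parts), so $e^{\max(|u|,|v|)}$ is an $\Omega(e^{2\pi^2\max(1,\kvar)})$ constant that does \emph{not} vanish as $\bkci\to0$, $\kvar\to1$. The paper instead writes $\phi_{Q^*}-\phi_{S^*}=\phi_{S^*}\,(e^{r}-1)$ with $r=\log\phi_{Q^*}-\log\phi_{S^*}$ and applies $|e^{r}-1|\le|r|\,e^{|r|}$ \emph{to the difference only}, keeping the Gaussian decay of $|\phi_{S^*}|$ as an explicit integrating factor. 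The bound $|r(t)|\le\frac16|t|^3\cum{3}{Q^*}+|t|\,\bkci\sqrt{\kvar}+\frac{t^2}{2}|\kvar^{-1}-1|$ then evaluates at $t=A=2\pi\sqrt{\kvar}$ to exactly $2\pi\bigl(R_1\kvar[3/2]+\bkci+\pi|\kvar-1|\bigr)$, which is what makes $e^{|r(A)|}$ the $e^{2\pi x}$ factor inside $\Psi$. Without this factorization the exponential prefactor stays $O(1)$ rather than degenerating as regression errors vanish, and the bound would not be of the promised form.
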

\vspace{-0.5em}
Noting \(n^2 \Var(\eKCIn) \sim 4 n \nu_1 + 2 \nu_2\),
\cref{thm:typei-sup} is most meaningful if
$\eKCI = o(1/n), \text{ and } \nu_1 = o(1/n)$,
so that the standardized mean shift \(\bkci \to 0\) and the variance mismatch \(\kvar \to 1\). Consequently, \cref{thm:typei-sup} makes the asymptotic discrepancy between the wild-bootstrap statistic $Y$ and its Gaussian approximation $n Z_n$ converge as $n \to \infty$ to 
$\Psi(R_1)+R_2+R_3$.
This non-vanishing constant remains under perfect regression in part because of the mismatch between the normal $Z_n$ and the asymptotically mixture-of-chi-squareds $\KCIn$, rather than (necessarily) errors in the wild bootstrap itself.
Similarly, in \cref{thm:typei-q}, the asymptotic threshold behaves correctly when $4 n \nu_1 + 2 \nu_2 \to v \le (q/T_i)^2$, which is most easily achieved when $\nu_1 = o(1/n)$ and $\nu_2 = \Theta(1)$.
Taken together, both bounds consistently require
$\eKCI = o(1/n), \nu_1 = o(1/n)$.

\section{Discussion} \label{sec:discussion}
We provided a novel framing of the KCI test,
one which helped us connect it closely to GCM-based tests.
We explained how this category of tests interacts with the famed %
hardness result of \citet{shah2018hardness},
identifying regression error as the key difficulty,
and showing bounds on the excess Type-I error based on the amount of regression error.
We showed that, contra the assumptions of most prior work,
selecting a $k_\C$ kernel specifically for testing can be of vital importance in achieving test power,
but that doing so can %
exacerbate Type-I error.

While CI testing remains fundamentally  difficult,
our work makes a step towards understanding how this difficulty manifests in practice,
and demonstrates paths towards addressing it.
This underscores that users of GCM- or KCI-type tests must carefully consider how to mitigate spurious residual dependence under the null—something that sample splitting alone does \emph{not} resolve.

\begin{ack}
    The authors would like to thank Aaron Wei for productive discussions.

    This work was supported in part by
    the Natural Sciences and Engineering Research Council of Canada,
    the Canada CIFAR AI Chairs program,
    the Gatsby Charitable Foundation,
    Calcul Québec,
    the BC DRI Group,
    the Digital Research Alliance of Canada,
    and a Google research gift.
\end{ack}

\printbibliography

\clearpage
\appendix

\section{Conditional Independence Decomposition} 
\label{sec:decom-proof}
We recall and prove \cref{thm:daudin-condition},
which extends results of \citet{daudin1980partial}.
\begingroup
\renewcommand\thetheorem{\ref{thm:daudin-condition}}
\begin{theorem}
Random variables $\A$ and $\B$ are conditionally independent given $\C$ if and only if
\begin{align}
\label{apd-eq:cond-indep}
\E_\C\Big[ \,  w(\C) \; \E_{\A\B\mid \C} \big[ \,
(f(\A) - \E[f(\A)\mid \C] ) \,
(g(\B) - \E[g(\B)\mid \C]) \,
\mid C
\big] \, \Big] = 0 
,\end{align}
for all square-integrable functions \( f \in \La \), \(g \in \Lb\), and  \(w \in \Lc \).
\end{theorem}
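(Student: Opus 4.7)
The statement is a biconditional, so I split it into two implications, each short.

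\emph{Forward direction ($\Rightarrow$).} Assume $\A \ind \B \mid \C$. Given $f \in \La$ and $g \in \Lb$, view them as elements of $\Lac$ and $\Lbc$ respectively (they are constant in $\C$, so $L^2$ of the marginal coincides with $L^2$ of the joint). Definition \ref{def:daudin-definition} then yields $\E[f(\A)g(\B)\mid \C] = \E[f(\A)\mid \C]\,\E[g(\B)\mid \C]$ a.s. Expanding the product $(f(\A)-\E[f(\A)\mid\C])(g(\B)-\E[g(\B)\mid\C])$ and taking the conditional expectation, bilinearity of $\E[\cdot\mid\C]$ gives that the inner expectation in \eqref{apd-eq:cond-indep} is $0$ a.s.\ in $\C$, so the outer expectation against any $w \in \Lc$ vanishes.

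\emph{Backward direction ($\Leftarrow$).} Assume \eqref{apd-eq:cond-indep} holds for all square-integrable $f,g,w$. Define
\[
\phi_{f,g}(c) \defeq \E\bigl[\,(f(\A)-\E[f(\A)\mid\C])(g(\B)-\E[g(\B)\mid\C])\,\big|\,\C=c\,\bigr].
\]
Restrict attention to bounded measurable $f:\spA\to\R$ and $g:\spB\to\R$; these certainly lie in $\La$ and $\Lb$, and $\phi_{f,g}$ is uniformly bounded (by $4\|f\|_\infty\|g\|_\infty$), hence in $\Lc$. The hypothesis then reads $\E_\C[w(\C)\phi_{f,g}(\C)] = 0$ for every $w \in \Lc$; the admissible choice $w = \phi_{f,g}$ forces $\|\phi_{f,g}\|_{\Lc}^2 = 0$, so $\phi_{f,g} \equiv 0$ a.s. This is exactly the factorization $\E[f(\A)g(\B)\mid\C] = \E[f(\A)\mid\C]\,\E[g(\B)\mid\C]$ a.s.\ for all bounded measurable $f,g$. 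Specializing to indicators $f = \mathbf{1}_{A_0}$, $g = \mathbf{1}_{B_0}$ recovers the a.s.\ factorization of the conditional joint distribution over all measurable rectangles, which is the definition of $\A \ind \B \mid \C$.

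\emph{Where the difficulty lies.} The only non-cosmetic point is justifying the self-adjoint choice $w = \phi_{f,g}$, which requires $\phi_{f,g} \in \Lc$; working with bounded $f,g$ is the cleanest way to secure this, and bounded functions already form a test class rich enough to pin down the joint conditional law via Daudin's criterion. Strengthening back to arbitrary $L^2$ inputs (not actually required for the biconditional) would only involve a routine truncation and dominated-convergence step.
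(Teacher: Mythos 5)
Your proof is correct, and the forward direction mirrors the paper's argument exactly: view $f,g$ as $\C$-independent members of $\Lac,\Lbc$, apply Daudin's definition to get a.s.\ vanishing of the conditional covariance, then integrate against $w$. The backward direction, however, takes a genuinely different route. The paper fixes $f,g$ and peels off the a.s.\ vanishing of $\phi_{f,g}$ by choosing $w = \one{\Q}$ for arbitrary Borel sets $\Q$, then upgrades from $f\in\La$, $g\in\Lb$ to $f\in\Lac$, $g\in\Lbc$ by freezing the $\C$-slot, recovering Daudin's functional criterion directly. You instead use the self-adjoint choice $w=\phi_{f,g}$ to get $\|\phi_{f,g}\|_{\Lc}^2 = 0$ in one stroke — a tidy trick that sidesteps quantifying over all Borel sets — at the cost of first restricting to bounded $f,g$ to guarantee $\phi_{f,g}\in\Lc$, and then closing via indicators and the distributional characterization $P_{\A\B\mid\C} = P_{\A\mid\C}P_{\B\mid\C}$ rather than via Daudin's criterion itself. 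That last hop leans on the equivalence between the two characterizations of conditional independence, which the paper asserts without proof; your argument is therefore valid on the same footing, but slightly more indirect since the theorem's target is Daudin's form. Both proofs share the same (routinely repairable) measure-theoretic subtlety that the a.s.\ exceptional set depends on $(f,g)$, which neither spells out.
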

\endgroup

\begin{proof}
(i) Let $\A$ and $\B$ be conditionally independent given $\C$. Let \( \tilde f \in \Lac \) and \(\tilde g \in \Lbc\). Then by \cref{def:daudin-definition}, 
almost surely in $\C$ it holds that
\begin{align*}
    \E \big[ \,
    \tilde f(\A, \C) \; \tilde g(\B, \C) \mid \C \,
    \big]
    &=
    \E \big[ \,
    \tilde f(\A, \C) \mid \C \,
    \big] \;
    \E \big[\, \tilde g(\B, \C) \mid \C \,
    \big]
,\end{align*}
which is equivalent to
the statement that almost surely in $\C$,
\[    
    \E_{\A \B \mid \C} \big[ \,
   ( \tilde f(\A, \C) - \E[\,\tilde f(\A, \C)\mid \C \,]
   )\; 
   (\tilde g(\B, \C) - \E[\,\tilde g(\B, \C)\mid \C])\mid \C \,
    \,\big]
    = 0
.\]
Since this expectation is almost surely zero,
it holds for any \(w \in \Lc \)
that
\[
    \E_\C \Big [ w(\C) \E_{\A \B \mid \C} \big[ \,
   ( \tilde f(\A, \C) - \E[\,\tilde f(\A, \C)\mid \C \,]
   )\; 
   (\tilde g(\B, \C) - \E[\,\tilde g(\B, \C)\mid \C])\mid \C \,
    \,\big] \Big]
    = 0 
.\]
Given any \(f \in \La\) and any \(g \in \Lb\),
we can choose \(\tilde f(\cdot, \ec) = f\) and \(\tilde g(\cdot, \ec) = g\) to simply ignore the second argument.
These functions satisfy \(\tilde f \in \Lac\) and \(\tilde g \in \Lbc\). Then, as desired,
\[  
    \E_\C \Big [ w(\C) \E_{\A \B \mid \C} \big[ \,
   (  f(\A) - \E[\, f(\A)\mid \C \,]
   )\; 
   ( g(\B) - \E[\, g(\B)\mid \C])\mid \C \,
    \,\big] \Big]
    = 0 
.\]

(ii) Suppose \eqref{apd-eq:cond-indep} holds for all functions \(\tilde f \in \La \), \(\tilde g \in \Lb\), and  \(\tilde w \in \Lc \).
Let \(P_{\C}\) denote the marginal distribution of \(\C\), and let \(P_{\A \mid \C}\), \(P_{\B \mid \C}\), and \(P_{\A\B \mid \C}\) denote the conditional distributions of \(\A\), \(\B\), and \((\A, \B)\) given \(\C\), respectively.
Let \(\Q\) be a Borel subset of the image set of \(\C\). Pick \(w^* = \one{\Q} \in \Lc\), 
where \(\one{\Q}\) is the indicator function of \(\Q\). Substituting this choice into equation~\eqref{apd-eq:cond-indep} yields
\[
\int_\Q \E_{\A \B \mid \C} \big[ \,
   (\tilde   f(\A) - \E[\, \tilde  f(\A)\mid \C \,]
   )\; 
   (\tilde  g(\B) - \E[\, \tilde  g(\B)\mid \C])\mid \C \,
    \,\big] d P_\C
    = 0 
,\]
Since this holds for all Borel sets \(\Q\), it follows that the integrand must vanish almost surely with respect to \(P_\C\). That is, for \(P_\C\)-almost every value of \(\C\),
\eqref{apd-eq:cond-indep} implies that
\[
    \E \big[ \,
    \tilde  f(\A) \; \tilde  g(\B) \mid \C = \ec \,
    \big]
    =
    \E \big[ \,
    \tilde  f(\A) \mid \C = \ec \,
    \big] \;
    \E \big[\, \tilde  g(\B) \mid \C = \ec \,
    \big]
.\]

Given any $f \in \Lac$ and any $g \in \Lbc$, for each $C=c$ in its domain,
$f(\cdot, c) \in \La$ and $g(\cdot, c) \in \Lb$ for almost every $\ec$.
Thus, for any $f \in \Lac$ and any $g \in \Lbc$, we have for almost every $\ec$,
\[
    \E \big[ \,
    f(\A, \ec) \; g(\B, \ec) \mid \C = \ec \,
    \big]
    =
    \E \big[ \,
     f(\A, \ec) \mid \C = \ec \,
    \big] \;
    \E \big[\, g(\B, \ec) \mid \C = \ec \, 
    \big]
,\] which is precisely \cref{def:daudin-definition}.
This completes the proof.
\end{proof}

\section{Finite-sample Valid Test with Exact Mean Embeddings}
\label{sec:finite-valid-test}

We recall and prove \cref{thm:finitevalid}.
\finitevalid*
\begin{proof}
    $\KCIn$ is a $U$-statistic with kernel
    \[
        k_\C(\ec, \ec')
        \,
        \inner{\phia(\ea) - \muac(\ec)}{\phia(\ea') - \muac(\ec')}_{\Ha}
        \,
        \inner{\phib(\eb) - \muac(\ec)}{\phia(\eb') - \muac(\ec')}_{\Hb}
    .\]
    We have that
    \[
        \norm{\phia(\ea)}
        = \sqrt{\inner{\phia(\ea)}{\phia(\ea)}}
        = \sqrt{k_\A(\ea, \ea)}
        \le \sqrt{\kappa_\A}
    \]
    and by Jensen's inequality
    \[
        \norm{\muac(\ec)}
        = \norm[\big]{ \E[ \phia(\A) \mid \C = \ec ] }
        \le \E[ \norm{\phia(\A)} \mid \C = \ec ]
        \le \sqrt{\kappa_\A}
    ,\]
    so that
    $\norm{\phia(\ea) - \muac(\ec)} \le 2 \sqrt{\kappa_\A}$.
    Hence, by Cauchy-Schwarz,
    \[
        \abs{\inner{\phia(\ea) - \muac(\ec)}{\phia(\ea') - \muac(\ec')}_{\Ha}}
        \le 4 \kappa_\A
    .\]
    Similarly,
    $
        \abs{\inner{\phib(\eb) - \mubc(\ec)}{\phib(\eb') - \mubc(\ec')}_{\Hb}}
        \le 4 \kappa_\B
    $.
    Thus the kernel of the $U$-statistic $\KCIn$
    has absolute value at most
    $16 \kappa_\A \kappa_\B \kappa_\C$.
    \citet{hoeffding}'s inequality for $U$-statistics
    \citep[c.f.][Section 5.6.1, Theorem A]{serfling}
    thus shows that when $\KCI = 0$,
    \[
        \Pr\left( \KCIn \ge t_n \right)
        \le \exp\left(
            - \frac{2 \floor{n / 2} \, t_n^2}{4 \cdot \left(16 \kappa_\A \kappa_\B \kappa_\C \right)^2 }
        \right)
        \le \exp\left(
            - \frac{(n-1) \, t_n^2}{\left(32 \kappa_\A \kappa_\B \kappa_\C \right)^2 }
        \right)
        = \alpha
    ,\]
    showing finite-sample validity of the test.
    
    On the other hand,
    when $\A \nind \B \mid \C$,
    since each kernel is $L^2$-universal,
    we know that $\KCI > 0$.
    Thus a symmetric application of Hoeffding's inequality
    tells us that
    once $n$ is large enough that
    $t_n < \KCI / 2$, we have that
    \begin{multline*}
        \Pr\left( \KCIn < t_n \right)
        = \Pr\left( \KCI - \KCIn > \KCI - t_n \right)
        \\
        \le \exp\left(
            - (n-1)
            \left( \frac{\KCI - t_n}{32 \kappa_\A \kappa_\B \sqrt{\kappa_\C}} \right)^2
        \right)
        \le \exp\left(
            - (n-1)
            \left( \frac{\KCI / 2}{32 \kappa_\A \kappa_\B \sqrt{\kappa_\C}} \right)^2
        \right)
        \to 0
    ,\end{multline*}
    and hence for any fixed alternative,
    the probability of a Type-II error goes to zero.
\end{proof}

\section{Relationship to Other Testing Methods} \label{sec:more-relations}

\paragraph{Relationship to other CI tests.}
One major category of conditional independence tests are based on variations of approximate permutation,
i.e.\ that samples with similar $\C$ values
have similar $\A$ and $\B$ distributions,
which can be exploited either by ``swapping'' samples with nearby $\C$ values
\citep[e.g.][]{model-powered-ci,cpt,local-permutation}
or by producing bins of $\C$ values and assuming the distribution is constant within \citep[e.g.][]{gyorfi:binning}.
While this approach might seem fundamentally different than the regression or conditional mean embedding approaches,
we emphasize that it is not.
For instance,
\citet{local-permutation}
assume that the Hellinger or Rényi distance
between $\A \mid \C = \ec$ and $\A \mid \C = \ec'$
is at most a constant times $\norm{\ec - \ec'}$,
and the same for $\B$;
similar assumptions underlie all methods of this type.
This smoothness justifies using the distribution $\A \mid \C = \ec'$
to estimate $\A \mid \C = \ec$
for some similar value of $\ec'$.
Bearing in mind the one-to-one correspondence between mean embeddings and distributions,
this assumption 
is essentially equivalent
to using a nearest-neighbor type estimator for $\emuac$, $\emubc$.

Another recent CI test is the Rao-Blackwellized Predictor Test, RBPT \citep{polo2023rbpt}.
This method is regression-based,
but based on comparing predictors of $\B \mid \A, \C$
to an averaged predictor of $\B \mid \C$.
This structure makes it harder to compare to the KCI-type tests directly,
but we note that it relies on a good estimate of $\A \mid \C$
and hence is essentially in the model-X framework.
Like most tests in this area, it suffers from severe bias problems, as discussed by \citet{pogodin2024splitkci}.

\paragraph{Smoothness of distributions.}
In the model-X setting where
the conditional distribution is only approximately known,
\citet[Section 5]{cpt} bound the worst-case inflation of the Type-I error for two common model-X tests by at most the average conditional total variation distance between the true distribution and the approximation.
Generic distribution modeling methods
are likely to succeed in this sense only if the distribution changes slowly in total variation.
Similarly, the bound of \citet{local-permutation} in a permutation case assumes that the distribution changes slowly in Hellinger distance;
note that the total variation distance is upper-bounded by a constant times the Hellinger distance,
and so slow Hellinger change is a (slightly) stronger assumption than slow total variation change.

By contrast, while the precise conditions for effective conditional mean embedding estimation are complex \citep{li2024towards},
we can roughly expect them to work
when the mean embedding changes smoothly as a function of $\C$;
that is, the maximum mean discrepancy, MMD \citep{mmd-jmlr}, 
between $\A \mid \C = \ec$ and $\A \mid \C = \ec'$
changes slowly as a function of $\ec$.
It is easy to see \citep[e.g.][]{xu:hsic-d}
that for bounded kernels,
the MMD is a lower bound on the total variation.
Thus, in the settings where the bounds of \citet{cpt,local-permutation} are applicable,
we can \emph{roughly} expect that
$\muac$, $\mubc$ should also be learnable.
The reverse, however, is not true:
the total variation is a much stronger distance than the MMD,
i.e.\ there are many cases where the MMD is quite small
and the total variation is very large.
For instance,
the Gaussian-kernel MMD between two nearby point masses will be small,
while for total variation and Hellinger it will be maximal.

\subsection{KCI-Power method} \label{sec:relation-to-power} 
\citet{wang2025practical},  in concurrent parallel work, propose selecting kernel parameters in KCI by maximizing a signal-to-noise ratio, using a scheme closely related to ours. 
Besides, they jointly tune the kernels \(k_\A, k_\B,\) and \(\kcp\), whereas we optimize only \(\kcp\); tuning \(k_\A\) and \(k_\B\) would require retraining the regressions and thus incur substantially higher computational cost.
They recommend a grid search rather than continuous optimization, arguing that estimating conditional mean embeddings introduces intrinsic bias, making gradient-based optimization unstable for improving test power. This limitation is consistent with our observations in \cref{sec:pitfalls}. However, while they attribute the difficulty to continuous optimization, our results show that the real issue is regression accuracy: once the conditional means are well estimated, gradient-based tuning is stable and effective. Although they also note that conditional mean bias can make kernel selection unreliable, our theory explicitly identifies these pitfalls and clarifies the underlying mechanism.

Methodologically, they use the generic unbiased KCI estimator for power maximization, as in \eqref{eq:kcin} (which we use for analytical convenience). In practice, we recommend the HSIC-like unbiased estimator in \eqref{eq:hsic-u}; empirically, this centralized version mitigates conditional mean bias, preventing the selection of kernels that emphasize correlations between regression errors.
They also calibrate the null using the Gamma approximation, while we rely on the wild bootstrap. Since Gamma calibration is known to be conservative \citep[Appendix B.2]{pogodin2024splitkci}, this accounts for their stronger Type-I control but substantially higher Type-II error. We describe our testing procedure in detail in \cref{apd:testing-procedure}

It is also worth noting that \citeauthor{wang2025practical} present a decomposition of \(\Ckci\) identical to ours, although they derive it from the earlier formulation (as in \cref{fn:decomp}) rather than from our first-principles derivation.

\section{Generalization Bound for SNR} \label{sec:gen-bound}
For a formal version of \cref{thm:snr-gen-informal},
we generalize the proof of \citet[Theorem 6]{liu:deep-testing}
to other second-order $U$-statistics.

Given a set of samples $X_1, \dots, X_n$
and a function $h$,
define
\begin{align*}
U_n
&\defeq \frac{1}{n (n-1)} \sum_{1 \le i \ne j \le n} h(X_i, X_j),
&
U
&\defeq \E h(X, X'),
\\
\sigma_{\Hone,n}^2
&\defeq \frac{4}{n^3} \sum_{i=1}^n 
  \!\left( \sum_{j=1}^n h(X_i, X_j) \!\right)^2 \!\!\!\!
  - \!\frac{4}{n^4} 
  \!\left( \sum_{i,j=1}^n h(X_i, X_j) \!\right)^2 \!\!\!,
\!\!\!\!\!\!& 
\sigma_{\Hone}^2
&\defeq 4 \E_{X}\!\left[ \Var_{X'}[h(X, X') \mid X] \right]\!\!,
\\
\SNR_{n,\lambda} 
&\defeq U_n / \sqrt{\sigma_{\Hone,n}^2 + \lambda},
&
\SNR_\lambda
&\defeq U / \sqrt{\sigma_{\Hone}^2 + \lambda}.
\end{align*}
and let $\SNR \defeq \SNR_0$.
We have constant $\lambda \ge 0$.

Here $U_n$ is the usual second-order $U$-statistic;
we assume, without loss of generality,
that $h(x, x') = h(x', x)$ for all $x, x'$.
We know from
Section 5.2.1 of \citet{serfling}
(also see \cref{thm:u-stat-moments})
that $\Var(U_n) = 4 \nu_1 / n + \mathcal O(1/n^2)$.
The estimator $\nu_{1,n}$
follows the biased estimator
used by \citet{liu:deep-testing};
while \citet{sutherland:opt-mmd} used an unbiased variance estimator,
the biased estimator is much simpler and also performs better in this setting \citep{deka:mmd-bfair}.

Note that with $X = (\A, \B, \C)$,
substituting $h$ in the above formulas with
$\hat h((\A,\B,\C), (\A',\B',\C'))$ given by
\[
    k_\C(\C,\C')
    \inner{\phia(\A) - \emuac(\C)}{\phia(\A') - \emuac(\C')}_{\Ha}
    \inner{\phib(\B) - \emubc(\C)}{\phib(\B') - \emubc(\C')}_{\Hb}
,\]
we have $U = \eKCI$,
$\sigma_{\Hone}^2$ is $\hat\sigma_{\Hone}^2$,
and $\SNR$ is $\eSNR$.

\begin{theorem}\label{thm:snr-gen}
Let $h_\omega : \mathcal X \times \mathcal X \to \R$
be a set of functions for each $\omega \in \Omega$ such that:
\begin{enumerate}[label=(\roman*)]
    \item
        The $h_\omega$ are uniformly bounded:
        $\sup_{\omega \in \Omega} \sup_{x, x' \in \mathcal X} \abs{h_\omega(x, x')} \le \rho$
        for some $1 \le \rho < \infty$.
    \item
        $\Omega$ is a subset of some $D$-dimensional Banach space, 
        and $\sup_{\omega \in \Omega} \norm\omega \le R$.
    \item
        The functions are Lipschitz in their parameterization:
        there is some $L < \infty$ such that
        for all $x, x' \in \mathcal X$
        and $\omega, \omega' \in \Omega$,
        $\abs{h_\omega(x, x') - h_{\omega'}(x, x')} \le L \norm{\omega - \omega'}$.
\end{enumerate}
Use $U_n^{(\omega)}$ and similar to denote the quantities defined above with the function $h_\omega$.
Let $\bar\Omega_s \subseteq \Omega$
be a set of parameters for which $\sigma_{\Hone}^{(\omega)} \ge s$.
Take $\lambda = \ell n^{-1/3}$.
Then, with probability at least $1 - \delta$,
\[
    \sup_{\omega \in \bar\Omega_s} \!
    \abs{\SNR^{(\omega)}_{n,\lambda} - \SNR^{(\omega)}}\!
    \le \!
    \frac{\rho}{s^2 n^{1/3}} \!
    \left[
        \frac{\ell}{2 s}
        \! + \! \left[
            \frac{448 \rho}{\sqrt{\ell}}
            \! + \! \frac{2 s}{n^{1/6}}
        \right] \!\!
        \left[ L \! + \! \sqrt{2 \log \frac4\delta \!+\! 2 D \log(4 R \sqrt n)} \right] \!\!
        + \! \frac{72 \rho^2}{\sqrt{\ell n}}
    \right] \!\!
.\]
Thus,
treating $\rho$ and $\ell$ as constants,
we have that
\[
    \sup_{\omega \in \bar\Omega_s}
    \abs{\SNR^{(\omega)}_{n,\lambda} - \SNR^{(\omega)}}
    = \mathcal O\left(
        \frac{1}{s^2 n^{1/3}} \left[
        \frac1s
        +
        \left(
        1 + \frac{s}{n^{1/6}}
        \right)
        \left[
            L +
            \sqrt{D \log(R n) + \log\frac1\delta}
        \right]
        \right]
    \right)
.\]
This further implies that if $\SNR^{(\omega)}$ has a unique maximizer $\omega^* \in \bar\Omega_s$,
the sequence of empirical minimizers of $\SNR_{n,\lambda=\ell n^{-\frac13}}^{(\omega)}$ converges in probability to $\omega^*$.
\end{theorem}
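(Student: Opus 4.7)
The plan is to follow the now-standard template for uniform signal-to-noise type bounds pioneered by \citet{liu:deep-testing}: obtain pointwise concentration for both $U_n^{(\omega)}$ and the biased variance estimator $\sigma_{\Hone,n}^{(\omega)2}$, extend to a uniform bound over $\Omega$ via a covering argument in the Banach norm, and then transfer uniform convergence of the SNR to convergence of its argmax. The only real change from their proof is that our $h_\omega$ need not arise from a squared MMD; what we use is only its uniform boundedness and Lipschitz dependence on $\omega$, both of which match hypotheses (i)--(iii).

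First I would decompose
\[
\SNR_{n,\lambda}^{(\omega)} - \SNR^{(\omega)}
= \frac{U_n^{(\omega)} - U^{(\omega)}}{\sqrt{\sigma_{\Hone,n}^{(\omega)2} + \lambda}}
+ U^{(\omega)}\!\left(\frac{1}{\sqrt{\sigma_{\Hone,n}^{(\omega)2} + \lambda}} - \frac{1}{\sigma_{\Hone}^{(\omega)}}\right)\!,
\]
controlling the second term by $|U^{(\omega)}| \le \rho$ together with the mean-value-theorem bound $|a^{-1/2} - b^{-1/2}| \le \tfrac12 (\min(a,b))^{-3/2} |a-b|$. On $\bar\Omega_s$ the denominator $\sigma_{\Hone}^{(\omega)}$ is at least $s$, and a preliminary high-probability step shows $\sigma_{\Hone,n}^{(\omega)2} + \lambda \gtrsim s^2$, so the residual scales like $s^{-3}|\sigma_{\Hone,n}^{(\omega)2} - \sigma_{\Hone}^{(\omega)2}| + s^{-3}\lambda$. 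For the pointwise rates, $U_n^{(\omega)}$ is a bounded-kernel second-order $U$-statistic, so \citet{hoeffding}'s inequality yields $|U_n^{(\omega)} - U^{(\omega)}| = O(\rho/\sqrt n)$ with high probability. The variance estimator $\sigma_{\Hone,n}^{(\omega)2}$ is a biased degree-three $V$-statistic; after checking that replacing any single $X_i$ changes it by $O(\rho^2/n)$, McDiarmid's bounded-differences inequality yields $|\sigma_{\Hone,n}^{(\omega)2} - \sigma_{\Hone}^{(\omega)2}| = O(\rho^2/\sqrt n)$ up to an $O(\rho^2/n)$ bias.

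To pass to uniformity, I would build an $\epsilon$-net $\mathcal N_\epsilon \subset \Omega$ of size at most $(3R/\epsilon)^D$ by the usual volume bound in a $D$-dimensional Banach space, union-bound the two pointwise concentrations over $\mathcal N_\epsilon$, and use hypothesis (iii) to transfer: for any $\omega \in \Omega$, picking its nearest net point $\omega'$, both $|U_n^{(\omega)} - U_n^{(\omega')}|$ and $|\sigma_{\Hone,n}^{(\omega)2} - \sigma_{\Hone,n}^{(\omega')2}|$ are bounded by $O(L\rho) \cdot \epsilon$. Choosing $\epsilon \sim 1/\sqrt n$ balances approximation error against the $D\log(R/\epsilon)$ cost from the union bound, producing a uniform rate of $(L + \sqrt{D\log(Rn) + \log(1/\delta)})/\sqrt n$ for both quantities. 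Plugging these into the SNR decomposition, and using the specific choice $\lambda = \ell n^{-1/3}$ to balance the $s^{-3}\lambda$ bias term against the $s^{-3} \cdot O(\rho^2/\sqrt n)/\sqrt\lambda$ contribution that appears once one factors out the $\sqrt{\sigma_{\Hone,n}^{(\omega)2} + \lambda}$ denominators, gives the claimed $n^{-1/3}$ rate with the stated dependence on $\rho$, $\ell$, $s$, $L$, $D$, $R$, and $\delta$. Convergence of empirical argmaxima to $\omega^*$ then follows from uniform convergence combined with a standard continuous-argmax argument applied to the restriction of $\SNR^{(\omega)}$ to a closed neighborhood of $\omega^*$ inside $\bar\Omega_s$. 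The main obstacle is the variance concentration step: $\sigma_{\Hone,n}^{(\omega)2}$ is neither an unbiased $U$-statistic nor linear in the samples, and obtaining a clean bounded-differences constant of the correct order requires carefully tracking how many of the $n^3$ (resp.\ $n^4$) summands in its two defining pieces involve a fixed $X_i$, together with the cancellations between those pieces; getting the constants to align with the explicit $448$ and $72$ in the statement is the most delicate bookkeeping in the argument.
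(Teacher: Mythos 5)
Your proposal follows the same high-level route as the paper's proof: decompose the SNR difference into a numerator error plus denominator error, obtain pointwise concentration (Hoeffding for $U_n$, bounded differences for $\sigma_{\Hone,n}^2$), lift to a uniform bound over $\Omega$ via an $\epsilon$-net with $\epsilon \sim 1/\sqrt n$, plug in $\lambda = \ell n^{-1/3}$, and finish with a standard argmax-consistency argument. The paper simply delegates the concentration-plus-covering step to Propositions 15 and 16 of \citet{liu:deep-testing} (replacing their constants), which is exactly the mechanism you re-sketch, so that part matches.

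The one place where you deviate is the decomposition. You write a two-term split with denominator differences $\frac{1}{\sqrt{\sigma_{\Hone,n}^{(\omega)2}+\lambda}} - \frac{1}{\sigma_\Hone^{(\omega)}}$ and then apply a mean-value-theorem bound involving $(\min(a,b))^{-3/2}$; as you note, this forces you to first establish a high-probability lower bound $\sigma_{\Hone,n}^{(\omega)2}+\lambda \gtrsim s^2$ before you can control that factor. The paper avoids this preliminary step by splitting into three terms: $\frac{1}{\sigma_{\Hone,n,\lambda}} - \frac{1}{\sigma_{\Hone,\lambda}}$, $\frac{1}{\sigma_{\Hone,\lambda}} - \frac{1}{\sigma_\Hone}$, and $\frac{U_n - U}{\sigma_\Hone}$, then writing the first difference as $\frac{(\sigma_{\Hone,n,\lambda})^2 - (\sigma_{\Hone,\lambda})^2}{\sigma_{\Hone,n,\lambda}\,\sigma_{\Hone,\lambda}\,(\sigma_{\Hone,n,\lambda}+\sigma_{\Hone,\lambda})}$. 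The key observation is that $\sigma_{\Hone,n,\lambda} \ge \sqrt\lambda$ deterministically and $\sigma_{\Hone,\lambda} \ge s$ on $\bar\Omega_s$ by definition, so every denominator is lower-bounded without any preliminary concentration event, and the $\sqrt\lambda$ factor is precisely what the $\lambda = \ell n^{-1/3}$ choice is tuned against. So your approach is correct and would yield the same rate, but the paper's three-term split is slightly cleaner because it removes the extra high-probability conditioning and keeps the $s$- versus $\sqrt\lambda$-dependence transparent term by term. Everything else — the $(3R/\epsilon)^D$ covering number, the $O(L\rho\epsilon)$ Lipschitz transfer for both statistics, the argmax step — is as in the paper.
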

The assumptions in \cref{thm:snr-gen} agree with those of \citet{liu:deep-testing}.
Their Appendix A.4's bounds on $L$
directly apply to the $\hat h$ of $\eKCI$
if we only consider changing $k_\C$,
as we do in our experiments.
These techniques could be readily adapted to
changing other parameters,
whether $k_\A$ or $k_\B$ (if the regressions are also updated appropriately)
or parameters inside $\emuac$ and $\emubc$.
We emphasize, however, that doing so only increases $\eSNR$;
any of these operations could increase the probability of rejecting the null under the alternative,
but they will \emph{also} increase the probability of rejecting the null under the null, further inflating Type-I error.
\begin{proof}
    Let $\sigma_{\Hone,n,\lambda}^2 = \sigma_{\Hone,n}^2 + \lambda$
    and $\sigma_{\Hone,\lambda}^2 = \sigma_{\Hone}^2 + \lambda$.
    We begin with the decomposition
\begin{multline*}
     \sup_{\omega \in \bar\Omega_s}
     \lvert \SNR^{(\omega)}_{n,\lambda} - \SNR^{(\omega)} \rvert
   = \sup_{\omega \in \bar\Omega_s}
     \abs*{
        \frac{U_n^{(\omega)}}{\sigma_{\Hone,n,\lambda}^{(\omega)}}
      - \frac{U  ^{(\omega)}}{\sigma_{\Hone          }^{(\omega)}}
     }
\\ \le 
     \sup_{\omega \in \bar\Omega_s}
     \abs*{
        \frac{U_n^{(\omega)}}{\sigma_{\Hone,n,\lambda}^{(\omega)}}
      - \frac{U_n^{(\omega)}}{\sigma_{\Hone  ,\lambda}^{(\omega)}}
     }
   + \sup_{\omega \in \bar\Omega_s}
     \abs*{
        \frac{U_n^{(\omega)}}{\sigma_{\Hone,  \lambda}^{(\omega)}}
      - \frac{U_n^{(\omega)}}{\sigma_{\Hone          }^{(\omega)}}
     }
   + \sup_{\omega \in \bar\Omega_s}
     \abs*{
        \frac{U_n^{(\omega)}}{\sigma_{\Hone          }^{(\omega)}}
      - \frac{U  ^{(\omega)}}{\sigma_{\Hone          }^{(\omega)}}
     }
.\end{multline*}
Now
notice that $\abs{U_n^{\omega}} \le \rho$,
$\sigma_{\Hone,\lambda}^{(\omega)} \ge \sqrt{s^2 + \lambda} \ge s$,
and $\sigma_{\Hone,n,\lambda}^{(\omega)} \ge \sqrt\lambda$.
Hence the first term is
\begin{align*}
     \sup_{\omega \in \bar\Omega_s}
     \abs*{
        \frac{U_n^{(\omega)}}{\sigma_{\Hone,n,\lambda}^{(\omega)}}
      - \frac{U_n^{(\omega)}}{\sigma_{\Hone  ,\lambda}^{(\omega)}}
     }
  &=
     \sup_{\omega \in \bar\Omega_s}
        \;
        \abs{U_n^{(\omega)}}
        \;
        \frac{1}{\sigma_{\Hone,n,\lambda}^{(\omega)}}
        \;
        \frac{1}{\sigma_{\Hone  ,\lambda}^{(\omega)}}
        \;
     \frac{\abs{
        (\sigma_{\Hone,n,\lambda}^{(\omega)})^2
      - (\sigma_{\Hone  ,\lambda}^{(\omega)})^2
     }}{
        \sigma_{\Hone,n,\lambda}^{(\omega)}
      + \sigma_{\Hone  ,\lambda}^{(\omega)}
     }
\\&\le
        \frac{\rho}{\sqrt{\lambda} \sqrt{s^2 + \lambda} (\sqrt{s^2 + \lambda} + \sqrt\lambda)}
     \sup_{\omega \in \bar\Omega_s}
     \abs{
        (\sigma_{\Hone,n,\lambda}^{(\omega)})^2
      - (\sigma_{\Hone  ,\lambda}^{(\omega)})^2
     }
\\&\le
        \frac{\rho}{s^2 \sqrt\lambda}
     \sup_{\omega \in \bar\Omega_s}
     \abs{(\sigma_{\Hone,n,\lambda}^{(\omega)})^2
      - (\sigma_{\Hone  ,\lambda}^{(\omega)})^2
     }
,\end{align*}
the second is
\begin{align*}
     \sup_{\omega \in \bar\Omega_s}
     \abs*{
        \frac{U_n^{(\omega)}}{\sigma_{\Hone,  \lambda}^{(\omega)}}
      - \frac{U_n^{(\omega)}}{\sigma_{\Hone          }^{(\omega)}}
     }
  &=
     \sup_{\omega \in \bar\Omega_s}
     \abs{U_n^{(\omega)}}
     \;
     \frac{1}{\sigma_{\Hone,  \lambda}^{(\omega)}}
     \frac{1}{\sigma_{\Hone          }^{(\omega)}}
     \abs*{
        \frac{
            (\sigma_{\Hone,  \lambda}^{(\omega)})^2
          - (\sigma_{\Hone          }^{(\omega)})^2
        }{
            \sigma_{\Hone,  \lambda}^{(\omega)}
          + \sigma_{\Hone          }^{(\omega)}
        }
     }
   \le
     \frac{\rho \lambda}{2s^3}
,\end{align*}
and the third is
\begin{align*}
     \sup_{\omega \in \bar\Omega_s}
     \abs*{
        \frac{U_n^{(\omega)}}{\sigma_{\Hone          }^{(\omega)}}
      - \frac{U  ^{(\omega)}}{\sigma_{\Hone          }^{(\omega)}}
     }
  &=
     \sup_{\omega \in \bar\Omega_s}
     \frac{1}{\sigma_{\Hone}^{(\omega)}}
     \abs*{U_n^{(\omega)} - U  ^{(\omega)}}
  \le
     \frac1s
     \sup_{\omega \in \bar\Omega_s}
     \abs*{U_n^{(\omega)} - U  ^{(\omega)}}
.\end{align*}
Thus we have reduced to needing uniform convergence of
$U_n$ and $\sigma_{\Hone,n,\lambda}^2$.

Propositions 15 and 16 of \citet{liu:deep-testing} show this,
up to replacing their $\nu$ with our $\rho/4$,
their $R_\Omega$ with our $R$,
and their $L_k$ with our $L / 4$;
this can be seen by inspecting the proofs.
The results become
\begin{gather*}
    \Pr\left(
    \sup_{\omega \in \Omega}
    \abs*{U_n^{(\omega)} - U^{(\omega)}}
    \le \frac{2}{\sqrt n}\left[
        \rho \sqrt{2 \log \frac2\delta + 2 D \log(4 R \sqrt n)}
        + L
    \right]
    \right) \ge 1 - \delta
    \\
    \Pr\left(
    \sup_{\omega \in \Omega}
     \abs*{
        (\sigma_{\Hone,n,\lambda}^{(\omega)})^2 \!
      - \! (\sigma_{\Hone  ,\lambda}^{(\omega)})^2
     } \!
    \le \! \frac{64}{\sqrt n}\left[
        7 \sqrt{2 \log \frac2\delta \!+\! 2 D \log(4 R \sqrt n)}
        \!+\! \frac{9 \rho^2}{8 \sqrt n}
        \!+\! \frac12 L \rho
    \right]
    \right) \!\ge\! 1 - \delta
.\end{gather*}
Combining the results, it holds with probability at least $1-\delta$
that the worst-case error
$\sup_{\omega \in \bar\Omega_s} \abs{\SNR_{n,\lambda}^{(\omega)} - \SNR^{(\omega)}}$
is at most
\[
    \frac{\rho \lambda}{2 s^3}
    + \left[
        \frac{2\rho}{s \sqrt n}
        + \frac{448 \rho}{s^2 \sqrt{\lambda n}}
    \right]
    \sqrt{2 \log \frac4\delta + 2 D \log(4 R \sqrt n)}
    + \left[ \frac{2}{s \sqrt n} + \frac{32 \rho^2}{s^2 \sqrt{\lambda n}} \right] L
    + \frac{72 \rho^3}{s^2 n \sqrt\lambda}
.\]
Plugging in $\lambda = \ell n^{-\frac13}$ yields
\[
    \frac{\rho \ell}{2 s^3 n^{\frac13}}
    + \left[
        \frac{2\rho}{s \sqrt n}
        + \frac{448 \rho}{\sqrt{\ell} s^2 n^{\frac13}}
    \right]
    \sqrt{2 \log \frac4\delta + 2 D \log(4 R \sqrt n)}
    + \left[ \frac{2}{s \sqrt n} + \frac{32 \rho^2}{\sqrt\ell s^2 n^{\frac13}} \right] L
    + \frac{72 \rho^3}{\sqrt\ell s^2 n^{\frac56}}
.\]
We can use our assumption
$\rho \ge 1$
and that $448 > 32$ to get a slightly looser but simpler upper bound of
\[
    \frac{\rho \ell}{2 s^3 n^{1/3}}
    + \left[
        \frac{2\rho}{s \sqrt n}
        + \frac{448 \rho^2}{\sqrt{\ell} s^2 n^{1/3}}
    \right]
    \left[ L + \sqrt{2 \log \frac4\delta + 2 D \log(4 R \sqrt n)} \right]
    + \frac{72 \rho^3}{\sqrt\ell s^2 n^{5/6}}
,\]
which reduces to the result in the theorem statement.

The final result is a standard consequence of the prior statement,
as in Corollary 12 of \citet{liu:deep-testing}.
\end{proof}

\section{U-Statistic Moments for Hilbert Space Kernels} \label{sec:u-stat-moments}
\ustatmoments*
\begin{proof}
$U_n$ is the definition of a second-order $U$-statistic.
We have that
\begin{align*}
     \E U_n
  &= \E_{X,X'} h(X, X')
\\&= \E_{X,X'} \inner{\phi_h(X)}{\phi_h(X')}
\\&= \inner{\E_X \phi_h(X)}{\E_{X'} \phi_h(X)}
   = \inner{\mu_h}{\mu_h}
   = \norm{\mu_h}^2
\end{align*}
when $\mu_h$ exists in the Bochner sense,
$\E \norm{\phi_h(X)} < \infty$.

For the variance, it is a standard result that \citep[e.g.][Section 5.2.1]{serfling}:
\[
    \Var (U_n) 
    = \frac{4(n-2)}{n(n-1)} \Var_{X} \Bigl[ 
    \E_{X'\mid X} [\, h(X, X') \mid X \,] \,
    \Bigr] +
    \frac{2}{n(n-1)} \Var \Bigl[ h(X, X') \Bigr]
.\]
Using the law of total variance,
\[
    \Var \big[ h(X, X') \big]
    =
    \Var_{X} \Bigl[  \E_{X'\mid X} [\, h(X, X') \mid X \,]   \Bigr]
    + \E_{X} \Bigl[  \Var_{X'\mid X} [\, h(X, X') \mid X \,] \Bigr]
\]
and so
\[
    \Var (U_n) 
    = \frac{4n-6}{n(n-1)} \Var_{X} \Bigl[ 
    \E_{X'\mid X} [\, h(X, X') \mid X \,] \,
    \Bigr] +
    \frac{2}{n(n-1)}
    \E_{X} \Bigl[  \Var_{X'\mid X} [\, h(X, X') \mid X \,] \Bigr]
.\]
We can now compute that
\begin{align*}
     \E_{X' \mid X} [h(X, X') \mid X ]
  &= \inner{\phi_h(X)}{\mu_h}
\\   
    \Var_X\Bigl[ \E_{X' \mid X} [h(X, X') \mid X]\Bigr]
  &= \E_X\Bigl[ \bigl( \E_{X' \mid X} [h(X, X') \mid X ] \bigr)^2 \Bigr]
   - \Bigl( \E_X \E_{X' \mid X} [h(X, X') \mid X ] \Bigr)^2
\\&= \E_X \inner{\phi_h(X)}{\mu_h}^2
   - \inner{\mu_h}{\mu_h}^2
\\&= \E_X \inner{\mu_h}{\phi_h(X)} \inner{\phi_h(X)}{\mu_h}
   - \inner{\mu_h}{\mu_h} \inner{\mu_h}{\mu_h}
\\&= \E_X \inner*{\mu_h}{\bigl(\phi_h(X) \otimes \phi_h(X) - \mu_h \otimes \mu_h \bigr) \mu_h}
\\&= \inner*{\mu_h}{\E_X \bigl[\phi_h(X) \otimes \phi_h(X) - \mu_h \otimes \mu_h \bigr] \mu_h}
\\&= \inner*{\mu_h}{\mathfrak C_h \mu_h}
   = \nu_1
.\end{align*}

The remaining term is given by
\begin{align*}
     \Var_{X' \mid X}\bigl[ h(X, X') \mid X \bigr]
  &= \E_{X' \mid X}\bigl[ h(X, X')^2 \mid X \bigr]
   - \Bigl( \E_{X'\mid X}\bigl[ h(X, X') \mid X \bigr] \Bigr)^2
\\&= \E_{X' \mid X} \inner{\phi_h(X)}{\phi_h(X')} \inner{\phi_h(X')}{\phi_h(X)}
   - \Bigl( \E_{X'\mid X} \inner{\phi_h(X)}{\phi_h(X')} \Bigr)^2
\\&= \E_{X'\mid X} \inner{\phi_h(X)}{\phi_h(X')} \inner{\phi_h(X')}{\phi_h(X)}
   - \inner{\phi_h(X)}{\mu_h} \inner{\phi_h(X)}{\mu_h}
\\&= \inner*{\phi_h(X)}{\bigl(\E_{X'\mid X} \phi_h(X') \otimes \phi_h(X') - \mu_h \otimes \mu_h \bigr) \phi_h(X)}
\\&= \inner*{\phi_h(X)}{\mathfrak C_h \phi_h(X)}
\\
     \E_X \Var_{X'\mid X}\bigl[ h(X, X') \mid X \bigr]
  &= \E_X \inner*{\phi_h(X)}{\mathfrak C_h \phi_h(X)}
\\&= \E_X \inner*{\phi_h(X) \otimes \phi_h(X)}{\mathfrak C_h}_{\HS}
\\&= \inner*{\E_X \phi_h(X) \otimes \phi_h(X)}{\mathfrak C_h}_{\HS}
\\&= \inner*{\mathfrak C_h + \mu_h \otimes \mu_h}{\mathfrak C_h}_{\HS}
   = \nu_1 + \nu_2
.\end{align*}
Combining,
we find that
\begin{align*}
     \Var (U_n) 
  &= \frac{4n-6}{n(n-1)} \nu_1
   + \frac{2}{n(n-1)} (\nu_1 + \nu_2)
   = \frac{4}{n} \nu_1
   + \frac{2}{n(n-1)} \nu_2
.\qedhere\end{align*}
\end{proof}

\section{Analytical Example} \label{sec:analytical}

\subsection{With correct regressions} \label{sec:analytical-derivation}
\paragraph{KCI as an expectation under a bivariate Gaussian.}
Under the assumption of linear kernels \(\phia(\ea )=\ea \) and \(\phib(\eb )=\eb\), the conditional cross-covariance operator  with correct regressions  can be written as:
\begin{align*}
    \Cabc 
    &= \E_{\A\B \mid \C} \left[ (\A - \muac(\C))(\B - \mubc(\C)) \mid \C \right] \\
    &= \E_{\A\B\mid\C} \big[ (\A-f_\A(\C)) \, (\B-f_\B(\C)) \mid \C \, \big] \\
    &= \tau^2 E_{\A\B\mid\C} \big[ \ra \, \rb \mid \C \, \big]  \\
    &= \tau^2 \gamma(\C)
\end{align*}

Since \(\C, \C' \sim \mathcal{N}(0,1)\) independently, and
\(
k_\C(\ec, \ec') = \exp\left(-\frac{(\ec - \ec')^2}{2\sigmac}\right)
\),
then the KCI statistic becomes:
\begin{align*}
    \KCI = \tau^4 \E_{\C,\C'}\big[ \kC  \gamma(\C)  \gamma(\C')  \big].
\end{align*}
Using the fact that \(\C, \C'\) are independent standard Gaussians, this becomes:
\begin{align*}
    \KCI
    &= \tau^4 \iint \frac{1}{2\pi} \exp\left(-\frac{\ec^2+\ec'^2}{2}\right) \exp\left(-\frac{(\ec-\ec')^2}{2\sigmac}\right) \gamma(\ec)\gamma(\ec') d\ec d\ec' \\
    &= \tau^4 \iint  \frac{1}{2\pi} \exp\left(-\frac{(\sigmac+1)\ec^2 - 2\ec\ec' + (\sigmac+1)\ec'^2}{2\sigmac}\right) \gamma(\ec)\gamma(\ec') d\ec d\ec' 
,\end{align*}
Define the vector $\mathbf{x} = \begin{bmatrix} \ec \\ \ec' \end{bmatrix}$, and write the integrand as a bivariate Gaussian density with covariance matrix $\boldsymbol{\Sigma}$. That is,
\[
\KCI = \tau^4 \sqrt{\frac{\sigmac}{\sigmac + 2}}\int_{\mathbb{R}^2} \phi_{\boldsymbol{\Sigma}}(\ec, \ec')\, \gamma(\ec)\, \gamma(\ec')\, d\ec d\ec',
\]
where $\phi_{\boldsymbol{\Sigma}}$ denotes the bivariate normal density with zero mean and covariance matrix
\[\boldsymbol{\Sigma}
= \begin{bmatrix}
\frac{\sigmac + 1}{\sigmac + 2} & \frac{1}{\sigmac + 2} \\
\frac{1}{\sigmac + 2} & \frac{\sigmac + 1}{\sigmac + 2}
\end{bmatrix}, \quad
\det(\boldsymbol{\Sigma}) = \frac{\sigmac}{\sigmac + 2}.\]
We may thus express:
\[\KCI=\tau^4 \sqrt{\frac{\sigmac}{\sigmac + 2}} \E_{(X,X') \sim \N_\sigmacr}
\big[ \gamma(X)  \gamma(X') \big]
,\]
with auxiliary variables \((X, X')
\sim
\N_\sigmacr \defeq \N\left(\begin{bmatrix}0 \\ 0\end{bmatrix}, \begin{bmatrix}1-\frac{1}{\sigmac+2} & \frac{1}{\sigmac+2}\\ \frac{1}{\sigmac+2} & 1-\frac{1}{\sigmac+2}\end{bmatrix}\right)
\).

\paragraph{Exact expression for \(\KCI\)}
We can analytically compute both the population \(\KCI\) value and its variance to generate the theoretical curve shown in \cref{fig:kci_vs_sigmac}.
Under the alternative hypothesis, suppose the conditional dependence takes the form $\gamma(X) = \sin(\beta X)$. Then the \(\KCI\) statistic becomes:
\begin{align*}
    \KCI 
    &= \tau^4 \sqrt{\frac{\sigmac}{\sigmac + 2}} \E_{(X,X') \sim \N_\sigmacr}
        \big[ \sin(\beta X) \sin(\beta X') \big]\\
    &= \frac{\tau^4 }{2}\sqrt{\frac{\sigmac}{\sigmac + 2}} \E_{(X,X') \sim \N_\sigmacr}
        \big[ \cos(\beta (X-X')) - \cos(\beta (X+X')) \big]
,\end{align*}
Now note that $X - X'$ and $X + X'$ are linear functions of a jointly Gaussian vector and hence are Gaussian themselves. Since $(X, X') \sim \mathcal{N}(0, \boldsymbol{\Sigma})$, the random variables
$Z_1 = X - X'$, $Z_2 = X + X'$
are zero-mean and have variances:
\[
\Var(Z_1) = 2 \left(1 - \frac{2}{\sigmac + 2} \right), \qquad
\Var(Z_2) = 2.
\]
We now compute the expectations using the identity for the cosine of a Gaussian:
\(
\mathbb{E}[\cos(\beta Z)] = \exp\left(-\frac{1}{2} \beta^2 \Var(Z)\right).
\)
Thus,
\begin{align*}
\mathbb{E}_{X, X'} \left[ \cos\big(\beta(X - X')\big) \right] &= \exp\left(-\beta^2 \left(1 - \frac{2}{\sigmac + 2} \right) \right), \\
\mathbb{E}_{X, X'} \left[ \cos\big(\beta(X + X')\big) \right] &= \exp\left(-\beta^2 \right).
\end{align*}
Substituting into the expression for \(\KCI\), we obtain:
\begin{align*}
\KCI
&= \frac{\tau^4}{2} \sqrt{\frac{\sigma_C}{\sigma_C + 2}}\, \left(
\exp\left(-\beta^2 \left(1 - \frac{2}{\sigma_C + 2} \right)\right) - \exp\left(-\beta^2 \right)
\right) \\
&= \frac{\tau^4}{2} \exp(-\beta^2)\, \sqrt{\frac{\sigma_C}{\sigma_C + 2}}\, \left(
\exp\left(\frac{2 \beta^2}{\sigma_C + 2}\right) - 1
\right).
\end{align*}

\paragraph{Exact expression of variance of \(\KCI_n\).}
The variance of the U-statistic \(\KCI_n\) can be decomposed into three components, as described in \cref{sec:u-stat-moments}.
We now provide exact expressions for each term under the alternative hypothesis.
\begin{align*}
    \varc \defeq &\E_{\A\B\C}\left[
        \left(\E_{\A'\B'\C'}\left[
            h(\A\B\C,\A'\B'\C') \mid \A\B\C
        \right]\right)^2
    \right] \\
    =& \E_{\A\B\C}\left[
        \left(\E_{\A'\B'\C'}\left[
            \tau^4 \kC \ra\rb\ra'\rb' \mid \A\B\C
        \right]\right)^2
    \right] \\
    =&  \tau^8 \E_{\C}\left[\E_{\A\B}[\ra^2\rb^2 \mid \C]
        \left(\E_{\C'} \left[ \kC  \E_{\A'\B'\mid\C'}
             [\ra'\rb' \mid \C']
        \right]\right)^2
    \right] \\
    =&  \tau^8 \E_{\C}\left[ \big(1+2\gamma^2(\beta \C)\big)
        \left(\E_{\C'} \left[ \kC 
            \gamma(\beta \C')
        \right]\right)^2
    \right] \\
    =& \frac{\tau^8 \sigmac}{\sqrt{(\sigmac+1)(\sigmac+3)}} \E_{X\sim\N(0,\frac{\sigmac+1}{\sigmac+3})}\left[ 
        \big(1+2\gamma^2(\beta X)\big)
        \left(\E_{X'\sim\N(\frac{X}{\sigmac+1}, \frac{\sigmac}{\sigmac+1})}
        \left[ 
            \gamma(\beta X')
        \right]\right)^2
    \right] \\
    =& \frac{\tau^8 \sigmac \exp(-\frac{\beta^2 \sigmac}{\sigmac+1})}{\sqrt{(\sigmac+1)(\sigmac+3)}} 
        \Bigg(
            1 - \exp\Big(- \frac{2\beta^2}{(\sigmac+1)(\sigmac+3)}\Big)
            - \frac{1}{2}\exp\Big(-\frac{2\beta^2(\sigmac+1)}{\sigmac+3}\Big) \\
            &+ \frac{1}{4}\exp\Big(-\frac{2\beta^2 (\sigmac+2)^2}{(\sigmac+1)(\sigmac+3)} \Big) 
            + \frac{1}{4} \exp\Big(-\frac{2\beta^2\ell_\C^4}{(\sigmac+1)(\sigmac+3)}\Big)
        \Bigg)
.\end{align*}
Besides,
\begin{align*}
    \varm \defeq \left(\E_{\A\B\C,\A'\B'\C'} \big[ h(\A\B\C,\A'\B'\C') \big]\right)^2 = \KCI^2
.\end{align*}
Also,
\begin{align*}
    \vars \defeq &\E \big[ h^2(\A\B\C,\A'\B'\C') \big] \\
    =& \tau^8 \E \big[
        \kcp^2(\C,\C')  \ra^2\rb^2\ra'^2\rb'^2
    \big] \\
    =& \tau^8 \E_{\C,\C'} \big[
        \kcp^2(\C,\C') \E_{\A,\B}[\ra^2\rb^2\mid \C] \E_{\A',\B'}[\ra'^2\rb'^2 \mid \C']
    \big] \\
    =& \tau^8 \E_{\C,\C'} \left[ \exp\Big(\frac{(\C-\C')^2}{\sigmac}\Big)
        (1+2\gamma^2(\C))(1+2\gamma^2(\C'))
    \right] \\
    =& \tau^8 \sqrt{\frac{\sigmac}{\sigmac+4}} \Bigg(
        4 -2 \exp\Big(-\frac{2\beta^2(\sigmac+2)}{\sigmac+4}\Big) + \exp\Big(-\frac{2\beta^2\sigmac}{\sigmac+2}\Big)  \\
        & \cdot \Big( - 2
        \exp\Big(\frac{-8\beta^2}{(\sigmac+2)(\sigmac+4)}\Big)
        + \frac{1}{2} \exp\Big(\frac{-2\beta^2(\sigmac+4)}{\sigmac+2}\Big)
        + \frac{1}{2} \exp\Big(\frac{-2\beta^2 \ell_C^4}{(\sigmac+2)(\sigmac+4)}\Big)
        \Big)
    \Bigg)
.\end{align*}
Therefore, the variance can be obtained by combining those three terms together: 
\[\Var(\KCIn) = \frac{(4n-8) \varc - (4n-6)\varm + 2\vars}{n(n-1)}.\]

\subsection{With regression errors} \label{sec:analytical:deltas}
Suppose the conditional mean embeddings have errors, that \(\emuac = \muac + \detac\) and \(\emubc = \mubc + \detbc\), where \(\detac\) and \(\detbc\) denote the respective regression errors.
Then the conditional cross-covariance operator becomes:
\begin{align*}
    \eCabc(\C)
    &= \E_{\A\B \mid \C} \left[ (\A - \emuac(\C))(\B - \emubc(\C)) \mid \C \right] \\
    &= \E_{\A\B\mid\C} \big[ (\A-f_\A(\C) - \detac(\C)) \, (\B-f_\B(\C)-\detbc(\C)) \mid \C \, \big] \\
    &=  E_{\A\B\mid\C} \big[(\tau\ra - \detac(\C)) \, (\tau\rb -\detbc(\C))\mid \C \, \big]  \\
    &= \tau^2 \gamma(\C) + \detac(\C)\detbc(\C)
.\end{align*}
The final equality follows from the assumption that the regression estimates are obtained using an independent training set, and are thus independent of the test-time noise in \(\ra\) and \(\rb\). Consequently, the cross terms involving \(\ra \detbc(\C)\) and \(\rb \detac(\C)\) have zero conditional expectation.

Thus, under the null hypothesis (\(\A \ind \B \mid \C\)), the KCI with noisy conditional means becomes
\begin{align*}
    \eKCI = \E_{\C\C'} \big[\kC \detac(\C)\detbc(\C)\detac(\C')\detbc(\C') \big]
.\end{align*}
Under the null hypothesis, the variance of the U-statistic \(\eKCIn\) decomposes into the following three components, noting that \(\E_{\A\mid \C}[\ra^2]=1\) and \(\E_{\B\mid \C}[\rb^2]=1\):
\begin{align*}
    \varc 
    &= \E_{\A\B\C} \Bigg[
        \Bigg(
        \E_{\A'\B'\C'} \Big[
            \kC \, 
            (\tau\ra - \detac(\C))(\tau\rb - \detbc(\C)) \\
            &\qquad\qquad\qquad\qquad\qquad\quad\quad \cdot (\tau\ra' - \detac(\C'))(\tau\rb' - \detbc(\C'))
        \Big] 
        \Bigg)^2
    \Bigg] \\
    =&  \E_{\A\B\C}\left[(\tau\ra - \detac(\C))^2(\tau\rb - \detbc(\C))^2
        \left(\E_{\C'} \left[ \kC  \detac(\C')\detbc(\C')
        \right]\right)^2 \right]
     \\
    =& \E_\C\left[
        (\tau^2 + \detac^2(\C))(\tau^2+\detbc^2(\C))
       \left( \E_{\C'} \left[ \kC  \detac(\C')\detbc(\C')
        \right] \right)^2 
    \right]
.\end{align*}
\begin{align*}
    \varm =\left( \E_{\C\C'} \big[\kC \detac(\C)\detbc(\C)\detac(\C')\detbc(\C') \big] \right)^2
.\end{align*}
\begin{align*}
    \vars &= \E \left[
        \kcp^2(\C,\C') (\tau\ra - \detac(\C))^2 \, (\tau\rb -\detbc(\C))^2
         (\tau\ra' - \detac(\C'))^2 \, (\tau\rb' -\detbc(\C'))^2
    \right] \\
    &= \E \left[
        \kcp^2(\C,\C') (\tau^2 + \detac^2(\C))(\tau^2 + \detbc^2(\C))
        (\tau^2 + \detac^2(\C'))(\tau^2 + \detbc^2(\C'))
        \right]
.\end{align*}
Combining these three terms we get:
\[\Var(\eKCIn) = \frac{(4n-8) \varc - (4n-6)\varm + 2\vars}{n(n-1)}.\]

\subsection{Complex conditional dependence scenario (3-dimensional \texorpdfstring{\(C\)}{C})} \label{sec:analytical:3d}
We now extend our motivating example to a more complex setting by considering a three-dimensional conditioning variable \(\C=(\C_1, \C_2, \C_3)\). Specifically, we define random variables:
\begin{align*}
\C \sim \mathcal{N}(0,I_3),  \quad
\A = f_\A(\CA) + \tau \, \ra, \quad
\B = f_\B(\CB) + \tau \, \rb,
\end{align*}
where \(\eA\) and \(\eB\) are indicator vectors selecting specific dimensions of  (one entry equals 1, others 0), determining which dimension influences each variable. 
The additive noise terms \((\ra, \rb)\) are conditionally dependent via on \(\gamma(\CC)\), as previously defined. 

We employ a generalized Gaussian kernel for \(\C\)  with dimension-specific (squared) lengthscales \(\sigmac_i\), as commonly implemented in libraries such as \texttt{sklearn}:
\(\kC = \exp\left(-\sum_{i=1}^3\frac{(\C_i - \C_i')^2}{2\sigmac_i}\right)
\).
We assume regression models are trained specifically on the relevant dimensions (as selected by \(\eA\) and \(\eB\)), effectively ignoring irrelevant or noisy dimensions. 
Thus, \(\detac(\CA)=\emuac(\CA) - \muac(\CA)\) and \(\detbc(\CB)=\emubc(\CB) - \mubc(\CB)\) depend only on the dimensions directly influencing \(\A\) and \(\B\).
The noisy KCI statistic \(\eKCI\), using linear kernels for \(\A\) and \(\B\), becomes:
\begin{align*}
&\eKCI \\ =& 
\E \Big[ \kC
\big(\tau^2 
\gamma(\CC)+\xi(\CA, \CB)
\big)
\big(\tau^2 \gamma(\CC') + \xi(\CA', \CB')\big)
\Big]
\\
=& 
\left(
\prod_{i=1}^3 \sqrt{\frac{\sigmac_i}{\sigmac_i+2}}
\right)
\E \Big[ 
\tau^4 \, \gamma(X_\C) \gamma(X_\C') 
+ \xi(X_\A, X_\B)\, \gamma(X_\C') \\
& \qquad\qquad\qquad\qquad\qquad\qquad\qquad\qquad
+ \xi(X_\A', X_\B') \, \gamma(X_\C) 
+ \xi(X_\A, X_\B) \, \xi(X_\A', X_\B')
\Big]
\end{align*}
where \(X_\C=\eC^\top X\), \(\xi(X_\A, X_\B)=\detac(\eA^\top X)\detbc(\eB^\top X)\). Similar to Appendix \ref{sec:analytical-derivation}, \(X\) is an auxiliary variable, and for \(i=1,2,3\), we have \(X_i,X_i'\sim\N_{\sigmacr_i}\).

Kernel lengthscale selection and regression errors critically influence test performance. We discuss two illustrative scenarios:
\paragraph{Scenario 1: Shared-coordinate dependence (\(\eA=\eB=\eC\)):}.
When \(\A\), \(\B\), and their conditional dependence all rely on the same coordinate (e.g., \(\C_1\)).
The KCI is
\[\eKCI = \left(\prod_{i=1}^3\sqrt{\frac{\sigmac_i}{\sigmac_i+2}}\right)
\E \Big[ 
\tau^4
\gamma(X_1) \gamma(X_1') +\xi(X_1, X_1)\gamma(X_1')
+ \xi(X_1', X_1')\gamma(X_1)
 + \xi(X_1, X_1) \xi(X_1', X_1')
\Big].\]
Specifically, under the null hypothesis, the regression error will leak "dependence" into the test,
\[\eKCI =
\left(\prod_{i=1}^3\sqrt{\frac{\sigmac_i}{\sigmac_i+2}}\right)
\E\big[ \xi(X_1, X_1) \xi(X_1', X_1') \big].\]
Explicitly, this is:
\[\eKCI =
\left(\prod_{i=1}^3\sqrt{\frac{\sigmac_i}{\sigmac_i+2}}\right)
\E\Big[\detac(X_1)\detac(X_1')
\;
\detbc(X_1)\detbc(X_1')\Big].\]

\paragraph{Scenario 2: Separate-coordinate dependence (distinct \(\eA, \eB, \eC\))}.
When \(\A, \B\) and their conditional dependence each utilize distinct coordinates (e.g., \(\C_1, \C_2, \C_3\) respectively),  
because of the independence between \((X_\A, X_\B)\) and \(X_\C\), the KCI becomes:
\[\eKCI = \left(\prod_{i=1}^3\sqrt{\frac{\sigmac_i}{\sigmac_i+2}}\right)
\E \Big[ 
\tau^4
\gamma(X_3) \gamma(X_3')
 + \xi(X_1, X_2) \xi(X_1', X_2').
\Big].\]
where we can further decompose the noise
\[\E[\xi(X_2, X_3) \xi(X_2', X_3')]=
\E\big[\detac(X_2)\detac(X_2')\big]
\;
\E\big[\detbc(X_3)\detbc(X_3')\big]
\]
Under the null hypothesis, the KCI becomes
\[\eKCI = \left(\prod_{i=1}^3\sqrt{\frac{\sigmac_i}{\sigmac_i+2}}\right)\E\big[\detac(X_2)\detac(X_2')\big]
\;
\E\big[\detbc(X_3)\detbc(X_3')\big]\]

\section{Type-I Bound Proofs}
\subsection{Moment-matched normal against a threshold} \label{sec:proof:typei-q}

\typeiq*
\begin{proof}
Notice that, when either bound is satisfied,
we have that
\[
    \left( \frac{q}{n} - \eKCI \right) / \sqrt{\Var(\eKCIn)}
    \ge T_i
.\]

The result for $\eKCIn$ follows by Cantelli's inequality,
which slightly improves the better-known Chebyshev inequality for one-sided bounds;
it says that for any random variable $X$,
\[
    \Pr(X \ge \E[X] + \lambda) \le \frac{\Var(X)}{\Var(X) + \lambda^2}, \quad \forall \lambda >0,
\]
and so, equivalently,
\[
    \Pr\left( \frac{X - \E X}{\sqrt{\Var(X)}} \ge t \right)
    \le \frac{\Var(X)}{\Var(X) + t^2 \Var(X)}
    = \frac{1}{1 + t^2}
.\]
Plugging in $T_1$ yields that
\[
    \Pr\left( \eKCIn \ge \frac q n \right)
  \le \Pr\left( \frac{\eKCIn - \eKCI}{\sqrt{\Var(\eKCIn)}} > \sqrt{\frac{1-\rho}{\rho}} \right)
  \le \frac{1}{1 + \frac{1-\rho}{\rho}}
  = \rho
,\]
as desired.
The bound for $Z_2$ is similar:
\[
    \Pr\left( Z_2 \ge \frac{q}{n} \right)
    \le \Pr\left(
        \frac{Z_2 - \E Z_2}{\sqrt{\Var(Z_2)}}
        \ge \Phi^{-1}(1-\rho)
    \right)
    = 1 - \Phi\left( \Phi^{-1}(1 - \rho) \right)
    = \rho
.\qedhere\]
\end{proof}

\subsection{Alignment to wild bootstrap} \label{sec:proof:typei-sup}

We provide a bound on the distance between two null distributions used in testing:
\begin{enumerate}
\item {Wild bootstrap distribution given the test dataset.} 
\item {Normal approximation to the test statistic} $n\eKCIn$ when regression errors are present.
\end{enumerate}

\textbf{Setup.}
Let $\hH \in \mathbb{R}^{n\times n}$ be the kernel matrix with noisy regression under the null hypothesis, with entries $\hH_{ij} = \hh_{ij}$. We define a random variable
\[
Y \;:=\; \frac{1}{n} \sum_{1\le i\neq j \le n} \hh_{ij}\,\varepsilon_i\,\varepsilon_j,
\]
where $\{\varepsilon_i\}_{i=1}^n$ are i.i.d.\ $\mathcal{N}(0,1)$ variables. It is known from the results of \citet{Imhof1961quadratic} that  the wild bootstrap distribution of $Y \mid \hH$ is the same as
\[
(Y \mid \hH)\equiv Q \;:=\; \sum_{r=1}^n \lambda_r\,(X_r^2 - 1),
\]
where $X_r \sim \mathcal{N}(0,1)$ i.i.d., and $\{\lambda_r\}_{r=1}^n$ are the eigenvalues of $\hH/n$. 
This centered form $(X_r^2 - 1)$ ensures that $\mathbb{E}[Q] = 0$.
The variance is $\Var(Q) = 2 \sum_{r=1}^n \lambda_r^2 = \frac{2}{n^2}\,\mathrm{tr}(\hH^2).$ And the third central moment of \(Q\) is $8\sum_{r=1}^n \lambda_r^3 =\frac{8}{n^3} \tr(\hH^3)$  \citep[see][]{buckley1988approximation}.
Moreover, in the limit $n\to\infty$, $Q$ and $n\KCIn$ under a ``perfect regression'' null converge to the same distribution (see \citealt[Theorem~2.1]{Leucht2013bootstrap} and 
\citealt[Theorem 4]{pogodin2024splitkci}).

When regression errors are present, the errors include a small but nonzero leading variance term, and thus the null distribution of $\eKCIn$ becomes slightly \emph{non-degenerate}. By a suitable central limit theorem argument (analogous to Theorem~\ref{thm:asymptotic-normality}), $\eKCIn$ is approximately normal for large $n$:
\[
\frac{\,\bigl(\eKCIn - \eKCI\bigr)}{\sqrt{\Var(\eKCIn)}} 
\;\;\xrightarrow{d}\;\;
\mathcal{N}(0,1).
\]

Recall that for a second-order U-statistic with kernel $\hh_{ij}=\hh(X_i,X_j)$, a standard formula \citep[Chapter 5.2]{serfling} gives
\begin{align*}
\Var(\eKCIn)
\;=\;&
\frac{4(n-2)}{n(n-1)}\,\Var_i\bigl[\E_j(\hh_{ij})\bigr]
\;+\;
\frac{2}{n(n-1)}\,\Var\bigl[\hh_{ij}\bigr]
\\
=\;& 
\frac{4(n-2)}{n(n-1)}\,\Var_i\bigl[\E_j(\hh_{ij})\bigr]
\;+\;
\frac{2}{n(n-1)}\,\E\bigl[\hh_{ij}^2\bigr] 
\;-\;
\frac{2}{n(n-1)}\,\bigl(\E[\hh_{ij}]\bigr)^2.
\end{align*}
Meanwhile, for the wild-bootstrap statistic,
we have
\[
\Var(Y \mid \hH)=\Var(Q)
\;=\;
2\,\sum_{r=1}^n \lambda_r^2
\;=\;
\frac{2}{n^2}\,\mathrm{tr}(\hH^2)
\;\longrightarrow\;
2\,\E\bigl[\hh_{ij}^2\bigr]
\quad
\text{as } n\to\infty.
\]
If the test uses correct regressions, under the null, $\E[h_{ij}]=\Var_i\bigl[\E_j(h_{ij})\bigr] = 0$, then  $\Var(Q)$ converges exactly to $n^2\,\Var(\eKCIn)$.
In the presence of regression errors, 
however, note in \(n^2\Var(\eKCIn)\) that the factor \(4\,n\,\Var_i[\E_j(\hh_{ij})]\) can remain substantial if \(\Var_i[\E_j(\hh_{ij})]\) does not vanish.
This \(\Var_i[\E_j(\hh_{ij})]\) term can contribute a larger leading order when multiplied by $n$. 
Hence, if \(\Var_i[\E_j(\hh_{ij})]\) is non-negligible, for large $n$, \(n^2\Var(\eKCIn)\) can be bigger than $\Var(Q)$.

In practice, we use wild bootstrap to sample from the distribution of $Y \mid \hH$ under the noisy-regression null to determine a test threshold. Meanwhile, the actual test statistic \(n\,\eKCIn\) can be approximated by a normal variable
\[
S \equiv n\, Z_n \sim \N\left( n\eKCI, n^2\Var(\eKCIn) \right)
.\]
Hence, we want to quantify the distance between the distribution of 
$Y \mid \hH$
and the distribution of $n Z_n$. Concretely, we measure
\begin{equation}
\label{apd-eq:distribution-d}
\sup_{x \in \mathbb{R}} \Bigl|\Pr\bigl(Y \mid \hH \le x\bigr) \;-\; \Pr\bigl(n \, Z_n \le x\bigr)\Bigr|.
\end{equation}
A small supremum indicates that $Y \mid \hH$ (wild bootstrap) and $n Z_n$ (normal approximation under regression error) produce nearly identical thresholds, while a large value implies a more significant discrepancy between the two distributions.

\typeisup*
\begin{proof}

The overarching goal is to bound
$\sup_{x \in \mathbb{R}} \bigl|\Pr(Q \le x) \;-\; \Pr(S \le x)\bigr|$,
where $Q$ is a centered weighted sum of chi-squared variables (which, as noted above, is exactly the distribution of $Y \mid H$), and $S\equiv nZ_n$ is a normal approximation to a U-statistic-based test statistic. The classical approach \citep{buckley1988approximation, zhang2005approximate} utilizes characteristic functions (\(\chac{}(\cdot)\)) and the Fourier inversion formula to control the Kolmogorov distance between distributions.

Let $T$ be a generic random variable with characteristic function \(\chac{T}(t) = \E[e^{itT}]\).
If \(\log\bigl(\chac{T}(t)\bigr)\) admits the power series expansion
\[
    \log(\chac{T}(t)) \;=\; \sum_{l=1}^\infty \cum{l}{T} \; \frac{(it)^l}{l!}
,\]
then the constants \(\cum{\ell}{T}\) for \(\ell=1,2,\dots)\) are the cumulants of $T$ \citep[Sec. 2.4]{muirhead2009aspects}. In particular:
\(\cum{1}{T} = \E[T]\) is the mean,
\(\cum{2}{T} = \Var(T)\) is the variance,
\(\cum{3}{T} = \E[(T - \E[T])^3]\) is the third cumulants, with $\cum{3}{T}/\cum{2}{T}^{3/2}=\Skew(T)$ being the skewness.

Recall that
$Q \;=\; \sum_{r=1}^n \lambda_r\,\bigl(Z_r^2 - 1\bigr),$
where $Z_r \sim \mathcal{N}(0,1)$ i.i.d., and $\{\lambda_r\}$ are positive (eigenvalues of $H/n$). By construction,
\[
\cum{1}{Q} = 0,
\quad
\cum{2}{Q} = 2\sum_{r=1}^n \lambda_r^2,
\quad
\cum{3}{Q} = 8\sum_{r=1}^n \lambda_r^3,
\quad
\cum{l}{Q} = 2^{l-1}\,(l-1)!\,\sum_{r=1}^n \lambda_r^l \quad (l \ge 3).
\]
Since $\{\lambda_r\}_{r=1}^n$ are the eigenvalues of the Hermitian matrix $\hH/n$, the cumulants can be expressed in terms of the Schatten norms of $\hH/n$. 
For simplicity, define $\nH\defeq\hH/n$.
The Schatten p-norm of $\nH$ is 
\[
\norm{\nH}_p = \left(\sum_r^n \lambda_r^p \right)^{\frac{1}{p}}, \text{if } p \le \infty, 
\text{ and } \quad
\norm{\nH}_{\infty} = \max_{1\le r \le n} \lambda_r
\]
In particular,
\(
\cum{2}{Q} = 2 \norm{\nH}_2^2,
\)
and 
\(
\cum{3}{Q} = 8 \norm{\nH}_3^3,
\)
which provides a more interpretable way to quantify $Q$'s variance and skewness based on the spectrum of $\nH$.

Define the normalized version $Q^*$ by
\[
Q^*
\;=\;
\frac{Q - \E[Q]}{\sqrt{\Var(Q)}}
\;=\;
\frac{Q}{\sqrt{\cum{2}{Q}}}.
\]
Hence, \(\cum{1}{Q^*} = 0\), \(\cum{2}{Q^*} = 1\),
\(\cum{3}{Q^*} = {8\sum_{r=1}^n \lambda_r^3}/{\cum{2}{Q}[3/2]}\),
and for $l \ge 3$,
\[
\cum{\ell}{Q^*}
\;=\;
\frac{\cum{\ell}{Q}}{(\cum{2}{Q})^{\ell/2}}.
\]

For ease of comparison with Q, define
\[
S^*
\;=\;
\frac{S}{\sqrt{\cum{2}{Q}}}
\;\sim\;
\mathcal{N}\!\Bigl(
\frac{n\,\eKCI}{\sqrt{\cum{2}{Q}}}
,\;
\frac{n^2\,\Var(\eKCIn)}{\cum{2}{Q}}
\Bigr).
\]
The distance $\sup_x|\Pr(Q \le x) - \Pr(S \le x)|$ is equivalent to comparing $Q^*$ and $S^*$:
$\sup_{x \in \mathbb{R}}
\bigl|\Pr(Q^*\le x) - \Pr(S^*\le x)\bigr|$.

By results of \citet[page 33]{esseen1945fourier}, we have
\[
\sup_{x \in \mathbb{R}}
\bigl|\Pr(Q^*\le x) - \Pr(S^*\le x)\bigr|
\;\le\;
\frac{1}{2\pi}
\int_{-\infty}^{+\infty}
\left|
\frac{\chac{Q^*}(t) - \chac{S^*}(t)}{t}
\right|\,
dt,
\]
where \(\chac{Q^*}\) and \(\chac{S^*}\) are the characteristic functions of $Q^*$ and $S^*$, respectively.
\begin{align*}
    \chac{Q^*}(t) &= \prod_{r=1}^n \exp\left(-it\frac{\lambda_r}{\cum{2}{Q}[1/2]}\right)\cdot\left(
    1 - it \frac{2 \lambda_r}{\cum{2}{Q}[1/2]}
    \right)^{-1/2} 
    \\
    \chac{S^*}(t) &= \exp\left(it\frac{n\eKCI}{\cum{2}{Q}[1/2]} -  t^2\frac{n^2\Var(\eKCIn)}{2\cum{2}{Q}} \right)
.\end{align*}
To handle the integral
\[
\int_{-\infty}^\infty
\left|
\frac{\chac{Q^*}(t) - \chac{S^*}(t)}{t}
\right|
\,dt,
\]
it is standard to split the domain at $\lvert t\rvert = A$ for some positive $A$. Define
\[
I_1
\;=\;
\int_{\lvert t\rvert \le A}
\left|
\frac{\chac{Q^*}(t) - \chac{S^*}(t)}{t}
\right|\,
dt,
\quad
I_2
\;=\;
\int_{\lvert t\rvert > A}
\left|
\frac{\chac{S^*}(t)}{t}
\right|\,
dt,
\quad
I_3
\;=\;
\int_{\lvert t\rvert > A}
\left|
\frac{\chac{Q^*}(t)}{t}
\right|\,
dt.
\]
Then,
\[
\int_{-\infty}^{\infty}
\left|
\frac{\chac{Q^*}(t) - \chac{S^*}(t)}{t}
\right|
\,dt
\;\le\;
I_1 + I_2 + I_3.
\]

Optimizing over $A$ balances these different regions. This is a classical technique in Fourier-based proofs of Berry–Esseen-type inequalities.

\medskip
\noindent
\textbf{Bounding \(I_1\).}
 We decompose $I_1$ based on the characteristic function ratio. Define
\[
r(t)
\;\defeq\;
\log\bigl(\chac{Q^*}(t)\bigr) \;-\; \log\bigl(\chac{S^*}(t)\bigr).
\]
Then
\begin{align*}
    I_1 &= \int_{|t|\le A} |\chac{S^*}(t)| \left| \frac{\chac{Q^*}(t)/\chac{S^*}(t) - 1}{t} \right| dt\\
    &= \int_{|t|\le A} |\chac{S^*}(t)| \left| \frac{\exp(r(t)) - 1}{t} \right| dt\\
    &\le \int_{|t|\le A} |\chac{S^*}(t)| \frac{|r(t)|\exp(|r(t)|)}{|t|}  dt
,\end{align*}
where the last step comes from the inequality
\(|\exp(z)-1| \le |z|\exp(|z|)\).

We use the following expansion bound for real $\theta$, which be easily verified using the mean-value theorem \citep[see also][]{buckley1988approximation, zhang2005approximate}:
\begin{equation}
    \left|\log(1+i\theta) - \left\{
i\theta + \frac{\theta^2}{2}
\right\} \right|
\le |\theta|^3/3.
\label{apd-eq:log-expansion}
\end{equation}

Concretely,
\[
r(t)
=
\Bigl({ i\,t  \frac{\sum_{r=1}^n \lambda_r}{\cum{2}{Q}^{1/2}} }
\Bigr)
+
\frac12
\sum_{r=1}^n
\log\Bigl(
  1 - i\,t \,\frac{2\,\lambda_r}{\cum{2}{Q}^{1/2}}
\Bigr)
\;+\;
\Bigl(
  i\,t\,\frac{n\,\eKCI}{\cum{2}{Q}[1/2]}
  \;-\;
  t^2 \,\frac{n^2\,\Var(\eKCIn)}{2\,\cum{2}{Q}}
\Bigr).
\]

By bounding each $\log(\cdot)$ via the expansion \eqref{apd-eq:log-expansion}, we obtain
\[
\lvert r(t)\rvert
\;\le\;
\frac{1}{6}\,\frac{\sum_{r=1}^n\lvert 2\,t\,\lambda_r\rvert^3}{\cum{2}{Q}[3/2]}
\;+\;
\Bigl\lvert
i\,t\,\frac{n\,\eKCI}{\cum{2}{Q}[1/2]}
\Bigr\rvert
\;+\;
\Bigl\lvert
t^2\,\Bigl(
\frac{\sum_{r=1}^n\lambda_r^2}{\cum{2}{Q}}
\;-\;
\frac{n^2\,\Var(\eKCIn)}{2\,\cum{2}{Q}}
\Bigr)
\Bigr\rvert.
\]
Recognizing \(\cum{3}{Q^*} = 8\,\sum_r \lambda_r^3 / (\cum{2}{Q})^{3/2}\) and \(\sum_r\lambda_r^2 = \tfrac12\,\cum{2}{Q}\), we rewrite:
\begin{equation}
\lvert r(t)\rvert
\; \le \;
\frac16\,\lvert t\rvert^3\,\cum{3}{Q^*}
\;+\;
\lvert t\rvert\,\frac{n\,\eKCI}{\cum{2}{Q}[1/2]}
\;+\;
\frac{t^2}{2}\,\Bigl\lvert
  \frac{n^2\,\Var(\eKCIn)}{\cum{2}{Q}} - 1
\Bigr\rvert.
\label{eq:r-abs-ub}
\end{equation}

Hence, for $\lvert t\rvert \le A$,
\begin{align*}
I_1
\;\le\;
\exp\bigl(\lvert r(A)\rvert\bigl)
\int_{\lvert t\rvert \le A}
\exp\!\Bigl(-\,\tfrac{t^2\,n^2\,\Var(\eKCIn)}{2\,\cum{2}{Q}}\Bigr)
\Bigg(
\frac16\,t^2\,\cum{3}{Q^*} \\
\;+\;
\frac{n\,\eKCI}{\cum{2}{Q}^{1/2}}
\;+\;
\frac{\lvert t\rvert}{2}\,\Bigl\lvert
\frac{n^2\,\Var(\eKCIn)}{\cum{2}{Q}} - 1
\Bigr\rvert
\Bigg)
dt.
\end{align*}

This splits naturally into three integrals:
\begin{align*}
I_1
\;\le\;
&\frac{1}{6}\,\exp\bigl(\lvert r(A)\rvert\bigr)\,\cum{3}{Q^*}
\int_{\lvert t\rvert \le A}
  t^2\,
  \exp\!\Bigl(-\,\tfrac{t^2\,n^2\,\Var(\eKCIn)}{2\,\cum{2}{Q}}\Bigr)
\,dt
\\
&
+\;
\frac12\,\exp\bigl(\lvert r(A)\rvert\bigr)\,
\Bigl\lvert
  \frac{n^2\,\Var(\eKCIn)}{\cum{2}{Q}} - 1
\Bigr\rvert
\int_{\lvert t\rvert \le A}
  \lvert t\rvert\,
  \exp\!\Bigl(-\,\tfrac{t^2\,n^2\,\Var(\eKCIn)}{2\,\cum{2}{Q}}\Bigr)
\,dt
\\
&
+\;
\exp\bigl(\lvert r(A)\rvert\bigr)\,
\frac{n\,\eKCI}{\cum{2}{Q}^{1/2}}
\int_{\lvert t\rvert \le A}
  \exp\!\Bigl(-\,\tfrac{t^2\,n^2\,\Var(\eKCIn)}{2\,\cum{2}{Q}}\Bigr)
\,dt.
\end{align*}
In each term, the integral is bounded by Gaussian-like tail 
and one can
get explicit numerical constants:
\begin{multline}
I_1 
\le 
\frac13\sqrt{\frac{\pi}{2}} \exp\bigl(\lvert r(A)\rvert\bigr) \,
\cum{3}{Q^*}\left(
    \frac{\cum{2}{Q}}{n^2 \Var(\eKCIn)}
\right)^{3/2}
+
\exp\bigl(\lvert r(A)\rvert\bigr) \,
\Bigl\lvert
  \frac{\cum{2}{Q}}{n^2\,\Var(\eKCIn)} - 1
\Bigr\rvert \\
+ 
\sqrt{2\pi} \exp\bigl(\lvert r(A)\rvert\bigr) \,
\frac{\eKCI}{\sqrt{\Var(\eKCIn)}}
\label{eq:i1-ub}
\end{multline}

\medskip
\noindent
\textbf{Boudning \(I_2\).}
Recall
\[
I_2 
\;=\;
\int_{\lvert t\rvert > A}
\left|
  \frac{\chac{S^*}(t)}{t}
\right|\,
dt,
\]
where
\[
\chac{S^*}(t)
=
\exp\!\Bigl(
   i\,t\,\tfrac{n\,\eKCI}{\cum{2}{Q}^{1/2}}
   - 
   \tfrac{t^2\,n^2\,\Var(\eKCIn)}{2\,\cum{2}{Q}}
\Bigr).
\]
Since
\(\bigl|\chac{S^*}(t)\bigr| = \exp\!\bigl(-\,\tfrac{t^2\,n^2\,\Var(\eKCIn)}{2\,\cum{2}{Q}}\bigr)\),
we have
\[
I_2
=
\int_{\lvert t\rvert > A}
\frac{1}{\lvert t\rvert}\,
\exp\!\Bigl(
  -\,\tfrac{t^2\,n^2\,\Var(\eKCIn)}{2\,\cum{2}{Q}}
\Bigr)
\,dt.
\]
Next, use the fact that \( \lvert t\rvert \ge A \) implies \( \frac{1}{\lvert t\rvert} \le \frac{t^2}{A^3} \). Hence,
\begin{align}
I_2 
&\le
\int_{\lvert t\rvert > A}
\frac{t^2}{A^3}\,
\exp\!\Bigl(
  -\,\tfrac{t^2\,n^2\,\Var(\eKCIn)}{2\,\cum{2}{Q}}
\Bigr)
\,dt. \notag
\\
&\le
\frac{\sqrt{2\pi}}{A^3}
\left(\frac{\cum{2}{Q}}{n^2 \Var(\eKCIn)}\right)^{3/2}
\label{eq:i2-ub}
\end{align}
\medskip
\noindent

\medskip
\noindent
\textbf{Bounding \(I_3\).} 
Recall
\[
I_3 
=\;
\int_{\lvert t\rvert > A}
\Bigl\lvert 
  \frac{\chac{Q^*}(t)}{t}
\Bigr\rvert 
\,dt.
\]

Following 
\citet[(B.3)]{zhang2005approximate}, we have
\[
|\chac{Q^*}(t)| \le \left(\sum_{k=1}^{n}\tau_k(2t^2)^k\right)^{-\frac{1}{4}},
\]
where $\tau_k=\sum_{j_1<j_2<\dots<j_k} \alpha_{j_1} \alpha_{j_2} \dots \alpha_{j_k}$ and $\alpha_r = \frac{2\lambda^2_r}{\cum{2}{Q}}=\frac{\lambda^2_r}{\sum_{j=1}^{n} \lambda^2_j}$.
Let
$\delta 
\;=\;
\max_{1\le r\le n}\alpha_r$.
Using induction on $k$ we can prove that,
\[\tau_k \ge \frac{1}{k!}(1-k\delta)^k.
\]
When $\delta < 1/2$, $n$ is at least 2, such that
\[
|\chac{Q^*}(t)| \le \tau_2^{-1/4}(2t^2)^{-1/2}
\le 2^{1/4} (1-2\delta)^{-1/2} (2t^2)^{-1/2}.
\]

Recall that $\norm{\nH}_2 = \left(\sum_r^n \lambda_r^2 \right)^{\frac{1}{2}}$ and $\norm{\nH}_{\infty} = \max_{1\le r \le n} \lambda_r$. 
If $\delta < 1/2$, then the largest eigenvalue contributes less than half of the total squared eigenvalue mass. Since $\delta = \norm{\nH}_\infty^2/ \norm{\nH}_2^2= \norm{\hH}_\infty^2/ \norm{\hH}_2^2$,  $\delta < 1/2$ implies 
$\norm{\hH}_2^2 > 2 \norm{\hH}_\infty^2$.

Assume $\delta < 1/2$, it follows that
\begin{equation}
I_3
\;=\;
\int_{\lvert t\rvert > A}
\left\lvert
  \frac{\chac{Q^*}(t)}{t}
\right\rvert
\,dt
\;\le\;
2^{3/4}
(1 - 2\,\delta)^{-1/2}
\int_{t\ge A}
t^{-2}\,dt
\;=\;
2^{3/4}
(1 - 2\,\delta)^{-1/2}
A^{-1}.
\label{eq:i3-ub}
\end{equation}
\medskip
\noindent
\textbf{Combing $I_1$, $I_2$ and $I_3$.}
From \eqref{eq:i1-ub}, \eqref{eq:i2-ub}, and \eqref{eq:i3-ub}, we obtain
\begin{multline}
I_1+I_2+I_3
\le
\frac{2^{3/4}}{\sqrt{1 - 2\,\delta}A}
+
\sqrt{2\pi} \exp\bigl(\lvert r(A)\rvert\bigr) \,
\frac{\eKCI}{\sqrt{\Var(\eKCIn)}}
+
\exp\bigl(\lvert r(A)\rvert\bigr) \,
\Bigl\lvert
  \frac{\cum{2}{Q}}{n^2\,\Var(\eKCIn)} - 1
\Bigr\rvert
\\
+
\left(\frac13\sqrt{\frac{\pi}{2}} \exp\bigl(\lvert r(A)\rvert\bigr) \,
\cum{3}{Q^*}+\frac{\sqrt{2\pi}}{A^3}\right)
\left(
    \frac{\cum{2}{Q}}{n^2 \Var(\eKCIn)}
\right)^{3/2}.
\end{multline}
Here $A>0$ is a splitting parameter, which we leave unspecified but may be chosen arbitrarily, and the quantity $|r(A)|$ is bounded as in~\eqref{eq:r-abs-ub}.

\medskip
\noindent
\textbf{Final Assembly.} Combining the pieces, we have
\[
\int_{-\infty}^{\infty}
\biggl\lvert
   \frac{\chac{Q^*}(t) - \chac{S^*}(t)}{t}
\biggr\rvert
dt
\;\le\;
(I_1 + I_2 + I_3)
\]
which in turn implies the Kolmogorov bound
\[
\sup_{x \in \mathbb{R}}
\bigl\lvert
  \Pr\bigl(Q^*\le x\bigr)
  -
  \Pr\bigl(S^*\le x\bigr)
\bigr\rvert
\;\le\;
\frac{1}{2\pi}
\bigl(I_1 + I_2 + I_3\bigr).
\]
Putting everything together, the distributional distance satisfies:
\begin{multline}
\label{apd-eq:wild-bootstrap-ub}
\sup_{x \in \mathbb{R}}
\Bigl|\Pr(Q \le x) \;-\; \Pr(S \le x)\Bigr|
\\
\;\le\;
\frac{1}{2^{1/4}\, \pi\, \sqrt{(1 - 2\,\delta)}A}
+
\left(\frac{\exp\bigl(\lvert r(A)\rvert\bigr) \,
\cum{3}{Q^*}}{6\sqrt{2\pi}} +\frac{1}{\sqrt{2\pi} A^3}\right)
\left(
    \frac{\cum{2}{Q}}{n^2 \Var(\eKCIn)}
\right)^{3/2}
\\
+
\frac{\exp\bigl(\lvert r(A)\rvert\bigr) }{\sqrt{2\pi}} \frac{\eKCI}{\sqrt{\Var(\eKCIn)}}
+ 
\frac{\exp\bigl(\lvert r(A)\rvert\bigr) }{2\pi} \Bigl\lvert
  \frac{\cum{2}{Q}}{n^2\,\Var(\eKCIn)} - 1
\Bigr\rvert.
\notag
\end{multline}
Here 
$\delta = \norm{\hH}_\infty^2/ \norm{\hH}_2^2$,
and the parameter \(A>0\) is arbitrary. 
The exponential factor $\exp(|r(A)|)$ is controlled by
\[
\exp(|r(A)|)
\;\le\;
\exp\!\left(
\frac{1}{6}A^3 \cum{3}{Q^*}
\;+\;
A\frac{n\,\eKCI}{\cum{2}{Q}[1/2]}
\;+\;
\frac{A^2}{2}
\Bigl|
\frac{n^2\Var(\eKCIn)}{\cum{2}{Q}} - 1
\Bigr|
\right).
\]

Recall that $\cum{2}{Q}=\Var(Q)=\Var(Y\mid\hH)$ and $\cum{3}{Q^*}=\Skew(Q)=\Skew(Y\mid\hH)$. %
Define the standardized mean shift and variance mismatch scale by
\[\bkci \defeq  \frac{\eKCI}{\sqrt{\Var(\eKCIn)}}, \qquad \kvar\defeq \frac{\Var(Y\mid\hH)}{n^2 \Var(\eKCIn)}\]

We now fix $A=2\pi \sqrt{\kvar}= 2\pi \sqrt{\frac{\Var(Y\mid\hH)}{n^2\Var(\eKCIn)}}$.
With this choice,
\[\exp(|r(\sqrt{\kvar})|)
\;\le\;
\exp\!\left(
2\pi\Big(\frac{2\pi^2}{3}\Skew(Y\mid\hH) \kvar[3/2]
\;+\;
\bkci
\;+\;
\pi |\kvar-1|\Big)
\right).
\]

Consequently, we obtain
\begin{multline*}
\sup_{x \in \mathbb{R}}
\Bigl|\Pr(Y \mid \hH \le x) - \Pr(n \, Z_n \le x)\Bigr|
\\\;\le\;
\frac{\exp\bigl(\lvert r(\sqrt{\kvar})\rvert\bigr) }{\sqrt{2\pi}} 
\left(
\frac{1}{6}\Skew(Y\mid\hH) 
    \kvar[3/2] +
\bkci
+ 
\frac{1}{\sqrt{2\pi}} \Bigl\lvert
  \kvar
  - 1
\Bigr\rvert
\right)
\\
+
\frac{\kvar[-1/2]}{2^{5/4}\, \pi^2\, \sqrt{(1 - 2\,\delta)}}
+
\frac{1}{(2\pi)^{7/2} }
\end{multline*}
Finally, introducing the shorthand $\Psi(x)\defeq \frac{1}{\sqrt{2\pi}} x\exp(2\pi x)$, the bound can be written compactly as
\begin{multline}
\label{apd-eq:wild-bootstrap-ub-R}
\sup_{x \in \mathbb{R}}
\Bigl|\Pr(Y \mid \hH \le x) - \Pr(n \, Z_n \le x)\Bigr|\\
    \;\le\;
\Psi
\left(\frac{2\pi^2}{3}\Skew(Y\mid\hH) \kvar[3/2]
+
\bkci
+
\pi |\kvar-1|\right)
+
\frac{\kvar[-1/2]}{2^{5/4}\, \pi^2\, \sqrt{(1 - 2\,\delta)}}
+
\frac{1}{(2\pi)^{7/2} }
.\end{multline}

\textbf{Interpretation of the terms in \eqref{apd-eq:wild-bootstrap-ub-R}.
}
\begin{itemize}
\item \textbf{Perfect regression.}
If the conditional mean embeddings are estimated without error, then
$\bkci = 0$ and $\kvar = 1$.
In this regime, the bound reflects the intrinsic discrepancy between the wild bootstrap distribution and its Gaussian approximation.

\item \textbf{Imperfect regression.}
When regression is imperfect, the bound is inflated by two effects:
(i) the mean-shift term $\bkci$, which quantifies the bias introduced by regression errors in the KCI statistic, and
(ii) the variance-mismatch term $\kvar$, which captures deviations between the empirical variance and its wild bootstrap approximation.
Both effects arise from noise in the conditional mean embedding estimates and directly degrade the quality of the wild bootstrap null calibration.
\end{itemize}

Finally, note that 
$\Var(Y \mid \hH)$ is the variance of a  weighted sum of chi-squared random variables, and
$\Var(Y \mid \hH) \to 2\E[\hat h^2_{ij}].$
Moreover, \(n^2 \Var(\eKCIn) \sim 4 n \nu_1 + 2 \nu_2\).
If regression errors are well controlled so that $\eKCI=o(1/n)$ and $\nu_1=o(1/n)$,
then  \(n^2 \Var(\eKCIn) \to 2\E[\hat h^2_{ij}]\), and consequently
\[
\bkci = \frac{\eKCI}{\sqrt{\Var(\eKCIn)}} = o(1), \qquad
\kvar = \frac{\Var(Y \mid \hH)}{n^2 \Var(\eKCIn)} = 1 + o(1).
\]
which ensures that the leading error terms in \eqref{apd-eq:wild-bootstrap-ub-R} remain controlled asymptotically.

With these expressions established, the proof is complete.
\end{proof}

\section{Experimental Setup and Results} \label{app:experiments}
In this section, we include additional experimental results.
Code used in our synthetic data experiments is publicly available at:
\url{https://github.com/he-zh/kci-hardness}.

\subsection{Testing procedure}
\label{apd:testing-procedure}
Our testing pipeline follows the general structure of KCI, with modifications inspired by SplitKCI \citep{pogodin2024splitkci}. The data are divided into an independent training set of size $m$ and a test set of size $n$.
We first select kernels \(k_{\C\to\A}\) and \(k_{\C\to\B}\) using leave-one-out validation via kernel ridge regression as in SplitKCI, and estimate the conditional mean embeddings \(\emuac\) and \(\emubc\) on the training data.
Unlike SplitKCI, though, we do not further split the training set to obtain independent estimates $\emuac^{(1)}$ and $\emuac^{(2)}$; instead, both $\emuac$ and $\emubc$ are trained on the full training set.
When evaluating on any point $c$, the conditional mean embedding for $\A \mid \C$ is
\[
\emuac (c)
= \Phi_\A^{\mathrm{tr}}\, (\Kmctr+\lambda I_m)^{-1}\, K_{\C c}^{\mathrm{tr}}.
\]
The training feature matrix on $\A$ is
$\Phi_\A^{\mathrm{tr}}
= [\,\phia(\ea_1), \phia(\ea_2), \dots, \phia(\ea_m)\,]$,
the training kernel matrix on $\C$ is $(\Kmctr)_{i,j}=k_{\C}(c_i, c_j)$, 
$\lambda$ is the ridge regression parameter, and $K_{\C c}^{\mathrm{tr}} \in \mathbb{R}^m$ is given by $(K_{\C c}^{\mathrm{tr}})_i = k_{\C}(c_i, c)$, where $a_i$ and $c_i$ denote the training samples.
The conditional mean embedding for $\B \mid \C$ can be computed analogously.

When the power maximization technique is applied, we additionally maximize \(\eSNR\) using gradient descent on the training data by selecting the kernel \(k_\C\).
Finally, given the chosen kernels, we compute the kernel matrices \(\eKmca\), \(\eKmcb\), and \(\Kmc\) on the test set to estimate KCI. 

Although we use the unbiased estimator in Eq.\eqref{eq:kcin} for analytical convenience, in practice it is preferable to use the HSIC-like unbiased estimator \citep[Eq.(21)]{pogodin2024splitkci}:
\begin{equation}
    \eKCIn=\frac{1}{m(m-3)} 
    \left(
    \tr(KL)+\frac{1^\top K 1 1^\top L 1}{(m-1)(m-2)}
    -\frac{2}{m-1}1^\top KL 1
    \right)
    \label{eq:hsic-u}
\end{equation}
where $1$ is the all-ones vector, $K=\eKmca$, $L=\eKmcb \odot \Kmc$, and $\odot$ denotes elementwise product. This estimator “centralizes” the kernel matrices, which helps average out some regression errors. While it does not eliminate all errors, it improves the KCI estimate and stabilizes power maximization.

To obtain p-values, we approximate the null distribution using the wild bootstrap with Rademacher variables.
Following SplitKCI, we generate wild bootstrap statistics
\[
\widehat{V}s = \frac{1}{m(m-3)} 
    \left(
    \tr(\tilde K_s L)+\frac{1^\top \tilde K_s 1 1^\top L 1}{(m-1)(m-2)}
    -\frac{2}{m-1}1^\top \tilde K_s L 1
    \right)
\]
where $\tilde K_s = q_s q_s^\top \odot \eKmca$ and $(q_s)_i=\pm1$ independently with equal probability for all $s$ and $i$.
The p-value is then computed as
\[
p = \frac{1}{S} \sum_{s=1}^S \mathds{1}(\eKCIn < \widehat{V}_s).
\]
\subsection{Synthetic data} \label{app:experiments:3d}

\paragraph{1D synthetic test case.} We compare standard KCI, KCI with power-maximizing kernel selection, and GCM, using linear kernels for \(\A\) and \(\B\) in all methods, on problem \eqref{eq:example}. \cref{fig:1d-type1-type2-error} shows that GCM, while maintaining low Type-I error, fails to detect conditional dependence in this setting. Standard KCI exhibits high Type-II error, whereas power-maximized KCI achieves low Type-II error. For both KCI variants, Type-I error decreases as the training size increases.

\myfighere{figs/pdf/linear_AB_1d_kci_gcm_100runs}{Mean and standard error (over 100 runs) of errors on the problem  \eqref{eq:example} with $f_\A=\cos, f_\B=\exp, \tau=0.1$ and $\beta=3$ across training sizes; the independent test set has size 200. The significance level is set at $\alpha = 0.05$}{1d-type1-type2-error}{0.7}

\paragraph{3D synthetic test case.}
We study the problems introduced in \cref{sec:analytical:3d}.
Both scenarios used \(f_A=\cos\), \(f_B=\exp\), noise scale \(\tau=0.1\), dependence frequency \(\beta=2\).
Gaussian kernels are used for all kernels throughout. The significance level is set at $\alpha = 0.05$. 
 
Table~\ref{tab:test_comparison} summarizes empirical results with 200 training points for regression and 200 test points, averaged over 100 runs, with 500 training epochs. Kernel ridge regression with leave-one-out validation selects kernels for $k_{\C\to\A}$ and \(k_{\C\to\B}\), with per-dimension lengthscales. Power maximization is utilized to select $\kcp$ kernel.

We further compare standard KCI with fixed lengthscales and KCI with power-maximized kernel selection across training sizes 200--1000, keeping the test size fixed at 200. Results are reported for regressors trained to convergence (500 epochs) and with early stopping. Experiments are repeated 100 times, and we report mean and standard error.

We conduct experiments on two scenarios.
Scenario 1: shared-coordinate dependence, where \(\A\) and \(\B\) depend on the same coordinate of $C_1$ (see \cref{fig:3d-scenario1}).
Scenario 2: separate-coordinate dependence, where \(\A\) depends on $C_1$, \(\B\) on $C_2$, correlation on $C_3$ (see \cref{fig:3d-scenario2}).

In Scenario 1, Type-II error is substantially lower than in Scenario 2, as regression errors along the same coordinate are more strongly correlated, amplifying the apparent dependence.
Power maximization in Scenario 1 slightly reduces Type-I error, probably because it focuses on only the coordinate $C_1$, while ignoring other two dimensions, thereby reducing noise.

In Scenario 2, Type-I error is lower overall since regression errors depend on separate coordinates and are less correlated. However, when conditional mean embeddings are undertrained, power maximization can further increase Type-I error, as it mistakenly interprets the weak dependence between regression errors as conditional dependence and amplifies it.
\begin{figure}[htbp]
    \centering
    \begin{subfigure}[t]{0.7\textwidth}
        \centering
        \includegraphics[width=\textwidth]{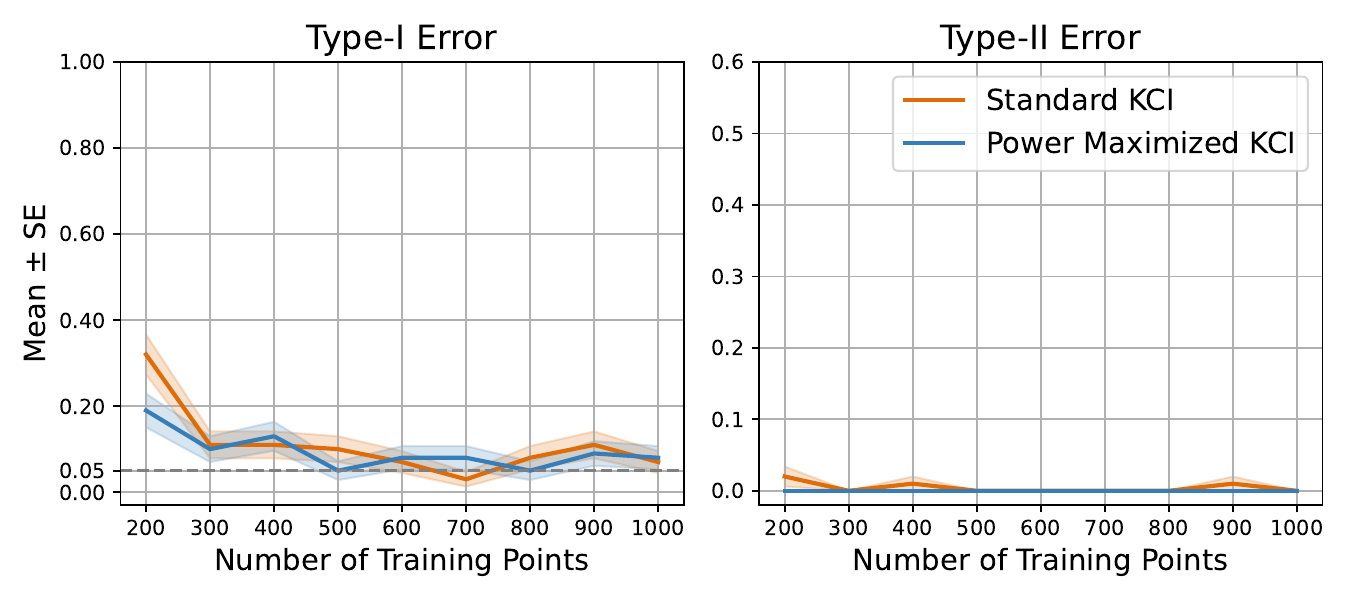}
        \caption{Training to convergence. }
    \end{subfigure}

    \vspace{1em}
    \begin{subfigure}[t]{0.7\textwidth}
        \centering
        \includegraphics[width=\textwidth]{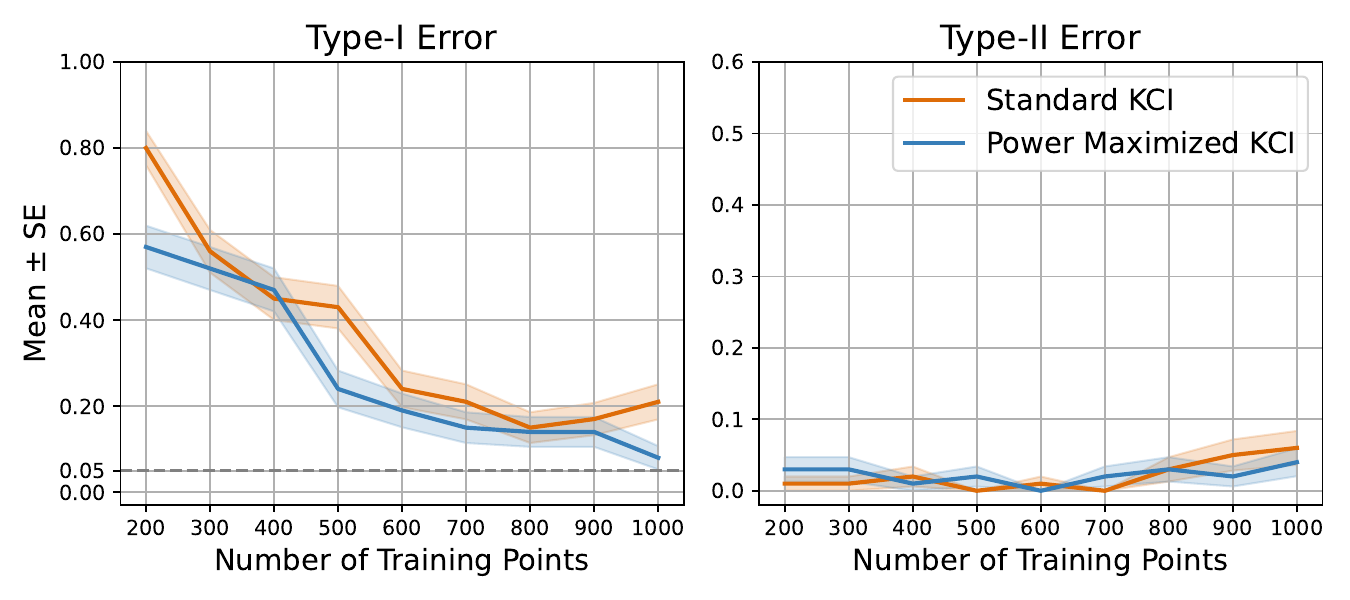}
        \caption{Early stopping.}
    \end{subfigure}
    \vspace{1em}
    \caption{Shared-coordinate dependence.
    Means and standard errors (over 100 runs) of Type-I and Type-II errors on the 3D synthetic case with $f_\A=\cos$, $f_\B=\exp$, and $\tau=0.1$. All kernels are Gaussian, and the significance level is set at $\alpha = 0.05$.}
    \label{fig:3d-scenario1}
\end{figure}

\begin{figure}[htbp]
    \centering
    \begin{subfigure}[t]{0.7\textwidth}
        \centering
        \includegraphics[width=\textwidth]{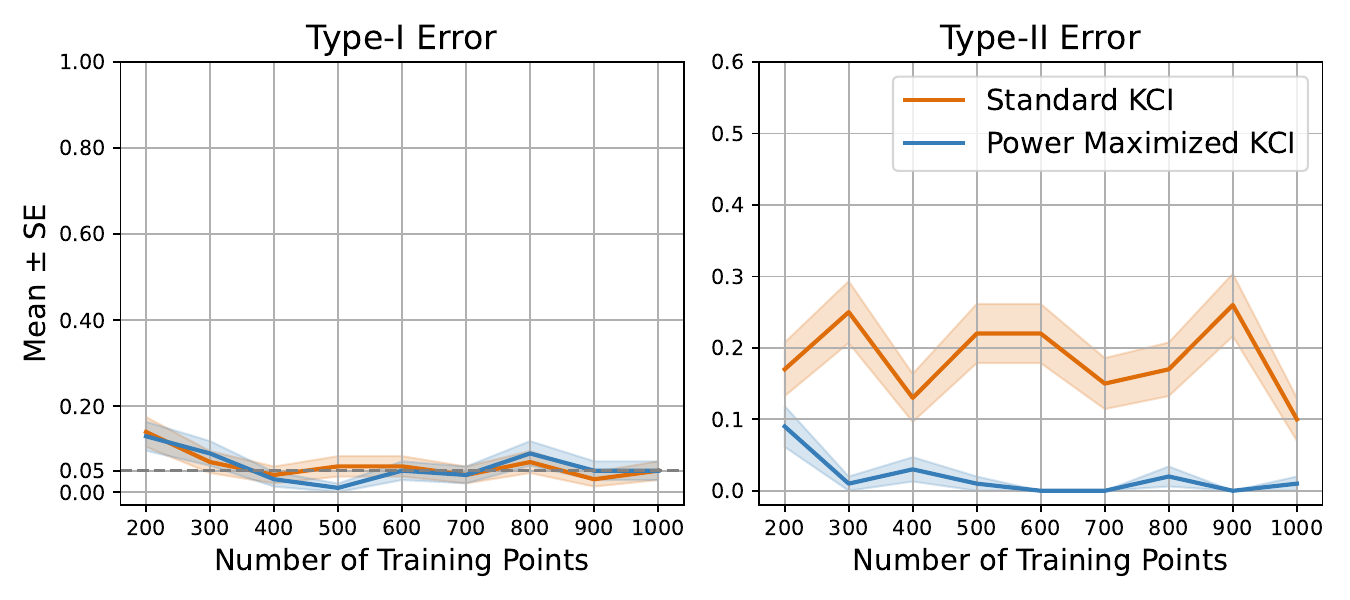}
        \caption{Training to convergence. }
    \end{subfigure}

    \vspace{1em}
    \begin{subfigure}[t]{0.7\textwidth}
        \centering
        \includegraphics[width=\textwidth]{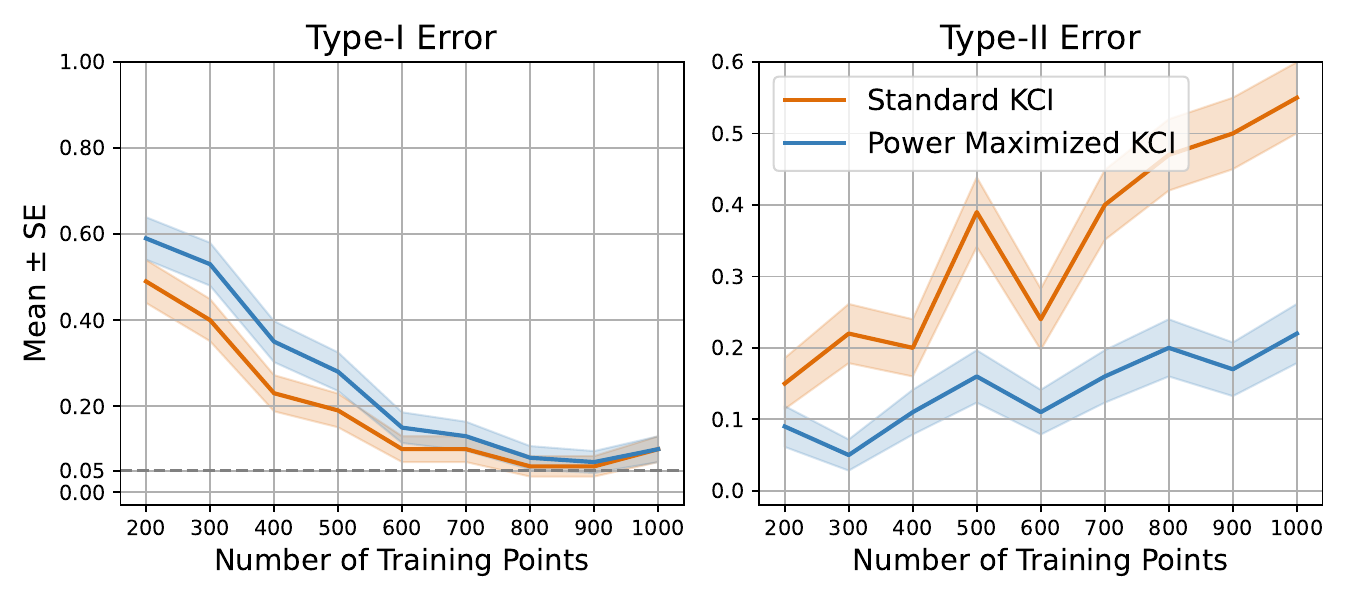}
        \caption{Early stopping.}
    \end{subfigure}
    \vspace{1em}
    \caption{Separate-coordinate dependence.
   Means and standard errors (over 100 runs) of Type-I and Type-II errors on the 3D synthetic case with $f_\A=\cos$, $f_\B=\exp$, and $\tau=0.1$. All kernels are Gaussian, and the significance level is set at $\alpha = 0.05$.
}
    \label{fig:3d-scenario2}
\end{figure}

\subsection{Real data} \label{app:experiments:age}
We conducted experiments on the UTKFace dataset \citep{zhang2017age}, following the setup of \citet{cond-mean-ind}. Although not described as such in their paper, this test is effectively a KCI test.
In particular, we used the cropped and aligned UTKFace dataset to test whether age ($A$) depends on the full face image ($B$) when conditioned on the same image with a specific region masked out ($C$). The null hypothesis is $\E[A \mid C] = \E[A \mid B, C]$, corresponding to a linear kernel on $A$, where the conditional mean embedding can be estimated as a regressor from $C$ to $A$ via neural networks.

The dataset is split into ten subsets, each with its own training and test partition. We reran their code and report both the resulting $p$-values for the conditional independence tests and the mean absolute error (MAE) of the age regressors (ImageNet-pretrained) used in testing. For comparison, we also retrained the same network from scratch (random initialization) and report its MAE and corresponding test results. The results are shown in \cref{fig:cmi}.

In \citet{cond-mean-ind}, $p$-values remain above $5\%$ when a facial region is masked, suggesting that the region is not critical for age estimation. However, when the same network is trained from random initialization, the validation loss increases--indicating a less accurate conditional mean embedding--and the resulting $p$-values drop consistently across all regions. This shows that test outcomes are highly sensitive to regressor quality: imperfect conditional mean estimation makes the test more prone to signal dependence.

\myfig{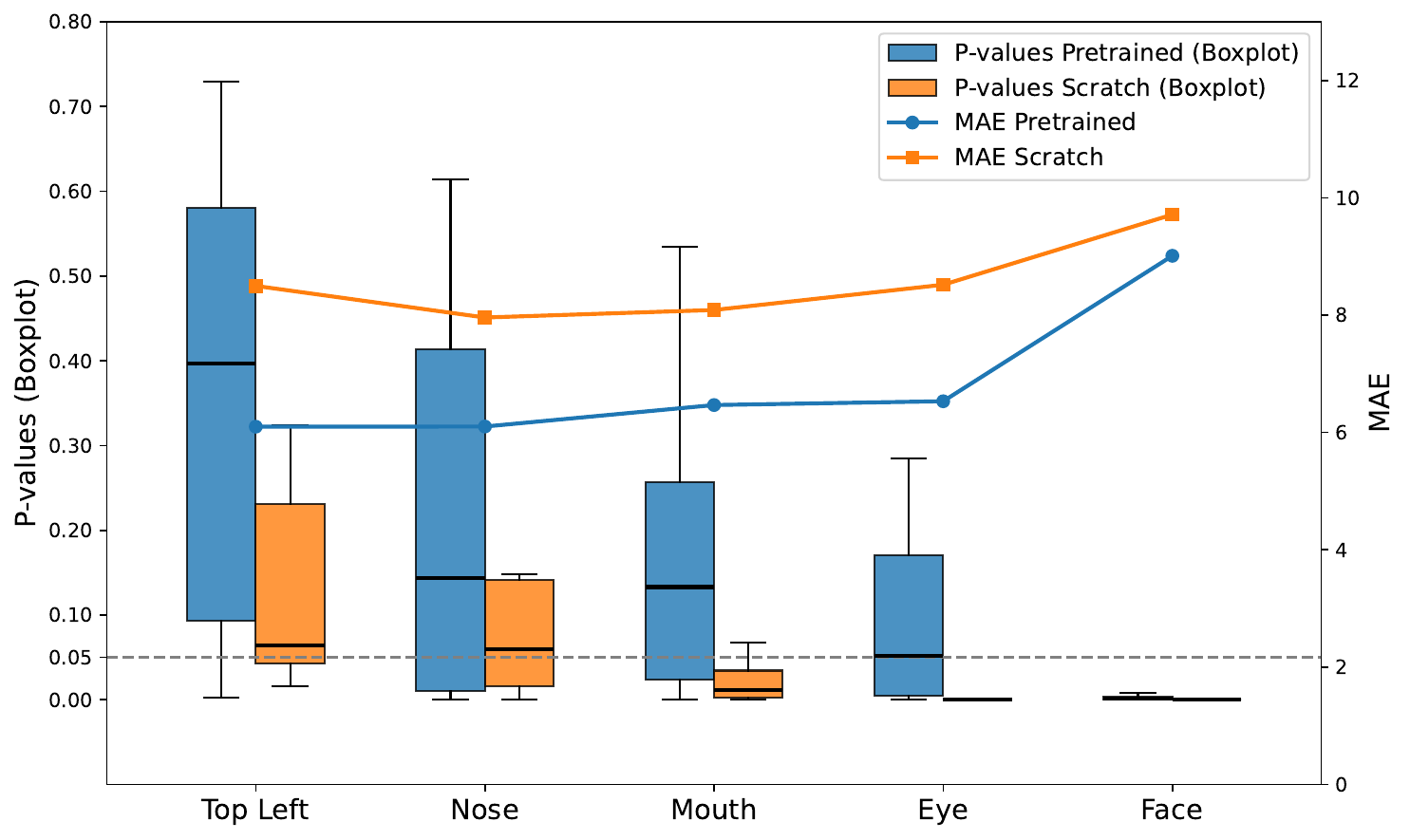}{Box plots of $p$-values (left y-axis) and test MAE (right y-axis) across different facial regions in the age estimation task. “Pretrained” refers to using an ImageNet-pretrained age regressor, while “Scratch” indicates training the same model from random initialization.}{cmi}{0.8}

\end{document}